\def\shownotes{1}  \ifnum\shownotes=1
\newcommand{\authnote}[2]{{[#1: #2]}}
\newcommand{\authnote}[2]{}
\newcommand{\cut}[1]{}
\newtheorem{definition}{Definition}
\newtheorem{lemma}{Lemma}
\newtheorem*{rep@theorem}{\rep@title}
\newcommand{\newreptheorem}[2]{%
\newenvironment{rep#1}[1]{%
 \def\rep@title{#2 \ref{##1}}%
 \begin{rep@theorem}}%
 {\end{rep@theorem}}}
\newtheorem{theorem}{Theorem}
\DeclareMathOperator{\argmin}{argmin}
\DeclareMathOperator{\cspan}{span}
\DeclareMathOperator{\Beta}{Beta}
\DeclareMathOperator{\Reg}{Reg}
\DeclareMathOperator{\Bernoulli}{Bernoulli}
\DeclareMathOperator{\Bin}{Bin}
\DeclareMathOperator{\supp}{supp}
\DeclareMathOperator{\clip}{clip}
\DeclareMathOperator{\spanl}{span}
\newcommand{\EE}{\mathbb{E}}
\newcommand{\RR}{\mathbb{R}}
\newcommand{\PP}{\mathbb{P}}
\newcommand{\inner}[2]{\langle #1, #2 \rangle}
\def\polylog{\operatorname{polylog}}
\newcommand{\Acal}{\mathcal{A}}
\newcommand{\Bcal}{\mathcal{B}}
\newcommand{\Fcal}{\mathcal{F}}
\newcommand{\Hcal}{\mathcal{H}}
\newcommand{\Ncal}{\mathcal{N}}
\newcommand{\Qcal}{\mathcal{Q}}
\newcommand{\Xcal}{\mathcal{X}}
\newcommand{\Ycal}{\mathcal{Y}}
\newcommand{\one}{\mathbbm{1}}
\newcommand{\defeq}{:=}
\newcommand{\unif}{\mathrm{uniform}}
\newcommand{\qufur}{\ensuremath{\text{QuFUR}}\xspace}
\newcommand{\fbqufur}{\ensuremath{\text{Fixed-Budget QuFUR}}\xspace}
\newcommand{\order}[1]{O\del{#1}}
\newcommand{\otil}[1]{\tilde{O}\del{#1}}
\newcommand{\E}{\mathrm{E}}
\newcommand{\R}{\mathrm{R}}
\newcommand{\A}{\mathrm{A}}
\newcommand{\Ecal}{\mathcal{E}}
\newcommand{\wbar}[1]{\overline{#1}}
\title{Active Online Learning with Hidden Shifting Domains}
\author{
  Yining Chen\\Stanford University\\\texttt{cynnjjs@stanford.edu}
  \and Haipeng Luo\\University of Southern California\\\texttt{haipengl@usc.edu}
  \and Tengyu Ma\\Stanford University\\\texttt{tengyuma@stanford.edu}
  \and Chicheng Zhang\\University of Arizona\\\texttt{chichengz@cs.arizona.edu}
}
\date{}
\begin{document}

\maketitle

\begin{abstract}
Online machine learning systems need to adapt to domain shifts. Meanwhile, acquiring label at every timestep is expensive. Motivated by these two challenges, we propose a surprisingly simple algorithm that adaptively balances its regret and its number of label queries in settings where the data streams are from a mixture of hidden domains. For online linear regression with oblivious adversaries, we provide a \textit{tight} tradeoff that depends on the durations and dimensionalities of the hidden domains. Our algorithm can adaptively deal with interleaving spans of inputs from different domains. We also generalize our results to non-linear regression for hypothesis classes with bounded eluder dimension and adaptive adversaries. Experiments on synthetic and realistic datasets demonstrate that our algorithm achieves lower regret than uniform queries and greedy queries with equal labeling budget.
\end{abstract}

\section{Introduction}\label{sec:intro}

In statistical learning, model performance often significantly drops when the testing distribution drifts away from the training distribution~\citep{torralba2011unbiased,Recht, recht2019imagenet, engstrom2020identifying}. Online learning addresses worst-case domain shift by assuming the data is given by an adversary~\citep{Hazan16}. However, practical deployments of fully-online learning systems have been somewhat limited, because labels are expensive to obtain; see~\citep{Strickland2018AIhumanPT} for an example in fake news detection. A label budget linear in time is too much of a luxury.

\citet{cesa2004minimizing} study label-efficient online learning for prediction with expert advice. Their algorithm queries the label of every example with a fixed probability, which, as they show, achieves minimax-optimal regret and query complexity for this problem. However, querying with uniform probability does not take into account the algorithm's uncertainty on each individual example, and thus can be suboptimal when the problem has certain favorable structures. For example, a sequence of online news may come from the mixture of a few topics or trends, and some news topics may require more samples to categorize well compared to others.

We aim to improve label-efficiency in online learning by exploiting hidden domain structures in the data. We assume that each input is from one of $m$ unknown and potentially overlapping domains (e.g. news topics). For each input, the learner makes a prediction, incurs a loss, and decides whether to query its label. The regret of the learner is defined as the difference between its cumulative loss and that of the best fixed predictor in hindsight. Our goal is to trade off between regret and query complexity: given a fixed label budget, we hope to incur the smallest regret possible.

In statistical learning, domains usually refer to data distributions. One commonly-studied type of domain shift is {\em covariate shift} ~\citep[see e.g.][]{sugiyama2007covariate}, where the conditional distribution $P(y|x)$ is fixed, but the marginal distribution $P(x)$ changes across domains. In online learning, however, the inputs are usually not independently drawn from a distribution, and can even be adversarially chosen from a certain support. We thus propose to study {\em support shift} in online learning as the natural counterpart of covariate shift in statistical learning. Specifically, we assume that inputs from each domain $u \in [m]$ have an unknown support $\Xcal_u$ and an unknown total duration $T_u$, and inputs from different domains are interleaved as a stream fed to the online learner.

Support shift is common for high-dimensional vision / language datasets where domain supports have little overlap. As a motivating example, consider an online regression task of sentiment prediction for twitter feeds. Here, a domain is a (not necessarily contiguous) subsequence of tweets around a certain topic. Different topics contain disjoint keywords. As the hot topics change over time, we likely receive inputs from different domains for varying time periods.

Similar to covariate shift, we assume realizability, i.e., there exists a predictor that is Bayes optimal across all the domains.
Realizability is a reasonable assumption in modern machine learning for two reasons. 
First, high-dimensional features are usually of high quality; for example, adding only one additional output layer to pre-trained embeddings such as BERT obtain state-of-the-art results on many language tasks~\citep{devlin2018bert}.
Second, models are often overparameterized~\citep{zhang2016understanding}. Thus, the model can rely on different features in different domains. In the sentiment regression task, the model can combine positive or negative words in all domains to predict well. In statistical learning, domain adaptation methods assuming that a single model can perform well on different domains~\citep{ganin2016domain} indeed have been empirically successful on high-dimensional datasets.

Under this setup, we propose \qufur (Query in the Face of Uncertainty for Regression), a surprisingly simple query scheme based on uncertainty quantification. We start with online linear regression from $\RR^d$ to $\RR$ with an oblivious adversary. With additional regularity conditions, we provide the following regret guarantee of \qufur with label budget $B$: for {\em any} $m$ and {\em any} partition of $[T]$ into domains $I_1, \ldots, I_m$, 
with $T_u$ being the number of examples from domain $I_u$ and $d_u$ being the dimension of the space spanned by these examples,
the regret of is \qufur is $\tilde{O}((\sum_{u=1}^m{\sqrt{d_u T_u}})^2/B)$ (Theorem~\ref{thm:fixBudget}).\footnote{Throughout this paper, $[n]$ denotes the set $\cbr{1,\ldots,n}$; notations $\tilde{O}$ and $\tilde{\Omega}$ hide logarithmic factors.}

When choosing $m = 1$ and $I_1 = [T]$, we see that the regret of QuFUR is at most $\tilde{O}(dT/B)$, matching minimax lower bounds (Theorem~\ref{thm:lower-unstructured}) in this setting.
The advantage of \qufur's adaptive regret guarantees becomes significant when the domains have heterogeneous time spans and dimensions: $(\sum_{u=1}^m{\sqrt{d_u T_u}})^2$ can be substantially less than $dT$ when the $T_u / d_u$'s are heterogeneous across different $u$'s.
For example, if $m=2$, $d_1 = d$, $d_2 = 1$, $T_1 = d$, and $T_2 = T-d$, 
then the resulting regret bound is of order $O(T + d^2)$ which can be much smaller than $dT$ when $1 \ll d \ll T$.
Using standard online-to-batch conversion~\citep{cesa2004generalization}, we also obtain novel results in batch active learning for regression (Theorem~\ref{cor:otb-linear}). Furthermore, we also define a stronger notion of minimax optimality, namely {\em hidden domain minimax optimality}, and show that QuFUR is optimal in this sense (Theorem~\ref{thm:lower}), for a wide range of domain structure specifications.

We generalize our results to online regression with general hypothesis classes against an adaptive adversary. We obtain a similar regret-query complexity tradeoff, where the analogue of $d_u$ is (roughly) the \textit{eluder dimension}~\citep{VR} of the hypotheses class with respect to the support of domain $u$ (Theorem~\ref{thm:nonlin}).

Experimentally, we show that our algorithm outperforms the baselines of uniform and greedy query strategies, on a synthetic dataset and three high-dimensional language and image datasets with realistic support shifts. Our code is available online at \url{https://github.com/cynnjjs/online_active_AISTATS}.

\section{Related works}
\paragraph{Active learning.} We refer the readers to~\citet{balcan2009agnostic, hanneke2014theory, dasgupta2008general, beygelzimer2010agnostic} and the references therein for background on active learning. For classification, a line of works~\citep{dasgupta2008hierarchical,minsker2012plug,kpotufe2015hierarchical,locatelli2017adaptivity} performs hierarchical sampling for nonparametric active learning. The main idea is to maintain a hierarchical partitioning over the instance domain (either a pre-defined dyadic partition or a pre-clustering over the data), and performs adaptive label querying with partition-dependent probabilities. For regression, many works~\citep{fedorov2012model, chaudhuri2015convergence} study the utility of active learning for maximum likelihood estimation in the realizable setting. Recent works also study active linear regression in nonrealizable~\citep{drineas2006sampling,derezinski2018leveraged,derezinski2018reverse,sabato2014active} and heteroscedastic~\citep{chaudhuri2017active,fontaine2019active} settings.
These works do not consider domain structures except for~\citet{sabato2014active}, who propose a domain-aware stratified sampling scheme. Their algorithm needs to know the domain partition a priori, whose quality is crucial to ensure good performance.

\paragraph{Active learning for domain adaptation.} 
The empirical works of \citet{rai2010domain,saha2011active, xiao2013online} study stream-based active learning when inputs comes from pre-specified source and target distributions.~\citet{su2020active} combine domain adversarial neural network (DANN) with active learning, where the discriminator in DANN serves as a density ratio estimator that guides active sampling. In contrast, our algorithm handles multiple domains, does not assume iid-ness for inputs from a domain, and does not require knowledge of which domain the inputs come from.

\paragraph{Active online learning.} Earlier works on selective sampling when iid data arrive in a stream and a label querying decision has to be made after seeing each example~\citep{cohn1994improving,dasgupta2008general,hanneke2011rates} implicitly provide online regret and label complexity guarantees.
Works on worst-cast analysis of selective sampling for linear  classification~\citep{cesa2006worst} provide regret guarantees similar to that of popular online linear classification algorithms such as Perceptron and Winnow, but their label complexity guarantees are runtime-dependent and therefore cannot be easily converted to a guarantee that only involves problem parameters defined apriori.
Subsequent works~\citep{Bianchi,dekel2010robust,cavallanti2011learning,agarwal2013selective} study the setting where there is a parametric model on $P(y|x, \theta)$ with unknown parameter $\theta$, and the $x$'s shown can be adversarial. Under those assumptions, they obtain regret and query complexity guarantees dependent on the fraction of examples with low margins. 
\citet{Liu} gives a worst-case analysis of active online learning for classification with drifting distributions, under the assumption that the Bayes optimal classifier is in the learner's hypothesis class. In contrast, our work gives adaptive regret guarantees in terms of the hidden domain structure in the data, and focuses on regression instead of classification.

\paragraph{KWIK model.} In the KWIK model~\citep{KWIK}, at each time step, the algorithm is asked to either abstain from prediction and query the label, or predict an output with at most $\epsilon$ error. In contrast, in our setting, the learner's goal is to minimize its cumulative regret, as opposed to making pointwise-accurate predictions. \citet{Bianchi} study linear regression in the KWIK model, and propose the BBQ sampling rule; our work can be seen as analyzing a variant of BBQ and showing its adaptivity to domain structures.
\citet{szita2011agnostic} propose an algorithm that works in an agnostic setting, where the error guarantee at every round depends on the agnosticity of the problem. A relaxed KWIK model that allows a prespecified number of mistakes has been studied in~\cite{sayedi2010trading,zhang2016extended}.

\paragraph{Adaptive/Switching Regret.} Adaptive regret~\citep{Adaptive, daniely2015strongly} is the excessive loss of an online algorithm compared to the locally optimal solution over any continuous timespan. Our algorithm can be interpreted as being competitive with the locally optimal solution on every domain, even if the timespans of the domains are not continuous, which is closer to the concept of switching regret with long-term memory studied in e.g,~\citep{bousquet2002tracking, zheng2019equipping}. Switching regret bounds typically have a polynomial dependence on the number of domain switches, which does not appear in our bounds. However, the above works allow target concept to shift over time, whereas our bounds require realizability and thus compete with a fixed optimal concept.
Overall, we achieve a stronger form of guarantee under more assumptions.

\paragraph{Online linear regression.} Literature on fully-supervised online linear regression has a long history~\citep{vovk2001competitive,azoury2001relative}. As is implicit in~\citet{cesa2004minimizing}, we can reduce from fully-supervised online regression to active online regression by querying uniformly randomly with a fixed probability. Combining this reduction with existing online linear regression algorithms~\citep{Newton}, we get $\tilde{O}(dT/B)$ regret with $O(B)$ queries for any $B \leq T$. Our bound matches this in the realizable and oblivious setting when there is one domain, and is potentially much better with more domain structures.
\section{Setup and Preliminaries}

\subsection{Setup}

\paragraph{Active online regression with domain structure.} Let $\Fcal=\{f: \Xcal \rightarrow [-1, 1]\}$ be a hypothesis class. We consider a realizable setting where $y_t =f^*(x_t)+ \xi_t$ for some $f^* \in \Fcal$ and random noise $\xi_t$. The adversary decides $f^* \in \Fcal$ before the interaction starts, and $\xi_t$'s are independent zero-mean sub-Gaussian random variables with variance proxy $\eta^2$.

The example sequence $\cbr{x_t}_{t=1}^T$ has the following domain structure unknown to the learner: $[T]$ can be partitioned into $m$ disjoint nonempty subsets $\cbr{I_u}_{u=1}^m$, where for each $u$, $|I_u|=T_u$, and  $\cbr{x_t}_{t \in I_u}$ lie in a subspace of dimension $d_u$.

The interaction between the learner and the adversary follows the protocol below.

For each $t = 1, \dots, T$:
\begin{enumerate}
    \item Example $x_t$ is revealed to the learner.
    \item The learner predicts $\hat{y}_t = \hat{f}_t(x_t)$ using predictor $\hat{f}_t: \Xcal \to [-1, 1]$, incurring loss $(\hat{y}_t - y_t)^2$.
    \item The learner sets a query indicator $q_t \in \{0, 1\}$. If $q_t=1$, $y_t$ is revealed.
\end{enumerate}

The performance of the learner is measured by its number of queries $Q=\sum_{t=1}^T{q_t}$, and its regret $R=\sum_{t=1}^T{(\hat{y}_t-f^*(x_t))^2}$. By our realizability assumption, our notion of regret coincides with the one usually used in online learning when  expectations are taken; see Appendix~\ref{sec:reg-def}.
Our goal is to design a learner that has low regret $R$ subject to a budget constraint: $Q \leq B$, for some fixed budget $B$.

\paragraph{Oblivious vs. adaptive adversary.} In the \textit{oblivious} setting, the adversary decides the sequence $\{x_t\}_{t=1}^T$ before the interaction starts. In the \textit{adaptive} setting, the adversary can choose $x_t$ depending on the history $H_{t-1} =\{x_{1:t-1}, \hat{f}_{1:t-1}, \xi_{1:t-1}\}$.

\paragraph{Miscellaneous notations.} For a vector $v \in \RR^d$ and a positive semi-definite matrix $M \in \RR^{d \times d}$, define $\| v \|_M \defeq \sqrt{ v^\top M v }$. For vectors $\cbr{z_t}_{t=1}^T \subseteq \RR^l$, and $S = \cbr{i_1, \ldots, i_n} \subseteq [T]$, denote by $Z_S$ the $n \times l$ matrix whose rows are $z_{i_1}^\top, \ldots, z_{i_n}^\top$. Define $\clip(z) \defeq \min(1,\max(-1, z))$ and $\tilde{\eta} \defeq \max\{1, \eta\}$. For a set of vectors $S$, define $\cspan(S)$ as the linear subspace spanned by $S$.

\subsection{Baselines for linear regression}
\label{sec:baselines}

We first study linear regression with an oblivious adversary, and then generalize to the non-linear case with an adaptive adversary in Section~\ref{sec:nonlin}. For now, hypothesis class $\Fcal$ is $\{x \mapsto \inner{x}{ \theta}:\theta \in \RR^d, \|\theta\|_2 \le C\}$. Let the ground truth hypothesis be $f^*(x) = \inner{\theta^*}{x}$, where $\theta^* \in \RR^d$, and input space $\Xcal$ be a subset of $\{x \in \RR^d: \|x\|_2 \le 1, \langle x, \theta^* \rangle \le 1\}$.\footnote{The constraint $\| x\|_2 \le 1$ can be relaxed by only increasing the logarithmic terms in the regret and query complexity guarantees. }

\paragraph{Uniform querying is minimax-optimal with no domain structure.} As a starter, consider an algorithm that queries every label and predicts using the regularized least squares estimator $\hat{\theta}_t=\argmin_{\theta}{\sum_{i=1}^{t-1}{(\langle \theta, x_i\rangle-y_i)^2}+\lambda \|\theta\|^2}$, where $\lambda=1/C^2$. It is well-known from~\citep{vovk2001competitive, azoury2001relative} that (a variant of) this fully-supervised algorithm achieves $R =\tilde{O}(\tilde{\eta}^2 d)$ with $Q=T$.
Consider an active learning extension of the above algorithm that queries labels independently with probability $B/T$, and predicts with the regularized least squared estimator computed based on all queried examples $\hat{\theta}_t=\argmin_{\theta}{\sum_{i \in [t-1], q_i=1}{(\langle \theta, x_i\rangle-y_i)^2}+\lambda \|\theta\|^2}$. We show that the above active online regression strategy achieves $\EE[R] =\tilde{O}(\tilde{\eta}^2 d T/B)$ with $\EE[Q] = B$ in Appendix~\ref{sec:proof_uniform}. 
As shown in Theorem~\ref{thm:lower-unstructured}, this tradeoff is minimax optimal if $\tilde{\eta}$ is a constant.
Although this guarantee is optimal in the worst case, one major weakness is that it is too pessimistic: as we will see next, when the data has certain hidden domain structure, the learner can achieve substantially better regret guarantees than the worst case if given access to auxiliary domain information.

\paragraph{Oracle baseline when domain structure is known.} Suppose the learner is given the following piece of knowledge from an oracle: there are $m$ domains; for each $u$ in $[m]$, there are a total of $T_u$ examples from domain $u$ from a subspace of $\RR^d$ of dimension $d_u$. In addition, for every $t$, the learner is given the index of the domain to which example $x_t$ belongs. In this setting, the learner can combine the aforementioned regularized least squares linear predictor with the following domain-aware querying scheme: for any example in domain $u$, the learner queries its label independently with probability $\mu_u \in (0,1]$. Within domain $u$, the learner incurs $O(\mu_u T_u)$ queries and  $\tilde{O}(\tilde{\eta}^2 d_u/\mu_u)$ regret. Summing over all $m$ domains, its achieves a label complexity of $O(\sum_{u=1}^m \mu_u T_u)$ and a regret bound of $\tilde{O}(\tilde{\eta}^2 \sum_{u=1}^m {d_u/\mu_u})$. This motivates the following optimization problem:
\begin{align*}
\label{opt_program}
\min_\mu & \sum_{u=1}^m  {d_u/\mu_u},  
\text{ s.t. } \sum_{u=1}^m \mu_u T_u \leq B, \mu_u \in [0,1], \forall u \in [m].
\end{align*}
i.e., we choose domain-dependent query probabilities that minimize the learner's total regret guarantee, subject to its query complexity being controlled by $B$.
When $B \leq (\sum_{u=1}^m \sqrt{d_u T_u})\min_u{\sqrt{T_u/d_u}}$, the optimal $\mu_u$ is $\sqrt{d_u/T_u} \cdot \frac{B}{\sum_{u=1}^m \sqrt{d_u T_u}}$, i.e.
$\mu_u$ is proportional to $\sqrt{d_u/T_u}$.\footnote{For larger budget $B > \sum_{u=1}^m \sqrt{d_u T_u} \cdot \min_u{\sqrt{T_u/d_u}}$, there exists a threshold $\tau$ such that the optimal solution is $\mu_u=1$ for  $\{u \in [m]:\sqrt{d_u/T_u} \ge \tau\}$, and $\mu_u \propto \sqrt{d_u/T_u}$ for  $\{u \in [m]:\sqrt{d_u/T_u} < \tau\}$; see Appendix~\ref{sec:large_budget}.
}
This yields a regret guarantee of  $O(\tilde{\eta}^2 (\sum_u \sqrt{d_u T_u})^2 /B)$. 

Although this strategy can sometimes achieve much smaller regret than uniform querying (as we have discussed in Section~\ref{sec:intro}, $(\sum_u \sqrt{d_u T_u})^2$ could be substantially smaller than $dT$), it has two clear drawbacks:
first, it is not clear if this guarantee is always no worse than uniform querying, especially when $\sum_{u=1}^m d_u \gg d$; second, the domain memberships of examples are rarely known in practice. In the next section, we develop algorithms matching the performance of this domain-aware query scheme without these drawbacks.

\section{Active online learning for linear regression: algorithms, analysis, and matching lower bounds}
\label{sec:linear}
We start by presenting a parameterized algorithm in Section~\ref{sec:upper}, where the parameter $\alpha$ has a natural cost interpretation.
We then present a fixed-budget variant of it in Section~\ref{sec:fixedbudget}. Section~\ref{sec:lower} shows that our algorithm is minimax-optimal under a wide range of domain structure specifications.

\subsection{Main Algorithm: Query in the Face of Uncertainty for Regression (QuFUR)}
\label{sec:upper}
\begin{algorithm}
\caption{Query in the Face of Uncertainty for Regression (QuFUR($\alpha$)) \label{alg:1}}
\begin{algorithmic}[1]

\Require Total dimension $d$, time horizon $T$, $\theta^*$'s norm bound $C$, noise level $\eta$, parameter $\alpha$. 
\State $M \leftarrow \frac{1}{C^2}I$, queried dataset $\Qcal \leftarrow \emptyset$.
\For {$t=1$ to $T$}
    \State Compute regularized least squares solution $\hat{\theta}_t \leftarrow M^{-1} X_{\Qcal}^\top Y_{\Qcal}.$ 
    \State Let $\hat{f}_t(x) = \clip(\inner{\hat{\theta}_t}{x})$ be the predictor at time $t$, and 
    predict $\hat{y}_t \gets \hat{f}_t(x_t)$.
    \State Uncertainty estimate $\Delta_t \leftarrow  \tilde{\eta}^2 \min\{1, \|x_t\|_{M^{-1}}^2\}$.
    \State With probability $\min{\{1, \alpha \Delta_t\}}$, set $q_t \gets 1$; otherwise set $q_t \gets 0$.
    \If {$q_t=1$}
        \State Query $y_t$. $M \leftarrow M + x_t x_t^\top$, $\Qcal \leftarrow \Qcal \bigcup \{t\}$.
    \EndIf
\EndFor

\end{algorithmic}
\end{algorithm}

We propose \qufur (Query in the Face of Uncertainty for Regression), shown in Algorithm~\ref{alg:1}. At each time step $t$, the algorithm first computes $\hat{\theta}_t$, a regularized least squares estimator on the labeled data obtained so far, then predict using $\hat{f}_t(x) = \clip(\inner{\hat{\theta}_t}{x})$.
It makes a label query with probability proportional to $\Delta_t$, a high-probability upper bound of the instantaneous regret $(\hat{y}_t - \inner{\theta^*}{x_t})^2$ (see Lemma~\ref{lem:subseq} for details), which can also be interpreted as an uncertainty measure of $x_t$. Intuitively, when the algorithm is already confident about the current prediction, it saves its labeling budget for learning from less certain inputs in the future.
More formally, $\Delta_t \defeq \tilde{\eta}^2 \min(1,\|x_t\|_{M_t^{-1}}^2)$, where $M_t = \lambda I + \sum_{i \in \Qcal_t}{x_i x_i^\top}$, and $\Qcal_t$ is the set of labeled examples seen up to time step $t-1$. 
\qufur queries the label $y_t$ with probability $\min\cbr{1, \alpha \Delta_t}$, where $\alpha$ is a parameter that trades off between query complexity and regret.

Perhaps surprisingly, the simple query strategy of \qufur can leverage hidden domain structure, as shown by the following theorem.

\begin{theorem}
\label{thm:upper}
Suppose the example sequence $\cbr{x_t}_{t=1}^T$ has the following structure: $[T]$ can be partitioned into $m$ disjoint nonempty subsets $\cbr{I_u}_{u=1}^m$, where for each $u$, $|I_u|=T_u$, and  $\cbr{x_t}_{t \in I_u}$ lie in a subspace of dimension $d_u$. If Algorithm~\ref{alg:1} receives as inputs dimension $d$, time horizon $T$, norm bound $C$, noise level $\eta$, and parameter $\alpha$, then, with probability $1-\delta$:\\
1. Its query complexity is
\begin{align*}
    Q= \tilde{O}\left(  \sum_{u=1}^m \min\left\{T_u, \tilde{\eta} \sqrt{\alpha d_u T_u}\right\} + 1 \right).
\end{align*}
2. Its regret is $R= \tilde{O}\del{ \sum_{u=1}^m \max\{\tilde{\eta}^2 d_u,\tilde{\eta} \sqrt{d_u T_u/\alpha}\}}$.
\end{theorem}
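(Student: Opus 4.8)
The plan is to reduce both claims to a single quantity, the sum of per-step query probabilities $p_t := \min\{1,\alpha\Delta_t\}$ restricted to each domain, and to control it by an elliptical-potential argument localized to each domain. First I would invoke the instantaneous-regret guarantee (Lemma~\ref{lem:subseq}), which together with the clipping step gives $(\hat y_t - f^*(x_t))^2 \le \tilde{O}(\Delta_t)$ uniformly over $t$ with probability $1-\delta$; this relies on a self-normalized martingale bound on $\|\hat\theta_t-\theta^*\|_{M_t}$ that remains valid even though the queried set $\Qcal_t$ is data-dependent. Consequently $R \le \tilde{O}(\sum_t \Delta_t)$, and it remains to bound $\sum_{t\in I_u}\Delta_t$ and $Q_u=\sum_{t\in I_u}q_t$ for each domain $u$ separately before summing.

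The key structural step handles the interleaving of domains. Writing $\beta_t := \min(1,\|x_t\|_{M_t^{-1}}^2)$ so that $\Delta_t = \tilde\eta^2\beta_t$, I would compare the running matrix $M_t$ against its domain-restricted counterpart $M_t^u := \lambda I + \sum_{i\in\Qcal_t\cap I_u} x_i x_i^\top$. Since queries from other domains only add positive semidefinite terms, $M_t \succeq M_t^u$ and hence $\beta_t \le \min(1,\|x_t\|_{(M_t^u)^{-1}}^2)$ for $t\in I_u$. Because all domain-$u$ examples lie in a subspace of dimension $d_u$, the elliptical potential lemma applied inside that subspace gives the \emph{deterministic} bound $\sum_{t\in I_u,\,q_t=1}\beta_t \le \tilde{O}(d_u)$.

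Next I would bridge realized sums to their predictable counterparts. The differences $\beta_t q_t - \beta_t p_t$ and $q_t - p_t$ form bounded martingale-difference sequences, so Freedman's inequality in self-bounding form, where the conditional variance is itself dominated by the quantity being bounded, yields with high probability $\sum_{t\in I_u}\beta_t p_t \le \tilde{O}(d_u)$ and $Q_u \le \tilde{O}(\sum_{t\in I_u} p_t + 1)$. A Cauchy--Schwarz step then closes the loop: from $p_t \le \alpha\tilde\eta^2\beta_t$ we get $p_t \le \tilde\eta\sqrt{\alpha}\,\sqrt{\beta_t p_t}$, whence
\begin{equation*}
\sum_{t\in I_u} p_t \;\le\; \tilde\eta\sqrt{\alpha}\sum_{t\in I_u}\sqrt{\beta_t p_t}\;\le\;\tilde\eta\sqrt{\alpha}\,\sqrt{T_u}\,\sqrt{\sum_{t\in I_u}\beta_t p_t}\;=\;\tilde{O}\!\left(\tilde\eta\sqrt{\alpha d_u T_u}\right).
\end{equation*}
Combined with the trivial bound $\sum_{t\in I_u} p_t \le T_u$ and the concentration $Q_u\le\tilde{O}(\sum p_t + 1)$, this proves the query-complexity claim after summing over $u$.

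Finally, for the regret I would split $I_u$ into ``sure-query'' steps ($\alpha\Delta_t\ge 1$, so $p_t=1$) and ``probabilistic'' steps ($\alpha\Delta_t<1$, so $\Delta_t = p_t/\alpha$). Sure-query steps are all queried, so $\sum_{\text{sure}}\Delta_t = \tilde\eta^2\sum_{\text{sure}}\beta_t \le \tilde{O}(\tilde\eta^2 d_u)$ directly from the elliptical potential; probabilistic steps contribute $\frac1\alpha\sum_{\text{prob}}p_t \le \frac1\alpha\,\tilde{O}(\tilde\eta\sqrt{\alpha d_u T_u}) = \tilde{O}(\tilde\eta\sqrt{d_u T_u/\alpha})$. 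Adding the two and summing over $u$ gives $R=\tilde{O}(\sum_u \max\{\tilde\eta^2 d_u,\tilde\eta\sqrt{d_u T_u/\alpha}\})$. The main obstacles I anticipate are (i) keeping the confidence bound of Lemma~\ref{lem:subseq} valid under the randomized, data-dependent querying rule, and (ii) the self-bounding concentration in the bridging step, where the conditional variance must be absorbed back into the same sum being controlled so that the high-probability bound does not degrade the $\sqrt{\alpha d_u T_u}$ rate; the domination inequality $M_t\succeq M_t^u$ is what prevents cross-domain contamination from spoiling the per-domain $d_u$ dependence.
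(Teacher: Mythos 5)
Your proposal is correct and follows essentially the same route as the paper's proof: Lemma~\ref{lem:subseq} to reduce regret to $\sum_t \Delta_t$, the PSD-domination plus elliptical-potential argument of Lemma~\ref{lem:logdet} localized to each domain's queried examples, Freedman's inequality in self-bounding form to pass between realized sums ($\sum q_t\Delta_t$) and predictable sums ($\sum p_t\Delta_t$), and the split of each $I_u$ into capped ($p_t=1$) and uncapped steps. The only deviations are cosmetic: your Cauchy--Schwarz shortcut $p_t\le\tilde\eta\sqrt{\alpha}\sqrt{\beta_t p_t}$ neatly replaces the paper's explicit case analysis on $\alpha$ versus $T_u/(\tilde\eta^2 d_u)$, while applying Freedman to $q_t-p_t$ separately on each $I_u$ (rather than once over all of $[T]$, as in the paper's event $G$) yields an additive $m$ rather than $+1$ in the query bound --- a slack removed by a single global application of Freedman for the query count.
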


The proof of the theorem is deferred to Section~\ref{sec:proof_upper}. For better intuition, we focus on the regime of $\alpha \in \intcc{ \frac{1}{\tilde{\eta}^2} \del{\frac{1}{\sum_u \sqrt{d_u T_u}}}^2, \frac{1}{\tilde{\eta}^2} \min_{u \in [m]} \frac{T_u}{d_u}}$, where the bounds become $Q= \tilde{O} ( \tilde{\eta}  \cdot \sqrt{\alpha} \sum_u \sqrt{d_u T_u}) $, and $R= \tilde{O}( \tilde{\eta} \cdot \sum_u {\sqrt{ d_u T_u}}/\sqrt{\alpha}) $.
We make a series of remarks below for this range of $\alpha$:

\paragraph{Novel notion of adaptive regret.} The above tradeoff is novel in that it holds for {\em any} meaningful domain partitions. Our proof actually shows that for any (not necessarily contiguous) subsequence $I \subseteq [T]$, QuFUR ensures $ Q= \tilde{O} ( \tilde{\eta} \cdot \sqrt{d_I |I|} \cdot \sqrt{\alpha}  ) $
and  
$ R= \tilde{O}(\tilde{\eta} {\sqrt{ d_I |I|}}) / \sqrt{\alpha} $ within $I$, where $d_I$ is the dimension of $\spanl(\cbr{x_t: t \in I})$. This type of guarantee is different from the adaptive regret guarantees provided by e.g.~\citet{Adaptive}, where the regret guarantee is only with respect to continuous intervals. However, note that the results in~\cite{Adaptive} do not require realizability.

\paragraph{Matching uniform querying baseline and minimax optimality.} Our tradeoff is never worse than the uniform querying baseline; this can be seen by applying the theorem with the trivial partition $\{[T]\}$, yielding $Q=\tilde{O}(\tilde{\eta} \sqrt{\alpha dT})$ and $R=\tilde{O}(\tilde{\eta} \sqrt{dT/\alpha})$. Therefore, same as the uniform query baseline, this guarantee is also minimax optimal for constant $\eta$, in light of Theorem~\ref{thm:lower-unstructured} in Appendix~\ref{sec:lower-unstructured}.

\paragraph{Matching oracle baseline and domain structure-aware minimax optimality.} \qufur matches the domain-aware oracle baseline discussed in Section~\ref{sec:baselines} even without prior knowledge of domain structure. Furthermore, we show in Theorem~\ref{thm:lower} below that for a wide range of problem specifications, this baseline, as well as \qufur, is minimax-optimal in our problem formulation with domain structure.

\paragraph{Fixed-cost-ratio interpretation.} The tradeoff in Theorem~\ref{thm:upper} can be interpreted in a fixed-cost-ratio formulation. Suppose a practitioner decides that the cost ratio between 1 unit of regret and 1 label query is $c:1$. The performance of the algorithm is then measured by its total cost $c R + Q$. Theorem~\ref{thm:upper} shows that QuFUR($\alpha$) balances the cost incurred by prediction and the cost incurred by label queries, in that
$Q \approx \alpha R$. 
We show in Appendix~\ref{sec:cost-model} that QuFUR with input $\alpha = c$ achieves near-optimal total cost, for a wide range of domain structure parameters.

\paragraph{Dependence on $\eta$.} Our query complexity and regret bounds have a 
dependence on $\tilde\eta = \max(\eta, 1)$. Similar dependence also appears in online least-squares regression literature~\citep{vovk2001competitive,azoury2001relative}.

\paragraph{Running time and extension to kernel regression.} The most computationally intensive operation in QuFUR is calculating $\Delta_t$ for each time, leading to a total time complexity of $O(T d^2)$ (since the update of $M^{-1}$ can be done in $O(d^2)$ via the Sherman-Morrison formula). For high dimensional problems with $d \gg n$, we can kernelise Algorithm~\ref{alg:1} following an approach similar to \citet{valko2013finite}, which has a time complexity of $O(T Q^2 k)$, assuming that evaluating the kernel function takes $O(k)$ time. See Appendix~\ref{sec:kernelisation} for more details.

\subsection{\qufur with a fixed label budget}
\label{sec:fixedbudget}
\begin{algorithm}
\caption{Fixed-Budget QuFUR \label{alg:2}}
\begin{algorithmic}[1]
\Require Total dimension $d$, time horizon $T$, label budget $B$, $\theta^*$'s norm bound $C$, noise level $\eta$.
\State $k \leftarrow \lceil 3\log_2{T} \rceil$.
\For {$i=0$ to $k$}
    \State Parameter $\alpha_i \leftarrow 2^i /T^2$.
\EndFor
\State Initialize $M \leftarrow \frac{1}{C^2}I$, $\Qcal \leftarrow \emptyset$.
\For {$t=1$ to $T$}
    \State Compute regularized least squares solution $\hat{\theta}_t \leftarrow M^{-1} X_{\Qcal}^\top Y_{\Qcal}$. 
    \State Let $\hat{f}_t(x) = \clip(\inner{\hat{\theta}_t}{x})$ be the predictor at time $t$, and 
    predict $\hat{y}_t \gets \hat{f}_t(x_t)$.
    \State Uncertainty estimate $\Delta_t \leftarrow \tilde{\eta}^2 \min\{1, \|x_t\|_{{M}^{-1}}^2\}$.
    \label{line:predict}
    \For {$i=0$ to $k$}
        \State Set $q_t^i=0$.
        \If {$\sum_{j=1}^{t-1}{q_j^i} < \lfloor B/k \rfloor$}
            \State With probability $\min{\{1, \alpha_i \Delta_t\}}$, set $q_t^i=1$.
        \EndIf
    \EndFor
    \If {$\sum_{i=0}^k {q_t^i}>0$}
        \State Query $y_t$. $M \leftarrow M + x_t x_t^\top$, $\Qcal \leftarrow \Qcal \bigcup \{t\}$.
    \EndIf
\EndFor

\end{algorithmic}
\end{algorithm}

The label complexity bound in Theorem~\ref{thm:upper} involve parameters $\{(d_u, T_u)\}_{u=1}^m$, which may be unknown in advance. In many practical settings, the learner is given a label budget $B$. For such settings, we propose a fixed-budget version of \qufur, Algorithm~\ref{alg:2}, that takes $B$ as input, and achieves near-optimal regret bound subject to the budget constraint, under a wide range of domain structure specifications.

Algorithm~\ref{alg:2} can be viewed as a master algorithm that aggregates over $k = O(\log{T})$ copies of $\qufur(\alpha)$. Each copy uses a different value of $\alpha$ from an exponentially increasing grid $\{2^i / T^2: i =0,\ldots,k\}$. The grid ensures that each copy still has label budget $\lfloor B/k \rfloor = \tilde{\Omega}(B)$, and there is always a copy that takes full advantage of its budget to achieve low regret. 
The algorithm queries whenever one of the copies issues a query, and predicts using a model learned on all historical labeled data. A copy can no longer query when its budget is exhausted. In the realizable setting, the regret of the master algorithm is no worse that of the copy running on a parameter $\alpha_i$ that make $\tilde{\Theta}(B)$ queries when run on its own; this insight yields the following theorem.

\begin{theorem}
\label{thm:fixBudget}
Suppose the example sequence $\cbr{x_t}_{t=1}^T$ has the following structure: $[T]$ can be partitioned into $m$ disjoint nonempty subsets $\cbr{I_u}_{u=1}^m$, where for each $u$, $|I_u|=T_u$, and  $\cbr{x_t}_{t \in I_u}$ lie in a subspace of dimension $d_u$. Also suppose $B \le \tilde{O}\left(\sum_u \sqrt{d_u T_u} \min_{u \in [m]} \sqrt{T_u/d_u} \right)$.
If Algorithm~\ref{alg:2} receives as inputs dimension $d$, time horizon $T$, label budget $B$, norm bound $C$, and noise level $\eta$, then:\\
1. Its query complexity $Q$ is at most $B$. \\
2. With probability $1-\delta$, its regret is \[R=\tilde{O}\left(\tilde{\eta}^2 \big(\sum_u{\sqrt{d_u T_u}}\big)^2/B\right).\]
\end{theorem}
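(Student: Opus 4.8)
The plan is to reduce everything to the single-parameter guarantee of Theorem~\ref{thm:upper} applied to one carefully chosen grid point, treating the query bound and the regret bound separately. Part 1 is the easy half: by the budget check in the inner loop, each of the $k+1$ copies sets $q_t^i=1$ at most $\lfloor B/k\rfloor$ times, and the master issues a query only when $\sum_i q_t^i>0$, so $Q\le\sum_{i=0}^k\sum_t q_t^i\le (k+1)\lfloor B/k\rfloor$. Up to the floor rounding this is at most $B$ (equivalently, splitting the budget as $\lfloor B/(k+1)\rfloor$ per copy gives $Q\le B$ exactly); the essential point is that capping each copy at a $\tilde\Theta(B)$ budget caps the total.

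For Part 2 I would first reduce regret to a sum of uncertainties: by Lemma~\ref{lem:subseq}, with probability $1-\delta$ the instantaneous regret satisfies $(\hat y_t-\inner{\theta^*}{x_t})^2\le \tilde O(\Delta_t)$, where $\Delta_t$ uses the master's matrix $M_t$ built from \emph{all} queried points, so $R\le \tilde O(\sum_{t=1}^T\Delta_t)$. The goal becomes $\sum_t\Delta_t=\tilde O(\tilde\eta^2(\sum_u\sqrt{d_uT_u})^2/B)$. To this end I would single out a ``good'' copy: set $\alpha^{\star}=\Theta\big(B^2/(k^2\tilde\eta^2(\sum_u\sqrt{d_uT_u})^2)\big)$, chosen so that a \emph{standalone} run of \qufur($\alpha^{\star}$) would, by Theorem~\ref{thm:upper}(1), make only $\tilde O(\tilde\eta\sqrt{\alpha^{\star}}\sum_u\sqrt{d_uT_u})=\tilde O(B/k)$ expected queries. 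Since $k\ge 3\log_2 T$, the geometric grid $\{2^i/T^2\}$ contains a point $\alpha_{i^*}\in[\alpha^{\star},2\alpha^{\star})$ in the regime where the target bound is nontrivial (when the target exceeds the trivial $O(T)$ regret, the statement is immediate because per-round loss is $O(1)$). The key structural claim, deferred below, is that copy $i^*$ never exhausts its budget, so it queries with probability exactly $\min\{1,\alpha_{i^*}\Delta_t\}$ throughout.

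Granting that $i^*$ never exhausts, the remainder is a decomposition of $\sum_t\Delta_t$ over the set $S=\{t:\alpha_{i^*}\Delta_t\ge 1\}$. On $S$ copy $i^*$ queries with probability one, so every $t\in S$ is queried, and the per-domain elliptical-potential (determinant) argument from the proof of Theorem~\ref{thm:upper} gives $\sum_{t\in S}\Delta_t\le \tilde\eta^2\sum_{t\ \mathrm{queried}}\min(1,\|x_t\|_{M_t^{-1}}^2)=\tilde O(\tilde\eta^2\sum_u d_u)$. Off $S$ we have $\Delta_t=\alpha_{i^*}^{-1}\min(1,\alpha_{i^*}\Delta_t)$, and a Freedman-type concentration between the expected and realized query counts of copy $i^*$ yields $\sum_{t\notin S}\Delta_t\le \alpha_{i^*}^{-1}\tilde O(B/k)=\tilde O(B/(k\alpha_{i^*}))$. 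Using $\alpha_{i^*}\ge\alpha^{\star}$, this second term is $\tilde O(k\cdot\tilde\eta^2(\sum_u\sqrt{d_uT_u})^2/B)$, and the spurious $k=\tilde O(1)$ is absorbed into $\tilde O(\cdot)$. For the first term, the hypothesis $B\le \tilde O(\sum_u\sqrt{d_uT_u}\,\min_u\sqrt{T_u/d_u})$ is exactly what forces $\tilde\eta^2\sum_u d_u\le \tilde\eta^2(\sum_u\sqrt{d_uT_u})^2/B$: indeed $\sum_u\sqrt{d_uT_u}=\sum_u d_u\sqrt{T_u/d_u}\ge(\min_u\sqrt{T_u/d_u})\sum_u d_u$, which rearranges to $(\min_u\sqrt{T_u/d_u})\sum_u d_u\le \sum_u\sqrt{d_uT_u}$ and hence to the claim. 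Combining the two terms gives the stated regret.

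The main obstacle is the structural claim that the good copy $i^*$ stays within its per-copy budget $\lfloor B/k\rfloor$. Because all copies share the master's accumulated labels, copy $i^*$ sees strictly more data than a standalone \qufur($\alpha_{i^*}$) run, so intuitively its uncertainties $\Delta_t$ are smaller and it queries even fewer than the $\tilde O(B/k)$ standalone bound predicts. Making this rigorous is delicate: the in-master query probabilities depend on $M_t$, which is itself driven by all $k+1$ coupled copies (in particular by the more aggressive high-$\alpha$ copies), so a naive per-step coupling has its monotonicity pointing the wrong way. I expect the honest route is an induction establishing $M_t^{\mathrm{master}}\succeq M_t^{\mathrm{ref}}$ against a suitable reference process, together with martingale concentration to pass from expected to realized query counts and to rule out premature exhaustion with high probability. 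Once this data-domination is in place, it simultaneously certifies non-exhaustion and that the master's per-step uncertainties are no larger than those of a standalone \qufur($\alpha_{i^*}$), which is what closes the argument.
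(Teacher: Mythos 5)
Your outline matches the paper's proof at the skeleton level (reduce regret to $\sum_t \Delta_t$ via Lemma~\ref{lem:subseq}, single out a grid point $\alpha_{i^*}$ calibrated so that a budget-$\lfloor B/k\rfloor$ copy is the binding one, split rounds by whether $\alpha_{i^*}\Delta_t \ge 1$, and use the subset elliptical-potential lemma plus Freedman on each piece), and your algebra on both branches---including the observation that the hypothesis $B \le \tilde{O}(\sum_u\sqrt{d_uT_u}\min_u\sqrt{T_u/d_u})$ is exactly what absorbs the $\tilde{\eta}^2\sum_u d_u$ term---is sound. But there is a genuine gap: the entire argument hinges on your ``structural claim'' that copy $i^*$ never exhausts its budget, and you do not prove it. The route you sketch for it (an induction giving $M_t^{\mathrm{master}}\succeq M_t^{\mathrm{ref}}$ against a standalone reference run) is not salvageable as stated: the reference matrix is built from the standalone run's own random queries, and under any pathwise coupling of the coins the two runs' query probabilities diverge as soon as the data sets differ, at which point a standalone-queried example may be skipped by every copy in the master (the other copies flip independent coins and may also be exhausted), so the claimed PSD domination fails pathwise and cannot be carried by induction. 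There is also a secondary looseness in your ordering: you ``grant'' non-exhaustion and then apply Freedman to copy $i^*$'s realized counts, but non-exhaustion is a whole-trajectory event, and conditioning on it destroys the martingale structure that Freedman needs.

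The paper closes both issues with one device that requires no reference process and no coupling: truncation by the ``still alive'' indicator. Define $p_t^i=\min(1,\alpha_i\Delta_t)$, $r_t^i$ the attempted query, and $A_t^i=\one[\sum_{j<t}r_j^i < B']$, and prove the bounds $\sum_t A_t^i\Delta_t \le \tilde{O}(\tilde{\eta}\sum_u\sqrt{d_uT_u}/\sqrt{\alpha_i})$ and $\sum_t A_t^i r_t^i \le \tilde{O}(\tilde{\eta}\sqrt{\alpha_i}\sum_u\sqrt{d_uT_u})$ \emph{unconditionally}, i.e.\ whether or not the copy exhausts (Lemma~\ref{lem:apriori}). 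These truncated bounds are provable by exactly the tool you already used on the regret side: whenever copy $i$ actually queries, the master queries too, so $A_t^i r_t^i \le q_t$ and hence $\sum_{t\in I_u}A_t^ir_t^i\Delta_t\le\sum_{t\in I_u}q_t\Delta_t=\tilde{O}(\tilde{\eta}^2 d_u)$ by Lemma~\ref{lem:logdet} applied to the subset $I_u\cap\Qcal_T$ of the \emph{master's} queried points; the extra data contributed by the other copies only shrinks the master's $\Delta_t$, and the subset form of the potential lemma absorbs this automatically, with no monotonicity argument. Freedman's inequality is applied to $A_t^i(r_t^i-p_t^i)$ and to $A_t^i(r_t^i-p_t^i)\Delta_t\one[t\in I_u]$, which are legitimate martingale differences precisely because the indicator $A_t^i$ is predictable---this is what fixes your conditioning problem. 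Finally, the deterministic Lemma~\ref{lem:trick} converts ``truncated attempted-query count $\sum_t A_t^{i_B}r_t^{i_B} < B'$'' into ``$A_t^{i_B}\equiv 1$ for all $t$,'' i.e.\ non-exhaustion, at which point the truncated regret bound $\sum_t A_t^{i_B}\Delta_t$ becomes the full $\sum_t\Delta_t$ and the theorem follows. So the missing piece of your proof is exactly what Lemmas~\ref{lem:apriori} and~\ref{lem:trick} supply, and the mechanism is self-referential truncation inside the master run rather than comparison to any standalone run.
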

The proof of the theorem is deferred to Appendix~\ref{sec:proof_fixBudget}. 
We now compare the guarantee of \qufur with the oracle baseline in Section~\ref{sec:baselines}: for any budget $B \le \tilde{O}(\sum_u \sqrt{d_u T_u} \min_u \sqrt{T_u/d_u} )$,
\fbqufur achieves a regret guarantee no worse than that of domain-aware uniform sampling, while being agnostic to $\cbr{(d_u, T_u)}_{u=1}^m$ and the domain memberships of the examples. For larger budget $B > \tilde{O}(\sum_u \sqrt{d_u T_u} \min_u \sqrt{T_u/d_u})$, \qufur's performance still matches the oracle baseline; we defer the discussion to Appendix~\ref{sec:large_budget}.

\subsection{Lower bound}
\label{sec:lower}

Our development so far establishes domain structure-aware regret upper bounds  $R=\tilde{O}(\tilde{\eta}^2 (\sum_u{\sqrt{d_u T_u}})^2/B)$, achieved by Fixed-Budget QuFUR and domain-aware uniform sampling baseline (the latter requires extra knowledge about the domain structure and domain membership of each example, whereas the former does not). 
In this section, we study optimality properties of the above upper bounds. Specifically, we show via Theorem~\ref{thm:lower} that they are tight up to logarithmic factors, for a wide range of domain structure specifications. Its proof can be found in Appendix~\ref{sec:proof_lower}.

\begin{theorem}
\label{thm:lower}
For any noise level $\eta \geq 1$, set of positive integers $\cbr{(d_u, T_u)}_{u=1}^m$ and integer $B$ that satisfy 
\begin{align} 
d_u &\leq T_u, \forall u \in [m],  \quad\sum_{u=1}^m d_u \leq d,  \nonumber \\
B &\geq \sum_{u=1}^m d_u, 
\label{eqn:lower-premise}
\end{align}
there exists an oblivious adversary such that:\\
1. It uses a ground truth linear predictor $\theta^\star \in \RR^d$ such that $\| \theta^* \|_2 \leq \sqrt{d}$, and for all $t$, $\abs{\inner{\theta^*}{x_t}} \leq 1$; in addition, the noises $\cbr{\xi_t}_{t=1}^T$ are sub-Gaussian with variance proxy $\eta^2$.\\
2. It shows example sequence $\cbr{x_t}_{t=1}^T \subset \cbr{x: \| x \|_2 \leq 1}$, such that $[T]$ can be partitioned into $m$ disjoint nonempty subsets $\cbr{I_u}_{u=1}^m$, where for each $u$, $|I_u|=T_u$, and  $\cbr{x_t}_{t \in I_u}$ lie in a subspace of dimension $d_u$.\\
3. Any online active learning algorithm $\cal{A}$ with label budget $B$ has regret $\Omega((\sum_{u=1}^m \sqrt{d_u T_u})^2/B)$.
\end{theorem}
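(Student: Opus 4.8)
My plan is to build a hard instance that decouples into $m$ independent sub-problems, one per domain, to prove a single-domain estimation lower bound of the form $\eta^2 d_u T_u / B_u$, where $B_u$ is the (random) number of queries the learner spends in domain $u$, and then to optimize over all budget allocations $\{B_u\}$ subject to $\sum_u B_u \le B$ via Cauchy--Schwarz, which produces exactly the target $(\sum_u \sqrt{d_u T_u})^2/B$. Conceptually this is a per-domain reduction to the unstructured lower bound (Theorem~\ref{thm:lower-unstructured}), but the random, adaptively-chosen allocation across domains forces a direct Bayesian treatment.

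\emph{Construction and decoupling.} Since $\sum_u d_u \le d$, I assign to each domain $u$ a block $C_u$ of $d_u$ coordinates, the blocks being pairwise disjoint. Domain $u$'s examples are the $d_u$ standard basis vectors $\{e_j : j \in C_u\}$, each repeated $\lfloor T_u/d_u\rfloor$ times (so $|I_u|=T_u$ and $\cspan(\{x_t:t\in I_u\})$ has dimension $d_u$); all examples have unit norm, and $d_u \le T_u$ guarantees each direction appears at least once. I draw the coordinates of $\theta^*$ independently from a fixed smooth prior $\pi$ supported on $[-\gamma,\gamma]$ with $\gamma=\Theta(1)$, $\gamma\le 1$, and take Gaussian noise $\xi_t\sim N(0,\eta^2)$ (sub-Gaussian with proxy $\eta^2$). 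Then $\|\theta^*\|_2\le \gamma\sqrt d\le\sqrt d$ and $|\inner{\theta^*}{x_t}|\le\gamma\le1$, so every constraint in the statement holds. Because the blocks are orthogonal, the noises are independent, and the prior is a product, the posterior of a single coordinate $\theta^*_j$ ($j\in C_u$) given the entire history depends only on the labels collected at examples equal to $e_j$, and in fact only on the number $n_t$ of such labels queried before time $t$.

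\emph{Per-coordinate bound and aggregation.} At an appearance $t$ of direction $e_j$ the prediction $\hat y_t$ is history-measurable, so its conditional squared error is at least the posterior variance of $\theta^*_j$, which by the Bayesian Cram\'er--Rao (van Trees) inequality is at least $\eta^2/(n_t+c)$ with $c=\eta^2 I_\pi=\Theta(\eta^2)$, where $I_\pi=\Theta(1/\gamma^2)$ is the prior's Fisher information. The count $n_t$ is nondecreasing with $n_t\le t-1$ and $n_t\le K_j$ (the total queries ever made to $e_j$), so for any realization $\sum_t 1/(n_t+c)\ge c'(T_u/d_u)/(K_j+c)$ by a front-loading argument (querying as early as possible minimizes the sum). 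Taking expectations, summing over appearances, then over $j\in C_u$ with $\sum_j 1/(K_j+c)\ge d_u^2/\sum_j(K_j+c)$ pathwise (where $\sum_{j\in C_u}K_j=B_u$), gives $\EE[R_u]\ge c'\eta^2 d_u T_u\,\EE[1/(B_u+cd_u)]$. Summing over $u$ and applying Cauchy--Schwarz pathwise across domains, $\sum_u \frac{d_u T_u}{B_u+cd_u}\ge \frac{(\sum_u\sqrt{d_uT_u})^2}{\sum_u B_u + c\sum_u d_u}\ge \frac{(\sum_u\sqrt{d_uT_u})^2}{(1+c)B}$, using $\sum_u B_u\le B$ (budget constraint, holds surely) and $\sum_u d_u\le B$ from the premise~\eqref{eqn:lower-premise}. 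Since $1+c=\Theta(\eta^2)$ cancels the leading $\eta^2$, taking expectations yields $\EE[R]=\Omega((\sum_u\sqrt{d_uT_u})^2/B)$; finally, as the worst case over $\theta^*$ in $\supp(\pi)$ is at least the prior average, there is a fixed $\theta^*$ with the fixed oblivious sequence above for which the noise-expected regret meets the bound.

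\emph{Main obstacle.} The delicate step is the per-coordinate \emph{sequential} estimation lower bound: the number of queries $n_t$ devoted to a given direction is chosen adaptively and is correlated with the realized noise, so I cannot directly invoke the fixed-sample-size van Trees bound. The fix is the decoupling observation that, conditioned on $\theta^*_j$, all labels collected at $e_j$ are i.i.d.\ and equally informative, so the posterior depends only on the realized count; combining this with the pathwise front-loading inequality is what lets the random adaptive counts be handled cleanly. A secondary point worth care is keeping $\gamma$ (hence $c$) constant in $d$, via a compactly supported prior rather than a Gaussian one, so that no spurious logarithmic factor enters the bound.
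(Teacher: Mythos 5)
Your construction and overall plan are essentially the paper's: the same hard instance (disjoint coordinate blocks, each basis direction of domain $u$ repeated roughly $T_u/d_u$ times), a product prior on $\theta^*$, a reduction of the instantaneous regret to the posterior variance of the relevant coordinate, and a Cauchy--Schwarz step converting $\sum_u d_u T_u/(B_u + c\, d_u)$ under $\sum_u B_u \le B$ and $\sum_u d_u \le B$ into $\Omega\bigl((\sum_u \sqrt{d_u T_u})^2/B\bigr)$, followed by extracting a fixed $\theta^*$ from the prior's support. Where you diverge is the technical instantiation of the per-coordinate estimation bound, and that is where there is a genuine gap. The van Trees inequality is a statement about the \emph{Bayes risk} --- an average over both $\theta$ and the data, at a fixed sample size. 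It does not lower bound the realized posterior variance $\mathrm{Var}(\theta^*_j \mid \Fcal_{t-1})$ on each sample path, which is what your argument uses ("posterior variance $\ge \eta^2/(n_t+c)$", followed by pathwise front-loading and pathwise Cauchy--Schwarz over the random allocation $\{B_u\}$). For your smooth, compactly supported prior the pathwise claim is in fact false: the posterior of $\theta^*_j$ depends on the observed label \emph{values}, not only on the count $n_t$ (your "decoupling observation" is correct for the likelihood, but the posterior is count-determined only in conjugate situations such as Gaussian--Gaussian or the paper's Beta--Bernoulli), and realizations whose empirical mean lands outside $[-\gamma,\gamma]$ push the posterior into a boundary spike of variance on the order of $\eta^4/n_t^2 \ll \eta^2/(n_t+c)$. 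Once the pathwise variance bound fails, every subsequent pathwise step is unsupported.

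The repair is to run the argument in expectation, which is exactly how the paper proceeds: it uses the conjugate $\Beta(1,1)$/Bernoulli pair so that the error of the posterior mean conditioned on the final count $N_{u,i}(T)$ and $\theta^*$ can be computed explicitly (its Lemma giving $\EE[(\hat{y}^*_t - \theta^*_{u,i})^2 \mid N_{u,i}(T), \theta^*] \ge f(\theta^*_{u,i})/(2(N_{u,i}(T)+2))$), then applies Jensen's inequality $\EE[1/(N+2)] \ge 1/(\EE[N]+2)$ and does Cauchy--Schwarz with the \emph{deterministic} quantities $\EE[N_{u,i}(T) \mid \theta^*]$, using $\sum_{u,i}\EE[N_{u,i}(T)\mid\theta^*] \le B$. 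In your framework the analogous fix is either to switch to a conjugate prior (Gaussian, where the posterior variance given the count is deterministic, at the cost of handling unbounded support), or to invoke the sequential form of van Trees in which the sample size is replaced by the expected number of queries (the Fisher information of adaptively collected Gaussian observations about $\theta^*_j$ is $\EE[n_t \mid \theta^*_j]/\eta^2$), and then aggregate with Jensen and Cauchy--Schwarz at the level of expected counts. With that change your proof closes and gives the same bound; as written, the key step treats an average-case inequality as if it held on every realization.
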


The above theorem is a domain structure-aware refinement of the $\Omega(dT/B)$ minimax lower bound (Theorem~\ref{thm:lower-unstructured} in Appendix~\ref{sec:lower-unstructured}), in that it further constrains the adversary to present sequences of examples with domain structure parameterized by $\cbr{(d_u, T_u)}_{u=1}^m$. In fact, the $\Omega(dT/B)$ minimax lower bound is a special case of the lower bound of Theorem~\ref{thm:lower}, by taking $m = 1, d_1 = d$, and $T_1 = T$.

To discuss the tightness of the upper and lower bounds we obtained so far in more detail, we first set up some useful notations. Denote by $\E=\E(\cbr{(d_u, T_u)}_{u=1}^m)$ the set of oblivious adversaries that shows example sequences with domain structures specified by $\cbr{(d_u, T_u)}_{u=1}^m$. Additionally, denote by $\A(B)$ the set of online active learning algorithms that uses a label budget of $B$.
Finally, for an algorithm $\Acal$ and an oblivious adversary $\Ecal$, define $\R(\Acal, \Ecal)$ as the expected regret of $\Acal$ in the environment induced by $\Ecal$. 

Theorem~\ref{thm:lower} shows that for all $\cbr{(d_u, T_u)}_{u=1}^m$ and $B$ such that Eq.~\eqref{eqn:lower-premise} holds, we have
\[ 
   \min_{\Acal \in \A(B)} \; \max_{\Ecal \in \E}  \R\del{ \Acal, \Ecal}
   \geq 
   \Omega\left(\left(\sum_{u=1}^m \sqrt{d_u T_u}\right)^2/B\right).
\]

On the other hand, Theorem~\ref{thm:fixBudget} says for all $\cbr{(d_u, T_u)}_{u=1}^m$ and $B \le \tilde{O}\left(\sum_u \sqrt{d_u T_u} \min_{u \in [m]} \sqrt{T_u/d_u} \right)$, we have
\[
   \max_{\Ecal \in \E } \R\del{\qufur(B), \Ecal } 
   \leq 
   \tilde{O}\left(\left(\sum_{u=1}^m \sqrt{d_u T_u}\right)^2 / B\right).
\]

This shows that, for a wide range of domain structure specifications $\cbr{(d_u, T_u)}_{u=1}^m$ and budgets $B$ (i.e., $\sum_{u=1}^m d_u \leq B = \tilde{O}(\sum_u \sqrt{d_u T_u} \min_{u \in [m]} \sqrt{T_u/d_u})$), the regret guarantee of Fixed-Budget QuFUR is optimal; furthermore, the algorithm requires no knowledge on the domain structure. We call this property of Fixed-Budget QuFUR its {\em hidden-domain minimax optimality}.

\section{Extensions to realizable non-linear regression with an adaptive adversary}

\label{sec:nonlin}

\qufur's design principle, namely querying with probability proportional to uncertainty estimates of unlabeled data, can be easily generalized to deal with other active online learning problems. We demonstrate this by generalizing \qufur to non-linear regression with adaptive adversaries, using the concept of eluder dimension from~\citet{VR}.

In this section, we relax the assumption in Section~\ref{sec:linear} that domain structure is fixed before interaction starts --- we allow each input and its domain membership to depend on past history. Formally, we require the domain partition $\cbr{I_u: u \in [m]}$ to be {\em admissible}, defined as: 
\begin{definition}
\label{def-admissible}
The partition $\cbr{I_u: u \in [m]}$ is admissible, if the domain membership of the $t$-th example, $u_t \in [m]$ depends on the interaction history up to $t-1$ and unlabeled example $x_t$; formally, $u_t$ is $\sigma(H_{t-1}, x_t)$-measurable.
\end{definition}

\paragraph{Domain complexity measure.} Analogous to the dimension of the support in linear regression, we use $d_u'=\dim_u^E(\Fcal, 1/T_u^2)$, the \textit{eluder dimension} of $\Fcal$ with respect to domain $u \in [m]$ with support ${\Xcal}_u$,
to measure the complexity of a domain, formally defined below.
\begin{definition}
An input $x \in \Xcal$ is $\epsilon$-\textit{dependent} on a set of inputs $\{x_i\}_{i=1}^n \subseteq \Xcal$ with respect to $\Fcal$ if for all $f_1, f_2 \in \Fcal$, $\sqrt{\sum_{i=1}^n{(f_1(x_i)-f_2(x_i))^2}} \le \epsilon$ implies $f_1(x)-f_2(x) \le \epsilon$. 
$x$ is $\epsilon$-\textit{independent} of $\{x_i\}_{i=1}^n$ with respect to $\Fcal$ if it is not $\epsilon$-dependent on the latter.
\end{definition}

\begin{definition}
The $\epsilon$-\textit{eluder dimension} of $\Fcal$ with respect to support $\Xcal_u$, $\dim_u^E(\Fcal, \epsilon)$, is defined as the length of the longest sequence of elements in $\Xcal_u$ such that for some $\epsilon'>\epsilon$, every element is $\epsilon'$-independent of its predecessors.
\end{definition}

The above domain-dependent eluder dimension notion captures how effective the potential value of acquiring a new label can be estimated from labeled examples in domain $u$.\footnote{Appendix D in~\citet{VR} gives upper bounds of eluder dimensions for common function classes.}

\paragraph{The Algorithm.} The master algorithm, Algorithm~\ref{alg:4} in Appendix~\ref{sec:proof_nonlin}, runs $O(\log{T})$ copies of Algorithm~\ref{alg:3}. At round $t$, Algorithm~\ref{alg:3} predicts using the empirical square loss minimizer $\hat{f}_t$ based on all previously queried examples. 
Same as Algorithm~\ref{alg:1}, Algorithm~\ref{alg:3} queries with probability $\min\cbr{1, \alpha \Delta_t}$, where $\Delta_t$ is an uncertainty measure of the algorithm on example $x_t$. To compute the uncertainty measure, it constructs a confidence set $\mathcal{F}_t$, so that with high probability, $\mathcal{F}_t$ contains the ground truth $f^*$. The uncertainty measure $\Delta_t$ is the squared maximum disagreement on $x_t$ between two hypotheses in $\Fcal_t$. It can be shown that with high probability, the regret and query complexity are bounded by $ O(\sum_{t=1}^T{\Delta_t})$ and $O(\sum_{t=1}^T{\min\cbr{1, \alpha \Delta_t}})$, respectively.

\begin{algorithm}
\caption{QuFUR($\alpha$) for Nonlinear Regression \label{alg:3}}
\begin{algorithmic}[1]

\Require Hypothesis set $\Fcal$, time horizon $T$, parameters $\alpha, \delta, \eta$.
\State Labeled dataset $\Qcal\leftarrow \emptyset$.
\For {$t=1$ to $T$}
    \State Find $\hat{f}_t \leftarrow \argmin_{f \in \Fcal}{\sum_{i \in \Qcal}{(f(x_i)-y_i)^2}}$.
    \State Predict $\hat{f}_t(x_t)$.
    \State Define confidence set \[\Fcal_t \leftarrow \Bigg\{ f \in \Fcal: \sum_{i \in \Qcal}{(f(x_i)-\hat{f}_t(x_i))^2} \le \beta_{|\Qcal|}(\Fcal, \delta) \Bigg\},\] where $\beta$ is defined in Equation~\eqref{eq:beta_k}.
    \State Uncertainty $\Delta_t = \sup_{f_1, f_2 \in \Fcal_t}{\abs{f_1(x_t)-f_2(x_t)}^2}$.
    \State With probability $\min{\{1, \alpha \Delta_t\}}$, set $q_t=1$; otherwise set $q_t=0$.
    \If {$q_t=1$}
        \State Query $y_t$. $\Qcal \leftarrow \Qcal \bigcup \{t\}$.
    \EndIf
\EndFor

\end{algorithmic}
\end{algorithm}

We bound the regret of the algorithm on examples from domain $u$ in terms of domain complexity measure $R_u = \tilde{O}(\tilde{\eta}^2 d'_u \log{\Ncal(\Fcal, T^{-2},\| \cdot \|_\infty)})$,
where $\Ncal(\Fcal, \epsilon,\| \cdot \|_\infty)$ is the $\epsilon$-\textit{covering number} of $\Fcal$ with respect to  $\|\cdot\|_\infty$. Specifically, we prove the following theorem.

\begin{theorem}
\label{thm:nonlin}
Suppose the example sequence $\cbr{x_t}_{t=1}^T$ has the following structure: $[T]$ has an admissible partition $\cbr{I_u: u \in [m]}$, where for each $u$, $|I_u|=T_u$, and the eluder dimension of $\Fcal$ w.r.t. $\cbr{x_t}_{t \in I_u}$ is $d'_u$.
Then, given label budget $B \le \tilde{O}(\sum_u \sqrt{R_u T_u} \min_u \sqrt{R_u/T_u} )$, Algorithm~\ref{alg:4} satisfies:\\ 
1. It has query complexity $Q \leq B$; \\
2. With probability $1-\delta$, its regret  $R=\tilde{O}((\sum_u{\sqrt{R_u T_u}})^2/B).$
\end{theorem}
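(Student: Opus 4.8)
The plan is to follow the two-level structure of the linear case: first prove a single-$\alpha$ guarantee for Algorithm~\ref{alg:3} that is the exact analogue of Theorem~\ref{thm:upper}, with the domain complexity $R_u$ playing the role of $\tilde{\eta}^2 d_u$, and then lift it to a fixed budget by the same master-algorithm aggregation used for Theorem~\ref{thm:fixBudget}. Concretely, I would show that a single copy of \qufur($\alpha$) for nonlinear regression satisfies, with probability $1-\delta$, a query complexity of $\tilde{O}(\sum_u \min\{T_u, \sqrt{\alpha R_u T_u}\})$ and a regret of $\tilde{O}(\sum_u \max\{R_u, \sqrt{R_u T_u/\alpha}\})$; in the relevant range of $\alpha$ these reduce to $Q = \tilde{O}(\sqrt{\alpha}\sum_u\sqrt{R_u T_u})$ and $R = \tilde{O}(\sum_u\sqrt{R_u T_u}/\sqrt{\alpha})$, mirroring Lemma~\ref{lem:subseq}.

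The single-$\alpha$ analysis rests on two pillars. First, confidence-set validity: using a self-normalized martingale concentration bound together with a union bound over a $T^{-2}$-cover of $\Fcal$, I would show that with high probability $f^* \in \Fcal_t$ for every $t$, where the radius $\beta_{|\Qcal|}(\Fcal,\delta)$ of Equation~\eqref{eq:beta_k} scales as $\tilde{O}(\tilde{\eta}^2 \log \Ncal(\Fcal, T^{-2}, \|\cdot\|_\infty))$; this is precisely where the covering number inside $R_u$ enters, and it must be made to hold uniformly against the adaptive adversary via a stopping-time argument on the history filtration $H_{t-1}$. Since $\hat{f}_t \in \Fcal_t$ by construction and $f^* \in \Fcal_t$ with high probability, the instantaneous regret obeys $(\hat{f}_t(x_t)-f^*(x_t))^2 \le \Delta_t$, so $R \le \sum_t \Delta_t$. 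Second, the eluder-dimension potential bound: invoking the width-sum lemma of \citet{VR} in place of the elliptical-potential lemma of the linear proof, I would bound the uncertainty accumulated on the \emph{queried} examples of each domain by $\sum_{t \in I_u,\, q_t=1}\Delta_t = \tilde{O}(\beta \cdot d'_u) = R_u$. The uncertainty on the \emph{unqueried} examples is controlled through the query rule: whenever $q_t=0$ we may assume $\alpha\Delta_t < 1$, so $\Delta_t \le \frac{1}{\alpha}\min\{1,\alpha\Delta_t\}$, which ties the unqueried regret to the randomized query budget and yields the $\sqrt{R_u T_u/\alpha}$ term after a Cauchy--Schwarz step and a martingale concentration relating realized queries to $\sum_t \min\{1,\alpha\Delta_t\}$.

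With the single-$\alpha$ guarantee in hand, the fixed-budget statement follows as in Theorem~\ref{thm:fixBudget}. The master Algorithm~\ref{alg:4} runs $k = O(\log T)$ copies on the grid $\{2^i/T^2\}$, each capped at $\lfloor B/k\rfloor$ queries, so the total query complexity is at most $B$, giving claim~1. For claim~2, realizability implies that predicting from the union of all queried labels can only help, so the master's regret is no larger than that of any single copy whose confidence set contains the master's. It therefore suffices to exhibit a grid point $\alpha_i \approx B^2/(\sum_u\sqrt{R_u T_u})^2$ whose copy makes $\tilde{\Theta}(B)$ queries and hence incurs regret $\tilde{O}((\sum_u\sqrt{R_u T_u})^2/B)$; the budget hypothesis $B \le \tilde{O}(\sum_u\sqrt{R_u T_u}\,\min_u\sqrt{R_u/T_u})$ guarantees that this optimizing $\alpha$ lies inside the grid and that the per-domain query probabilities remain at most one.

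I expect the main obstacle to be the eluder-dimension bound on $\sum_{t\in I_u}\Delta_t$ under randomized, data-dependent querying against an adaptive adversary with admissible domains. The width-sum lemma of \citet{VR} is stated for a fixed sequence in which every observation updates the confidence set, whereas here $\Fcal_t$ shrinks only on the random subset $\{t : q_t=1\}$, the query indicators are Bernoulli with history-dependent means $\min\{1,\alpha\Delta_t\}$, and the domain membership $u_t$ is itself revealed only adaptively (Definition~\ref{def-admissible}). Reconciling these requires a careful martingale argument relating the realized number of per-domain queries to $\alpha\sum_{t\in I_u}\Delta_t$, and ensuring that the eluder counting argument—bounding how often the width can exceed a threshold—still applies to the randomly selected subsequence while the adversary adapts to past predictions and noise. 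All remaining steps are routine adaptations of the linear-case calculations.
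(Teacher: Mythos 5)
Your proposal is correct and follows essentially the same route as the paper: the paper's proof likewise rests on confidence-set validity via Proposition 2 of \citet{VR} (giving $R \le \sum_t \Delta_t$), a domain-restricted eluder width-sum bound $\sum_{t \in I_u} q_t \Delta_t \le R_u$ (its Lemma~\ref{lem:VR3}, generalizing Proposition 3 of \citet{VR} to the queried subsequence within each domain), and reuse of the Freedman/master-algorithm machinery of Theorem~\ref{thm:fixBudget} with $\tilde{\eta}^2 d_u$ replaced by $R_u$, with admissibility invoked exactly as you say to preserve the martingale structure. The only differences are cosmetic: the paper proves the fixed-budget statement directly and obtains the single-$\alpha$ guarantee (Theorem~\ref{thm:alg3}) as a corollary, whereas you go the other way, and the ``obstacle'' you flag about applying the width-sum lemma under randomized querying is resolved in the paper by a purely deterministic counting argument restricted to the queried subsequence (both the widths and the confidence sets involve only queried data), not by an additional martingale argument.
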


The proof of the theorem can be found in Appendix~\ref{sec:proof_nonlin}.
Specializing the theorem to linear hypothesis class $\Fcal =\{\inner{x}{ \theta}:\theta \in \RR^d, \|\theta\|_2 \le 1\}$, if $\Xcal_u$ is a subset of a $d_u$-dimensional subspace of $\RR^d$, we have $\dim_u^E(\Fcal, 1/T_u^2) =\tilde{O}(d_u)$ and $\log \Ncal(\Fcal, 1/T_u^2, \|\cdot\|_\infty)=\tilde{O}(d)$, implying $R_u = \tilde{O}( \tilde{\eta}^2 d_u d)$, which implies that $R =\tilde{O}( \tilde{\eta}^2 d (\sum_u{\sqrt{d_u T_u}})^2/B)$. Compared to Theorem~\ref{thm:fixBudget}, we conjecture that the additional factor $d$ is due to the increased difficulty with adaptive adversaries.

\section{Experiments}
We evaluate the query-regret tradeoffs of \qufur, the uniform query baseline (Section~\ref{sec:baselines}), and naive greedy query (i.e., always query until labeling budget is exhausted) on two linear regression and two classification tasks. Although \qufur is designed for regression, experiments show that the same query strategy also achieves competitive performance on high-dimensional multi-class classifications tasks. See Appendix~\ref{sec:exp_details} for more details.

\begin{figure}[ht]
\begin{center}
\includegraphics[width=0.49\columnwidth]{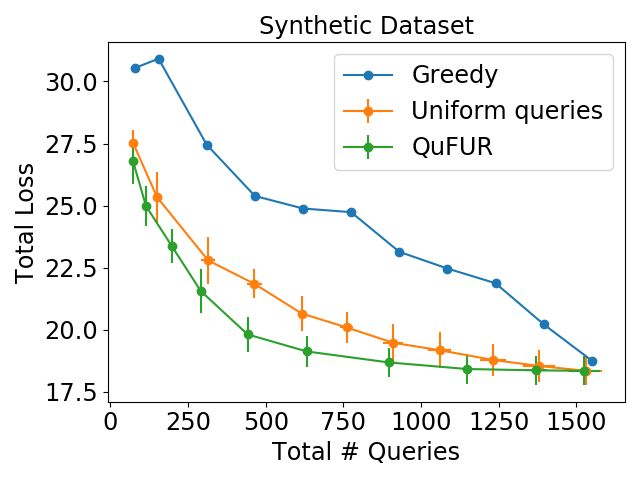}
\includegraphics[width=0.49\columnwidth]{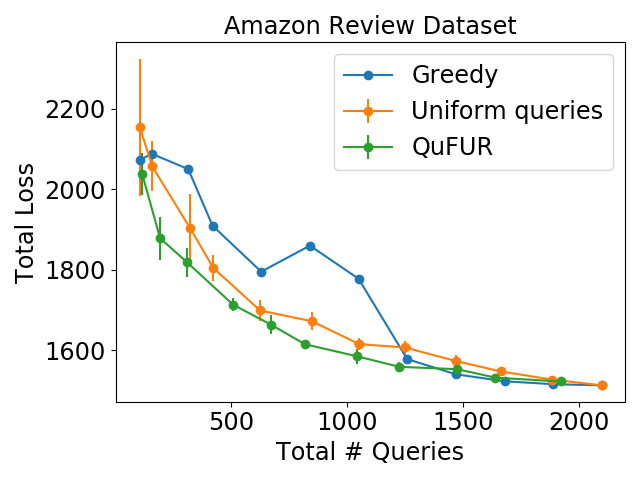}
\includegraphics[width=0.49\columnwidth]{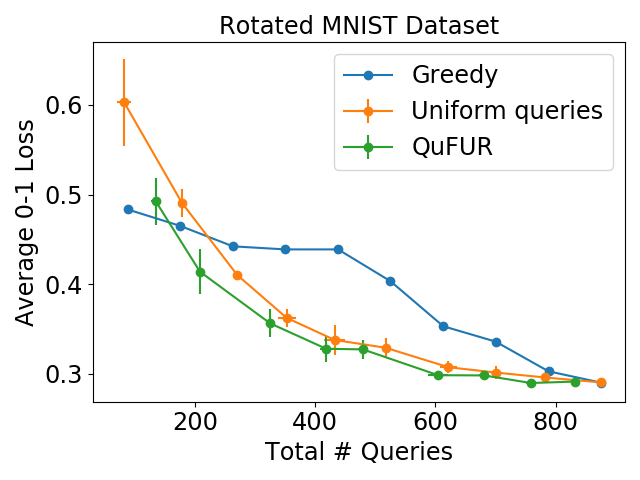}
\includegraphics[width=0.49\columnwidth]{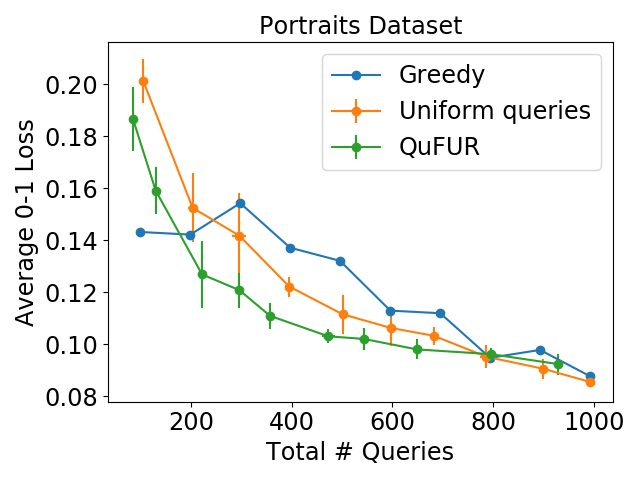}
\caption{Total squared loss (for regression tasks) / average 0-1 error (for classification tasks) vs. total number of queries in synthetic, Amazon reviews, rotated MNIST, and portraits datasets. Error bars show stddev across 5 runs. QuFUR has the best tradeoffs.} 
\label{fig:comparison}
\end{center}
\end{figure}

\textbf{Synthetic dataset} is a regression task where the label is generated via $y_t = x_t^\top \theta^*+\xi_t$. Inputs $x_i$'s come from 20 domains that are orthogonal linear subspaces with $d=88$. Each domain $u$ has either $T_u=100$ and $d_u=6$, or $T_u=50$ and $d_u=3$. $\theta^*$ is a random vector on the unit sphere in $\RR^d$. For any $x_i$ in domain $u$, $x_i = V_u z_i$ where $V_u$ is an orthonormal basis of $\Xcal_u$, and $z_i$ is drawn from the unit sphere in $\RR^{d_u}$. Noise $\xi_t$'s are iid zero-mean Gaussian with variance $\eta^2=0.1$.

\textbf{Amazon review dataset}~\citep{mcauley2015image} is a regression task where we predict ratings from 1 to 5 based on review text. Reviews come from 3 topics / domains: automotive, grocery, video games. We assume that the domains come in succession, with durations $300, 600, 1200$. Each review is encoded as a 768-dimensional vector --- the average BERT embedding~\citep{devlin2018bert} of each word in the review. Each domain uses a subset of the vocabulary, so the embeddings within the domain reside in a subspace ~\citep{Reif}. The sub-vocabularies are of smaller size (and largely disjoint),  motivating our low-dimensional subspace structure for linear models. To check that realizability is a reasonable assumption, we verify that offline linear regression on all domains achieve an MSE of 0.62. 

\begin{figure}[ht]
\begin{center}
\includegraphics[width=\columnwidth]{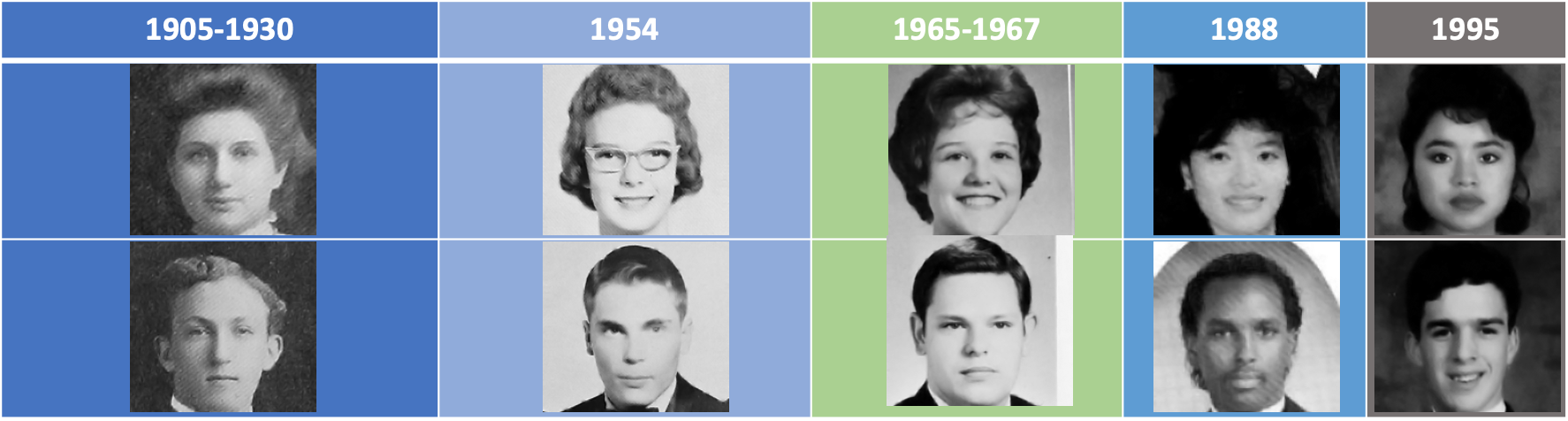}
\caption{Illustration of modified portraits dataset. Block lengths (not drawn to scale) indicate domain durations, which are unknown to the learner. Facial features for each gender shift over time.
} 
\label{fig:portaits_ilt}
\end{center}
\vspace{-0.5cm}
\end{figure}

\textbf{Rotated MNIST dataset}~\citep{MNIST} is a 10-way classification task. We create 3 domains via rotating the images 60, 30, and 0 degrees. Domain durations are $500, 250$, and $125$ in Figure~\ref{fig:comparison}.
We check that a linear classifier trained on all domains obtain 100\% training accuracy.

\textbf{Portraits dataset}~\citep{ginosar2015century, kumar2020understanding} contains photos of high school
seniors taken across 1900s-2010s, labeled by gender. We sort the photos chronologically, and divide into 5 periods with 8000 photos each. We pick the first $\{512, 256, 128, 64, 32\}$ photos from each period to obtain 5 domains (Figure~\ref{fig:portaits_ilt}). We check that a linear classifier trained on all domains obtain 99\% training accuracy.

\begin{figure}[ht]
\begin{center}
\includegraphics[width=0.32\columnwidth]{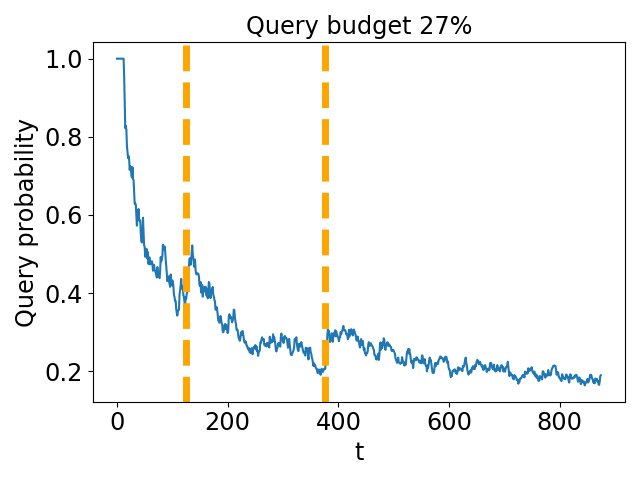}
\includegraphics[width=0.32\columnwidth]{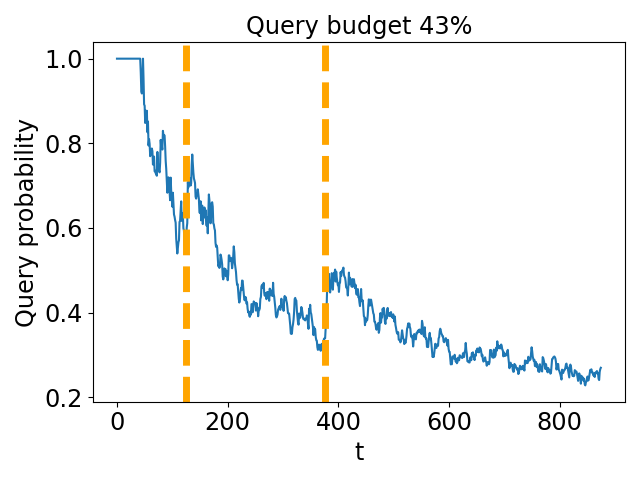}
\includegraphics[width=0.32\columnwidth]{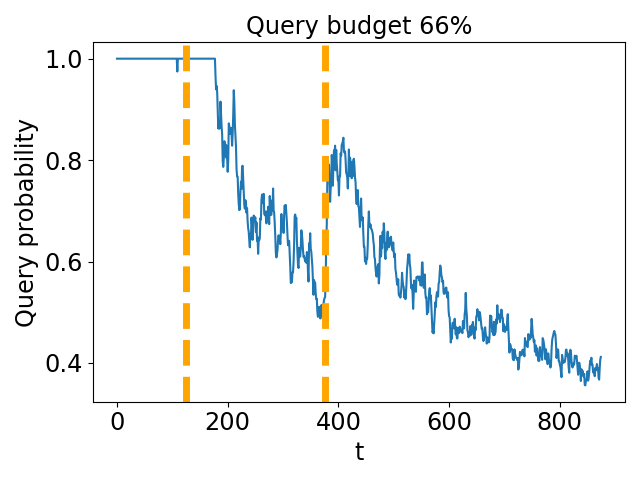}
\caption{\qufur's query probability in the rotated MNIST experiment with domain durations 125 / 250 / 500, when alpha is set to 0.25 (\textbf{left}), 0.5 (\textbf{middle}), 1 (\textbf{right}). \qufur queries more frequently upon domain shift.} 
\label{fig:query_prob}
\end{center}
\vspace{-0.45cm}
\end{figure}

\paragraph{Results.} We run QuFUR($\alpha$) for $\alpha$ sweeping an appropriate range for each dataset, and uniform queries with probability $\mu \in [0.05, 1]$. Figure~\ref{fig:comparison} shows that QuFUR achieves the lowest total regret under the same labeling budget across all datasets. Figure~\ref{fig:query_prob} shows that \qufur's query probability abruptly rises upon domain shifts. We choose highly heterogeneous domain durations since our theory predicts that \qufur is likely to have the most savings in such situations. We show in Appendix~\ref{sec:exp_details} that in other setups, \qufur still has competitive performance.
\section{Conclusion}

We formulate a novel task of active online learning with latent domain structure. We propose a surprisingly simple algorithm that adapts to domain shifts, and give matching upper and lower bounds in a wide range of domain structure specifications for linear regression. The strategy is readily generalizable to other problems, as we did for non-linear regression, simply relying on a suitable uncertainty estimate for unlabeled data. We believe that our problem and solution can spur future work on making active online learning more practical.

\section*{Acknowledgements}
We would like to thank Alina Beygelzimer, Akshay Krishnamurthy, Rob Schapire and Steven Wu for helpful discussions on active learning in the online setting and thank Chelsea Finn and Sergey Levine for discussing the motivation of the proposed questions. YC is supported by Stanford Graduate Fellowship. HL is supported by NSF Awards IIS-1755781 and IIS-1943607. TM is partially supported by the Google Faculty Award, Stanford Data Science Initiative, Stanford Artificial Intelligence Laboratory, and Lam Research. CZ acknowledges the startup funding from the University of Arizona for support.

\bibliographystyle{plainnat}
\bibliography{references}

\newpage
\appendix
\section{Proofs for upper bounds}

\subsection{Proof of Theorem~\ref{thm:upper}}
\label{sec:proof_upper}

We provide the proof of Theorem~\ref{thm:upper} in this section. We focus on regret and query complexity bounds on one domain $I_u$, and sum over domain $u$ to obtain Theorem~\ref{thm:upper}. Recall that we define the interaction history between the learner and the environment up to time $t$ be $H_t = \{x_{1:t}, f_{1:t}, \xi_{1:t}\}$; we abbreviate $\EE[\cdot|x_t, H_{t-1}]$ as $\EE_{t-1}[\cdot]$. 

The following lemma upper bounds the regret with sum of uncertainty estimates, $\Delta_t = \tilde{\eta}^2 \min\del{1,  \| x_t \|_{M_t^{-1}}^2}$. A similar lemma has appeared in~\citet[Lemma 1]{Bianchi}.

\begin{lemma}
\label{lem:subseq}
In the setting of Theorem~\ref{thm:upper}, with probability $1-\frac{\delta}{2}$, for all $t \in [T]$,
$(\hat{y}_t - \inner{\theta^*}{x_t})^2  = \tilde{O}\del{{\Delta_t}}$.
\end{lemma}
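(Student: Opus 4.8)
The plan is to bound the instantaneous regret by first stripping off the clipping, then decomposing the prediction error $\inner{\hat\theta_t - \theta^*}{x_t}$ into a deterministic bias term and a stochastic noise term, each controlled by the width $\|x_t\|_{M_t^{-1}}$. First I would exploit the clipping: since $f^*(x_t) = \inner{\theta^*}{x_t} \in [-1,1]$ and $\hat y_t = \clip(\inner{\hat\theta_t}{x_t})$ is the projection of $\inner{\hat\theta_t}{x_t}$ onto $[-1,1]$, and projection onto an interval is non-expansive toward any point already lying in it, we get $|\hat y_t - f^*(x_t)| \le |\inner{\hat\theta_t - \theta^*}{x_t}|$. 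I would also dispose of the easy regime $\|x_t\|_{M_t^{-1}}^2 \ge 1$ at once: there $\Delta_t = \tilde{\eta}^2 \ge 1$ while $(\hat y_t - f^*(x_t))^2 \le 4$ because both terms lie in $[-1,1]$, so the claim holds with room to spare. Hence it remains to treat $\|x_t\|_{M_t^{-1}}^2 < 1$, where $\Delta_t = \tilde{\eta}^2 \|x_t\|_{M_t^{-1}}^2$, and to show $\inner{\hat\theta_t - \theta^*}{x_t} = \tilde{O}(\tilde{\eta}\,\|x_t\|_{M_t^{-1}})$.

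Next I would use the closed form $\hat\theta_t = M_t^{-1} X_{\Qcal_t}^\top Y_{\Qcal_t}$ together with $y_i = \inner{\theta^*}{x_i} + \xi_i$ and $M_t = \lambda I + \sum_{i \in \Qcal_t} x_i x_i^\top$ to write $\hat\theta_t - \theta^* = -\lambda M_t^{-1}\theta^* + M_t^{-1}\sum_{i \in \Qcal_t}\xi_i x_i$. Pairing with $x_t$ splits $\inner{\hat\theta_t - \theta^*}{x_t}$ into a bias term and a noise term. The bias term is deterministic and handled by Cauchy--Schwarz in the $M_t$-geometry: $|\lambda\, x_t^\top M_t^{-1}\theta^*| \le \sqrt{\lambda}\,\|\theta^*\|_2\,\|x_t\|_{M_t^{-1}} \le \|x_t\|_{M_t^{-1}}$, using $\sqrt{\lambda}\,\|\theta^*\|_2 = \|\theta^*\|_2/C \le 1$ from $\lambda = 1/C^2$ and $\|\theta^*\|_2 \le C$.

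The crux, and the step I expect to be the main obstacle, is the noise term $x_t^\top M_t^{-1}\sum_{i \in \Qcal_t}\xi_i x_i$, which I would bound by $\tilde{O}(\tilde{\eta}\,\|x_t\|_{M_t^{-1}})$ via a self-normalized concentration inequality for the martingale $\sum_i \xi_i x_i$. The difficulty is that the weights $x_t^\top M_t^{-1} x_i$ depend on the noise-dependent query decisions through $M_t$, so the sum is not a martingale in the natural time order; this is exactly what the self-normalized / method-of-mixtures machinery is designed to control. The key point that keeps the bound \emph{dimension-free} is that the adversary is oblivious: the directions $\cbr{x_t}_{t=1}^T$ are fixed before the interaction, so I only need the concentration to hold along these $T$ prescribed directions rather than uniformly over the whole unit sphere. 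A union bound over these $T$ fixed directions and over the $T$ time steps therefore costs only a $\log(T/\delta)$ factor, in sharp contrast to the $\sqrt{d}$ one would pay for a uniform-over-directions confidence ellipsoid. Combining the bias and noise bounds gives $|\inner{\hat\theta_t - \theta^*}{x_t}| = \tilde{O}(\tilde{\eta}\,\|x_t\|_{M_t^{-1}})$, and squaring yields $\tilde{O}(\tilde{\eta}^2\|x_t\|_{M_t^{-1}}^2) = \tilde{O}(\Delta_t)$, closing both regimes. I expect this dimension-free localization to be precisely what breaks down against an adaptive adversary, consistent with the extra factor of $d$ that appears in Theorem~\ref{thm:nonlin}.
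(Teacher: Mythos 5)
Your overall skeleton --- stripping the clipping by non-expansiveness of the projection onto $[-1,1]$, splitting $\inner{x_t}{\hat\theta_t - \theta^*}$ into the bias term $-\lambda x_t^\top M_t^{-1}\theta^*$ and the noise term $x_t^\top M_t^{-1}\sum_{s\in\Qcal_t}\xi_s x_s$, Cauchy--Schwarz on the bias, and the trivial bound $4$ in the regime $\|x_t\|_{M_t^{-1}}^2 \ge 1$ --- coincides with the paper's proof. The gap is in the noise term, the step you yourself flag as the crux. Your diagnosis of the dependence structure is incorrect: in QuFUR the query probability $\min\{1,\alpha\Delta_s\}$ is a function only of $x_s$ and $M_s$, and $M_s$ is built from the queried $x$'s; no observed label ever enters a query decision. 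Hence, for an oblivious adversary, the entire query process $(q_1,\ldots,q_T)$ --- and with it $\Qcal_t$ and $M_t$ for every $t$ --- is independent of the noise $(\xi_1,\ldots,\xi_T)$. The weights $x_t^\top M_t^{-1} x_s$ are therefore \emph{not} noise-dependent, and no martingale machinery is needed: conditioning on the query process makes the weights deterministic, so $\sum_{s\in\Qcal_t}\xi_s\,(x_t^\top M_t^{-1}x_s)$ is a weighted sum of independent sub-Gaussians with variance proxy $\eta^2\, x_t^\top M_t^{-1}V_t M_t^{-1}x_t \le \eta^2\|x_t\|_{M_t^{-1}}^2$ (where $V_t = M_t-\lambda I \preceq M_t$), and a pointwise sub-Gaussian tail bound plus a union bound over the $T$ time steps yields exactly the dimension-free $\eta\sqrt{2\ln(4T/\delta)}\,\|x_t\|_{M_t^{-1}}$ that the paper uses.

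More importantly, the tool you propose in its place would not deliver the bound you claim. The self-normalized / method-of-mixtures inequality controls the direction-uniform quantity $\bigl\|\sum_{s\in\Qcal_t}\xi_s x_s\bigr\|_{M_t^{-1}}$ and inherently pays a factor $\sqrt{\log\det(M_t)-\log\det(\lambda I)} = \Theta(\sqrt{d\log T})$; pairing it with $\|x_t\|_{M_t^{-1}}$ via Cauchy--Schwarz gives $\tilde{O}(\eta\sqrt{d}\,\|x_t\|_{M_t^{-1}})$, i.e., precisely the extra $\sqrt{d}$ you set out to avoid. Your assertion that obliviousness lets you ``union bound over the $T$ fixed directions'' inside that framework is unsupported: for a fixed direction $v = x_t$, the sum $\sum_{s\in\Qcal_t}\xi_s\,(v^\top M_t^{-1}x_s)$ has weights depending on the \emph{terminal} matrix $M_t$, so it is not a martingale in $s$, and scalar self-normalized bounds (which require predictable weights) do not apply to it. The union bound over directions only becomes available after the weights have been made deterministic by the conditioning argument above --- at which point the self-normalized machinery is superfluous. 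So the proposal has a genuine hole at its central step; it is repaired by replacing the appeal to self-normalization with the observation that QuFUR's querying is label-blind, hence independent of the noise. Your closing remark is, however, on target: when queries or examples may depend on past noise (adaptive adversary), this conditioning is unavailable and one does pay extra dimension-like factors, consistent with Theorem~\ref{thm:nonlin}.
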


\begin{proof}[Proof of Lemma~\ref{lem:subseq}]

Denote the value of $M, \mathcal{Q}$ at the beginning of round $t$ as $M_t, \mathcal{Q}_t$. Let $\lambda=1/C^2$, $V_t = M_t-\lambda I= \sum_{s \in \Qcal_t} x_s x_s^\top$. Therefore, $\hat{\theta}_t = M_t^{-1} (\sum_{s \in \Qcal_t} x_s y_s) = M_t^{-1}(V_t \theta^* + \sum_{s \in \Qcal_t} \xi_s x_s)$, and
\begin{equation}
\inner{x_t}{\hat{\theta}_t - \theta^*} = \sum_{s \in \Qcal_t} \xi_s (x_t^\top M_t^{-1} x_s) - \lambda x_t^\top M_t^{-1} \theta^*.
\label{eqn:bias-var-decomp}
\end{equation}
The first term is a sum over a set of independent sub-Gaussian random variables, so it is $(\eta\sigma)^2$-sub-Gaussian with $\sigma^2 = \sum_{s \in \Qcal_t} x_t^\top M_t^{-1} x_s x_s^\top M_t^{-1} x_t \leq x_t^\top M_t^{-1} x_t$. Define event \[ E_t = \cbr{ \abs{\sum_{s \in \Qcal_t} \xi_s (x_t^\top M_t^{-1} x_s) } \leq \eta \sqrt{ 2 \ln{\frac{4 T}\delta} } \| x_t \|_{M_t^{-1}}  }. \]
By standard concentration of subgaussian random variables, we have $\PP(E_t) \geq 1 - \frac{\delta}{2T}$. Define $E = \cap_{t=1}^T E_t$. By union bound, we have $\PP(E) \geq 1 - \frac{\delta}{2}$. We henceforth condition on $E$ happening, in which case the first term of Equation~\eqref{eqn:bias-var-decomp} is bounded by $\eta \sqrt{ 2\ln{(4T/\delta)} } \| x_t \|_{M_t^{-1}}$ at every time step $t$.
    
Meanwhile, the second term of Equation~\eqref{eqn:bias-var-decomp} can be bounded by Cauchy-Schwarz:
\[ 
\abs{\lambda x_t^\top M_t^{-1} \theta^*} = \lambda \abs{\inner{ M_t^{-1/2} x_t }{ M_t^{-1/2} \theta^*}} \leq \lambda \| x_t \|_{M_t^{-1}} \| \theta^* \|_{M_t^{-1}} 
\leq \sqrt{\lambda} \| \theta^* \|_2 \| x_t \|_{M_t^{-1}},
\] which is at most $\| x_t \|_{M_t^{-1}}$, since $\| \theta^* \|_2 \leq C$ and $\lambda = 1/C^2$. 
Using the basic fact that $(A + B)^2 \leq 2A^2 + 2B^2$,
\[ ( \inner{x_t}{\hat{\theta}_t} - \inner{x_t}{\theta^*} )^2 \leq  (4 \eta^2 \ln{(4T/\delta)}+2) \| x_t \|_{M_t^{-1}}^2. \]
Since $\hat{y}_t = \clip(\inner{x_t}{\hat{\theta}_t} ) \in [-1, 1]$ and $\abs{\inner{x_t}{\theta^*}} \le 1$, we also trivially have $(\hat{y}_t - \inner{\theta^*}{x_t})^2 \leq 4$. Therefore,
    \begin{align*}
       (\hat{y}_t- \inner{\theta^*}{x_t})^2 
       &\leq
       \min\del{4, (4\eta^2 \ln{(4T/\delta')}+2) \| x_t \|_{M_t^{-1}}^2} \\
       &\leq
        (4 \eta^2 \ln{(2T/\delta')}+4) \cdot \min \del{1, \| x_t \|_{M_t^{-1}}^2} \\
       &\leq \tilde{O} \del{ \tilde{\eta}^2 \min \del{1, \| x_t \|_{M_t^{-1}}^2} } = \tilde{O}(\Delta_t). \qedhere
    \end{align*} 
\end{proof}

The following lemma bounds the sum of uncertainty estimates for $k$ \textit{queried} examples in a domain:
\begin{lemma}
\label{lem:logdet}
    Let $a_1, \dots, a_k$ be $k$ vectors in $\RR^d$. For $i \in [k]$, define $N_i = \lambda I+\sum_{j=1}^{i-1}{a_j a_j^\top}$.
    Then, for any $S \subseteq [k]$, 
    $\sum_{i \in S} \min\del{1, \|a_i\|_{N_i^{-1}}^2} \leq \ln(\det(\lambda I + \sum_{i \in S}{a_i a_i^\top} )/\det(\lambda I))$.
\end{lemma}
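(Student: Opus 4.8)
My plan is to prove this as an elliptical-potential (log-determinant) bound, combining two ingredients: a Loewner-monotonicity reduction that lets me discard every index outside $S$, and an exact telescoping of the log-determinant on the right-hand side. The first observation is that the preconditioner $N_i = \lambda I + \sum_{j<i} a_j a_j^\top$ dominates, in the positive semidefinite order, the matrix $\tilde{N}_i := \lambda I + \sum_{j \in S,\, j<i} a_j a_j^\top$ built only from the earlier vectors that lie in $S$, since $N_i - \tilde{N}_i$ is a sum of rank-one PSD terms. Hence $N_i^{-1} \preceq \tilde{N}_i^{-1}$, so $\|a_i\|_{N_i^{-1}}^2 \le \|a_i\|_{\tilde{N}_i^{-1}}^2$, and by monotonicity of $z \mapsto \min(1,z)$ it suffices to bound $\sum_{i \in S} \min(1, \|a_i\|_{\tilde{N}_i^{-1}}^2)$. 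This reduces the general-$S$ statement to the case where $S$ is the whole index set and the preconditioners are grown using only the $S$-vectors, which is the step that makes the subset formulation go through cleanly.

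Next I would enumerate $S = \{i_1 < \cdots < i_n\}$ and set $\tilde{N}_{i_\ell} = \lambda I + \sum_{m=1}^{\ell-1} a_{i_m} a_{i_m}^\top$, so that $\tilde{N}_{i_{\ell+1}} = \tilde{N}_{i_\ell} + a_{i_\ell} a_{i_\ell}^\top$ and the final matrix is $\lambda I + \sum_{i \in S} a_i a_i^\top$. The matrix determinant lemma gives $\det(\tilde{N}_{i_{\ell+1}}) = \det(\tilde{N}_{i_\ell})\,(1 + \|a_{i_\ell}\|_{\tilde{N}_{i_\ell}^{-1}}^2)$. Taking logarithms and telescoping over $\ell = 1,\dots,n$ turns the right-hand side of the lemma into an exact sum,
\[
\ln\frac{\det\bigl(\lambda I + \sum_{i \in S} a_i a_i^\top\bigr)}{\det(\lambda I)} = \sum_{\ell=1}^n \ln\bigl(1 + \|a_{i_\ell}\|_{\tilde{N}_{i_\ell}^{-1}}^2\bigr).
\]
Combined with the reduction of the first paragraph, it therefore suffices to establish the inequality term by term: $\min(1, z_\ell) \le \ln(1 + z_\ell)$ with $z_\ell = \|a_{i_\ell}\|_{\tilde{N}_{i_\ell}^{-1}}^2 \ge 0$.

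The main obstacle — and essentially the only place any real work happens — is this pointwise scalar comparison between the per-step penalty $\min(1,z)$ and the per-step log-determinant increment $\ln(1+z)$, and the delicate point is getting it without losing a constant factor on the right-hand side. I would handle it by a short calculus argument that treats the saturated regime (where $\min(1,z)=1$) and the unsaturated regime separately, using concavity of $z \mapsto \ln(1+z)$ to control it against the affine comparator in the small-$z$ case. Summing the term-wise bound over $\ell$ and chaining back through the monotonicity reduction yields $\sum_{i \in S}\min(1,\|a_i\|_{N_i^{-1}}^2) \le \ln\bigl(\det(\lambda I + \sum_{i\in S} a_i a_i^\top)/\det(\lambda I)\bigr)$, which is exactly the claimed inequality. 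The determinant telescoping and the Loewner reduction are purely mechanical, so all of the difficulty is concentrated in making the scalar inequality tight.
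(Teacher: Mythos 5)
Your overall architecture---the Loewner reduction $N_i \succeq \tilde N_i$ (hence $\|a_i\|_{N_i^{-1}}^2 \le \|a_i\|_{\tilde N_i^{-1}}^2$) followed by the matrix-determinant-lemma telescoping $\ln\bigl(\det(\lambda I + \sum_{i \in S} a_i a_i^\top)/\det(\lambda I)\bigr) = \sum_{\ell=1}^n \ln\bigl(1+\|a_{i_\ell}\|_{\tilde N_{i_\ell}^{-1}}^2\bigr)$---is exactly the paper's route: the paper performs the same monotonicity reduction with $N_{i,S} \preceq N_i$ and then invokes the standard elliptical potential lemma \citep[Lemma 19.4]{lattimore2018bandit} as a black box for the telescoping step that you unfold by hand. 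The gap sits precisely where you concentrated all the difficulty: the term-wise inequality $\min(1,z) \le \ln(1+z)$ is \emph{false} for every $z \in (0, e-1)$. For $z \in (0,1]$ one has $\ln(1+z) < z = \min(1,z)$ (the map $z \mapsto \ln(1+z)$ is strictly concave with slope $1$ at $0$, so concavity works \emph{against} you in the small-$z$ regime, not for you), and at $z=1$ the claim would require $1 \le \ln 2 \approx 0.69$. No case split between saturated and unsaturated regimes can rescue this, because the failure is not a boundary artifact---it holds on the whole interval $(0, e-1)$.

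In fact the obstruction is not yours alone: the lemma as literally stated is off by a constant factor. Take $d=1$, $k=1$, $\lambda = 1$, $a_1 = 1$, $S = \{1\}$: then $N_1 = \lambda I$, the left-hand side is $\min(1,1) = 1$, while the right-hand side is $\ln 2 < 1$. The correct and standard statement carries a factor of $2$: one proves $\min(1,z) \le 2\ln(1+z)$ for all $z \ge 0$ (for $z \in [0,1]$, the function $g(z) = 2\ln(1+z) - z$ satisfies $g(0)=0$ and $g'(z) = (1-z)/(1+z) \ge 0$; for $z \ge 1$, $2\ln(1+z) \ge 2\ln 2 > 1$), which combined with your reduction and telescoping yields $\sum_{i \in S} \min\bigl(1, \|a_i\|_{N_i^{-1}}^2\bigr) \le 2\ln\bigl(\det(\lambda I + \sum_{i \in S} a_i a_i^\top)/\det(\lambda I)\bigr)$---precisely the form of the cited Lemma 19.4. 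Since this lemma is only ever used inside $\tilde{O}(\cdot)$ bounds (e.g., to obtain $\sum_{t \in I_u} q_t \Delta_t = \tilde{O}(\tilde{\eta}^2 d_u)$), the extra factor of $2$ is harmless everywhere downstream: with that single repair your proof is correct and self-contained, whereas the claimed factor-free inequality cannot be proved because it is not true.
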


\begin{proof}[Proof of Lemma~\ref{lem:logdet}]
We denote by $N_{i,S} = \lambda I+ \sum_{j \in S: j \leq i-1} {a_j a_j^\top}$. As $S$ is a subset of $[k]$, we have that $N_{i,S} \preceq N_i$. Consequently, $\|a_i\|_{N_i^{-1}} \leq \| a_i \|_{N_{i,S}^{-1}}$. Therefore,
\[ \sum_{i \in S} \min\del{1, \|a_i\|_{N_i^{-1}}^2}  
   \leq  \sum_{i \in S} \min\del{1, \|a_i\|_{N_{i,S}^{-1}}^2} 
   \leq \ln\del{\frac{\det(\lambda I + \sum_{i \in S}{a_i a_i^\top} )}{\det(\lambda I)}}, \]
 where the second inequality is well-known~\citep[see e.g.][ Lemma 19.4]{lattimore2018bandit}.
\end{proof}

\begin{proof}[Proof of Theorem~\ref{thm:upper}]
Let $p_t = \min(1, \alpha \Delta_t)$ be the learner's query probability at time $t$; it is easy to see that $\EE_{t-1} \sbr{q_t} = p_t$.

Let random variable $Z_t = q_t \Delta_t$. We have the following simple facts: 
\begin{enumerate}
    \item $Z_t \le \tilde{\eta}^2$,
    \item $\EE_{t-1} Z_t = p_t \Delta_t$,
    \item $\EE_{t-1} Z_t^2 \leq \tilde{\eta}^2 \cdot \EE_{t-1} Z_t \leq  \tilde{\eta}^2 p_t \Delta_t$.
\end{enumerate}
For every $u \in [m]$, define event 
\begin{equation}
F_u = 
\cbr{
\abs{ \sum_{t \in I_u} p_t \Delta_t - \sum_{t \in I_u} q_t \Delta_t }
\leq  O\left( \tilde{\eta}
\sqrt{\sum_{t \in I_u} p_t \Delta_t \ln\frac{T}{\delta} }
+  \tilde{\eta}^2 \ln\frac{T}{\delta} \right)
}.
\label{eqn:lc-freedman}
\end{equation}
Applying Freedman's inequality to $\cbr{Z_t}_{t \in I_u}$~\citep[see e.g.][Lemma 2]{bartlett2008high}, we have that $\PP(F_u) \geq 1- \frac{\delta}{4m}$. 

Similarly, define 
\begin{equation}
G = 
\cbr{
\abs{ \sum_{t=1}^T p_t - \sum_{t=1}^T q_t }
\leq  O\left( 
\sqrt{\sum_{t=1}^T p_t \ln\frac{T}{\delta} }
+ \ln\frac{T}{\delta} \right)
}.
\label{eqn:lc-freedman-2}
\end{equation}
Applying Freedman's inequality to $\cbr{q_t}_{t \in I_u}$, we have that $\PP(G) \geq 1- \frac{\delta}{4}$. 

Furthermore, define $H = E \cap (\cap_{u=1}^m F_u) \cap G $, where $E$ is the event defined in the proof of Lemma~\ref{lem:subseq}. By union bound, $\PP(H) \geq 1 - \delta$. We henceforth condition on $H$ happening. 

By the definition of $F_u$, 
Solving for $\sum_{t \in I_u} p_t \Delta_t$ in Equation~\eqref{eqn:lc-freedman}, we get that
\begin{equation}
\sum_{t \in I_u} p_t \Delta_t 
= \tilde{O}\left( \sum_{t \in I_u} q_t \Delta_t + \tilde{\eta}^2 \right). 
\label{eqn:lc-solved}
\end{equation}

Using Lemma~\ref{lem:logdet} with $\{a_i\}_{i=1}^k = \cbr{x_t}_{t \in \Qcal_T}$, and $S = I_u \cap \Qcal_T$, we get that 
\begin{align*}
  \sum_{t \in I_u} q_t \Delta_t 
  & \le
  \tilde{\eta}^2 \cdot \ln\det\del{ I + C^2 \sum_{t \in I_u \cap \Qcal_T} x_t x_t^\top } \\
  & \le 
  2\tilde{\eta}^2 d_u \ln{\left(1+C^2 T_u/d_u\right)}  = \tilde{O}(\tilde{\eta}^2 d_u ).
\end{align*}
In combination with Equation~\eqref{eqn:lc-solved}, we have $\sum_{t \in I_u} p_t \Delta_t =\tilde{O}(\tilde{\eta}^2 d_u )$.

We divide the examples in domain $u$ into high and low risk subsets with index sets $I_{u,+}$ and $I_{u,-}$ (abbrev. $I_+$ and $I_-$ hereafter). Formally, 
\[ I_+=\{t \in I_u: \alpha \Delta_t>1 \}, \quad I_-=I-I_+.\]

We consider bounding the regrets and the query complexities in these two sets respectively:
\begin{enumerate}
\item For every $t$ in $I_+$, as $p_t = 1$, label $y_t$ is queried, so
\[
\sum_{t \in I_+}{\Delta_t} 
= \sum_{t \in I_+}{q_t \Delta_t}
\leq \sum_{t \in I_u} q_t \Delta_t 
=\tilde{O}(\tilde{\eta}^2 d_u).
\]
Since for every $t$ in $I_-$, $\Delta_t > 1/\alpha$, we have $\sum_{t \in I_+}{\Delta_t}  > |I_+|/\alpha$. This implies that $\sum_{t \in I_+} p_t = |I_+| =\tilde{O}( \alpha \tilde{\eta}^2 d_u)$.

\item For every $t$ in $I_-$, $p_t = \alpha \Delta_t$. Therefore, $\sum_{t \in I_-}{\alpha \Delta^2_t } = \sum_{t \in I_-}{p_t \Delta_t } \leq \sum_{t \in I_u}{p_t \Delta_t } = \tilde{O}(\tilde{\eta}^2 d_u)$. By Cauchy-Schwarz, and the fact that $\abs{I_-} \leq T_u$, we get that $\sum_{t \in I_-}{\Delta_t} \leq \sqrt{\abs{I_-} \cdot (\sum_{t \in I_-}{\Delta^2_t }) }  = \tilde{O}(\tilde{\eta} \sqrt{d_u T_u/\alpha})$. 

Consequently, $\sum_{t \in I_-} p_t= \sum_{t \in I_-}{\alpha \Delta_t} \leq \tilde{O}(\tilde{\eta} \sqrt{\alpha d_u T_u})$. 

\end{enumerate}

Summing over the two cases, we have
\[ 
   \sum_{t \in I_u} p_t
   \leq
   \otil{\alpha \tilde{\eta}^2 d_u + \tilde{\eta} \sqrt{\alpha d_u T_u}},
   \quad
   \sum_{t \in I_u} \Delta_t 
   \leq 
   \otil{ \tilde{\eta}^2 d_u + \tilde{\eta} \sqrt{d_u T_u/\alpha}},
\]
If $\alpha \le \frac1{\tilde{\eta}^2} \frac{T_u}{d_u}$, we have $\alpha \tilde{\eta}^2 d_u \leq \tilde{\eta} \sqrt{\alpha d_u T_u}$, otherwise we use the trivial bound $\sum_{t \in I_u} p_t \leq T_u$. Therefore, the above bounds can be simplified to
\begin{equation}
   \sum_{t \in I_u} p_t
   \leq
   \otil{\min\{T_u, \tilde{\eta} \sqrt{\alpha d_u T_u}\}},
   \quad
   \sum_{t \in I_u} \Delta_t 
   \leq 
   \otil{\max\{\tilde{\eta}^2 d_u,\tilde{\eta} \sqrt{d_u T_u/\alpha}\}}.
   \label{eqn:grt-exp}
\end{equation}
For the query complexity, from the definition of event $G$, applying AM-GM inequality on Equation~\eqref{eqn:lc-freedman-2}, we also have 
\[ 
Q = \sum_{t=1}^T q_t 
= \tilde{O}\left( \sum_{t=1}^T p_t + 1 \right)
= \tilde{O}\left(  \sum_{u=1}^m \min\{T_u, \tilde{\eta} \sqrt{\alpha d_u T_u}\} + 1 \right). 
\]

For the regret guarantee, we have by the definition of event $E$ and Lemma~\ref{lem:subseq} that
\[
\sum_{t=1}^T (\hat{y}_t - \inner{\theta^*}{x_t})^2 
\leq
\otil{ \sum_{t=1}^T \Delta_t^2}
=
\otil{ \sum_{u=1}^m \del{ \sum_{t \in I_u} \Delta_t^2 } }.
\]
Using the second inequality of Equation~\eqref{eqn:grt-exp}, we get
\[ 
\sum_{t=1}^T (\hat{y}_t - \inner{\theta^*}{x_t})^2 
\leq
\tilde{O}\del{ \sum_{u=1}^m \max\{\tilde{\eta}^2 d_u,\tilde{\eta} \sqrt{d_u T_u/\alpha}\}} .
\]
The theorem follows. 
\end{proof}

\subsection{Proof of Theorem~\ref{thm:fixBudget}}
\label{sec:proof_fixBudget}

Before going into the proof, we set up some useful notations. 
Define $I = \cbr{0,1,\ldots,k}$ as the index set of the $\alpha_i$'s of interest. 
Recall the number of copies $k = 1+ \lceil 3\log T \rceil \leq 2 + 3\log T$. Recall also that $B' = \lfloor B / k \rfloor$ is the label budget for each copy.

Let $p_t^i=\min(1, \alpha_i \Delta_t)$ be the intended query probability of copy $i$ at time step $t$; 
let $r^i_t \sim {\rm Bernoulli}(p_t^i)$ be the attempted query decision of copy $i$ at time step $t$;
let $A^i_t = \one\sbr{\sum_{j=1}^{t-1}{r^i_j} < B'}$, i.e. the indicator that copy $i$ has not reached its budget limit at time step $t$. 
Using this notation, the actual query decision of copy $i$, $q_t^i$, can be written as $r_t^i A_t^i$.

We have the following useful observation that gives a sufficient condition for copy $i$ to be within its label budget:
\begin{lemma}
Given $i \in [k]$, if $\sum_{t=1}^T{A_t^i r^i_t} <B'$, the following items hold:
\begin{enumerate}
    \item $\sum_{t=1}^T{r^i_t} <B'$.
    \item For all $t \in [T]$, $A_t^i = 1$, i.e. copy $i$ does not run of label budget throughout.
\end{enumerate}
\label{lem:trick}
\end{lemma}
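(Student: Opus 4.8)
The plan is to prove both items essentially simultaneously by tracking the partial sums $\sum_{j=1}^{t-1} A_j^i r_j^i$ as $t$ increases, and arguing that under the hypothesis $\sum_{t=1}^T A_t^i r_t^i < B'$ the indicator $A_t^i$ never drops to zero. First I would unpack the definition: $A_t^i = \one[\sum_{j=1}^{t-1} r_j^i < B']$, whereas the quantity controlled by the hypothesis involves the \emph{actual} queries $A_j^i r_j^i$, so the subtlety is that $A$ itself appears inside the sum. I would therefore argue by contradiction or by a minimal-counterexample/induction argument: suppose for the sake of contradiction that there is some first time $t_0$ at which $A_{t_0}^i = 0$, i.e.\ $\sum_{j=1}^{t_0-1} r_j^i \ge B'$.

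The key observation is that for every time step $j < t_0$, minimality of $t_0$ forces $A_j^i = 1$, so on this initial segment the attempted queries and the actual queries coincide: $A_j^i r_j^i = r_j^i$ for all $j \le t_0 - 1$. Hence $\sum_{j=1}^{t_0-1} A_j^i r_j^i = \sum_{j=1}^{t_0-1} r_j^i \ge B'$. But this partial sum is bounded above by the full sum $\sum_{t=1}^T A_t^i r_t^i$ (all terms are nonnegative), which by hypothesis is strictly less than $B'$. This yields $B' \le \sum_{j=1}^{t_0-1} A_j^i r_j^i \le \sum_{t=1}^T A_t^i r_t^i < B'$, a contradiction. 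Therefore no such $t_0$ exists, establishing item~2 that $A_t^i = 1$ for all $t \in [T]$.

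Given item~2, item~1 is immediate: since $A_t^i = 1$ for every $t$, we have $r_t^i = A_t^i r_t^i$ throughout, so $\sum_{t=1}^T r_t^i = \sum_{t=1}^T A_t^i r_t^i < B'$ by hypothesis. Thus both conclusions follow from the single contradiction argument plus this substitution.

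The main obstacle, and the only place requiring care, is correctly handling the interplay between $A_t^i$ (which depends on the prefix of attempted queries $r_j^i$) and the actual queries $A_j^i r_j^i$ appearing in the hypothesis; one must be careful not to circularly assume what is being proved. The minimal-time-$t_0$ device cleanly resolves this: before the hypothesized first failure, the two notions of ``query'' provably agree, which is exactly what lets us convert the bound on actual queries into a bound on attempted queries and derive the contradiction. Everything else is a routine manipulation of nonnegative partial sums.
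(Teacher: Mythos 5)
Your proposal is correct and is essentially the same argument as the paper's: both are contradiction proofs hinging on the observation that on the prefix before the budget is exhausted, $A_j^i = 1$ so attempted and actual queries coincide, which forces $\sum_{t=1}^T A_t^i r_t^i \geq B'$. The only cosmetic difference is ordering --- you establish item~2 first (via the first time $A_{t_0}^i = 0$) and deduce item~1, whereas the paper establishes item~1 first (via the first $B'$ occurrences of $r_j^i = 1$) and deduces item~2.
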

\begin{proof}
Suppose for the sake of contradiction that $\sum_{t=1}^T{r^i_t} \geq B'$.
Consider the first $B'$ occurrences of $r_j^i = 1$; call them $J = \cbr{j_1, \ldots, j_{B'}}$. It can be seen that for all $j \in J$, $A_j^i = 1$. Therefore,
\[ 
\sum_{t=1}^T{A_t^i r^i_t}
\geq
\sum_{j \in J} {A_j^i r_j^i}
\geq 
\abs{J} = B',
\]
which contradicts with the premise that $\sum_{t=1}^T{A_t^i r^i_t} <B'$.

The second item immediately follows from the first item, as $\sum_{j=1}^{T}{r^i_j} < B'$ implies that $\sum_{j=1}^{t-1}{r^i_j} < B'$ for every $t \in [T]$.
\end{proof}

Complementary to the above lemma, we can also see that for every $i \in [k]$, $\sum_{t=1}^T{A_t^i r^i_t} = \sum_{t=1}^T q_t^i \leq B'$ is trivially true.
We next give a key lemma that generalizes Theorem~\ref{thm:upper}, and upper bounds $\sum_{t=1}^T{A_t^i r^i_t}$ for all $i$'s beyond the above trivial $B'$ bound. 

\begin{lemma} \label{lem:apriori}
There exists $C = \polylog(T,\frac1\delta) \geq 1$, such that
with probability $1-\delta/2$,
\[
\sum_{t=1}^T{A_t^i \Delta_t} \leq C \cdot \tilde{\eta} \sum_u{\sqrt{d_u T_u}/\sqrt{\alpha_i}}, \text{ and } 
\sum_{t=1}^T{A_t^i r_t^i} \leq C \cdot \tilde{\eta}\sqrt{\alpha_i}\sum_u{\sqrt{d_u T_u}},
\]
for every $i \in I$ such that 
$\alpha_i \in \intcc{ \frac{1}{\tilde{\eta}^2} \del{\frac{1}{\sum_u \sqrt{d_u T_u}}}^2, \frac{1}{\tilde{\eta}^2} \min_{u \in [m]} \frac{T_u}{d_u}}$.
\end{lemma}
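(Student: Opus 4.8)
The goal is to show that, simultaneously for all $\alpha_i$ in the ``interesting'' range, the quantity $\sum_{t=1}^T A_t^i \Delta_t$ (a truncated version of the regret proxy) and $\sum_{t=1}^T A_t^i r_t^i$ (a truncated version of the attempted query count) obey the same domain-structured bounds that Theorem~\ref{thm:upper} gives for a single standalone copy of $\qufur(\alpha_i)$. The subtlety is that here the matrix $M$ (and hence each $\Delta_t$) is shared across all $k$ copies and updated whenever \emph{any} copy triggers a query, so the individual copies are not running in isolation. The plan is to argue that this coupling only \emph{helps}: the shared $M$ dominates the $M$ any single copy would have on its own, so its uncertainty estimates are no larger, and thus the standalone upper bounds remain valid a fortiori.

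First I would fix $i$ in the stated range and replay the proof of Theorem~\ref{thm:upper} for copy $i$, but with two modifications. The modifications are (a) the query-probability sequence is $p_t^i = \min(1,\alpha_i\Delta_t)$ where $\Delta_t$ now uses the \emph{shared} $M$, and (b) we insert the budget-cap indicator $A_t^i$ wherever the original proof summed over time. The key structural input is Lemma~\ref{lem:logdet}: applying it with $\{a_i\}$ equal to the examples actually queried by the master (i.e.\ the support of the shared $M$) and $S = I_u \cap \Qcal_T$, we still get $\sum_{t \in I_u} q_t \Delta_t = \tilde O(\tilde\eta^2 d_u)$, since the log-determinant potential only depends on the rank-/volume-growth of the shared design matrix within subspace $u$, which is still controlled by $d_u$ and $T_u$. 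The $A_t^i$ factors only shrink the sums (they are $0/1$), so every inequality in the split into $I_+$ and $I_-$ goes through verbatim with $\Delta_t$ replaced by $A_t^i\Delta_t$ and $p_t$ by $A_t^i p_t^i$.

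Next I would handle the concentration step. Just as in Theorem~\ref{thm:upper}, I would define the random variables $Z_t^i = A_t^i r_t^i \Delta_t$ (and analogously $A_t^i r_t^i$ for the count), verify the same three moment facts ($Z_t^i \le \tilde\eta^2$, conditional mean $A_t^i p_t^i \Delta_t$, conditional second moment $\le \tilde\eta^2 A_t^i p_t^i \Delta_t$), and apply Freedman's inequality to relate $\sum_t A_t^i r_t^i\Delta_t$ to its predictable counterpart $\sum_t A_t^i p_t^i \Delta_t$. Since there are only $k = O(\log T)$ copies and finitely many domains, a union bound over $i \in I$ and $u \in [m]$ costs only an extra logarithmic factor, which is absorbed into the polylog constant $C$. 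Collecting the $I_+/I_-$ bounds over all $u$ and restricting to the stated range of $\alpha_i$ (so that $\alpha_i \tilde\eta^2 d_u \le \tilde\eta\sqrt{\alpha_i d_u T_u}$ for each $u$, exactly as in the simplification of Eq.~\eqref{eqn:grt-exp}) gives the two claimed bounds $\sum_t A_t^i\Delta_t = \tilde O(\tilde\eta \sum_u \sqrt{d_u T_u}/\sqrt{\alpha_i})$ and $\sum_t A_t^i r_t^i = \tilde O(\tilde\eta\sqrt{\alpha_i}\sum_u\sqrt{d_u T_u})$.

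The main obstacle I anticipate is making the ``shared $M$ dominates standalone $M$'' argument fully rigorous. Because the analysis in Theorem~\ref{thm:upper} crucially uses the equality $p_t = \min(1,\alpha\Delta_t)$ with $\Delta_t$ computed from that copy's own design matrix, I cannot simply cite Theorem~\ref{thm:upper} as a black box; I must re-derive the log-determinant bound for the shared $M$ and confirm that the decomposition into $I_+$ and $I_-$ (which depends on the event $\alpha_i\Delta_t \gtrless 1$, and hence on the shared $\Delta_t$) still yields the clean per-domain estimates. The care needed is that $A_t^i$ and $r_t^i$ are both adapted to the master's filtration, so Freedman's inequality applies, but the cap indicator $A_t^i$ is itself a nontrivial stopping-type object; I would argue it is predictable (measurable w.r.t.\ $H_{t-1}$) since it depends only on $\sum_{j<t} r_j^i$, which keeps the martingale structure intact.
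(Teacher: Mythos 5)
Your proposal is correct and follows essentially the same route as the paper's proof: Freedman's inequality applied to the budget-capped martingale differences $A_t^i(r_t^i-p_t^i)$ and $A_t^i(r_t^i-p_t^i)\Delta_t\one[t\in I_u]$ (with a union bound over $i$ and $u$), the key observation that copy $i$'s actual queries are dominated by the master's queries so that Lemma~\ref{lem:logdet} applied to the master's queried set yields $\sum_{t\in I_u}A_t^i r_t^i\Delta_t=\tilde{O}(\tilde{\eta}^2 d_u)$, and then the $I_+/I_-$ split with the upper end of the $\alpha_i$ range giving the simplification $\alpha_i\tilde{\eta}^2 d_u\le\tilde{\eta}\sqrt{\alpha_i d_u T_u}$. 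Two cosmetic remarks: your opening ``shared $M$ dominates standalone $M$'' framing is never actually needed (your own steps, like the paper's, work directly with the shared $\Delta_t$ throughout), and the lower end of the $\alpha_i$ range is what absorbs the additive $\tilde{O}(1)$ term from Freedman's inequality into $\tilde{\eta}\sqrt{\alpha_i}\sum_u\sqrt{d_u T_u}$, a detail your sketch leaves implicit.
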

\begin{proof}
 Applying Freedman's inequality to the martingale difference sequence $\{A_t^i(r_t^i-p_t^i) \}_{t=1}^T$, we get that with probability $1-\delta/4$,
\begin{align}
\sum_{t=1}^T {A_t^i r_t^i} = \tilde{O}\left(\sum_{t=1}^T {A_t^i p_t^i} + 1 \right).
    \label{freedman_eq1}
\end{align}

Applying Freedman's inequality to $\{A_t^i(r_t^i-p_t^i)\Delta_t \one[t \in I_u]\}_{t=1}^T$, and take a union bound over all $u \in [m]$, we get that with probability $1-\delta/4$,
\[
\sum_{t \in I_u}{A_t^i p_t^i \Delta_t} = \tilde{O}\left(\sum_{t \in I_u}{A_t^i r_t^i \Delta_t}+\tilde{\eta}^2\right).\]
Using Lemma~\ref{lem:logdet} with $\{a_i\}_{i=1}^k = \cbr{x_t}_{t \in \Qcal_T}$, and $S = I_u \cap \Qcal_T$ we get that, deterministically, $\sum_{t \in I_u}{A_t^i r_t^i \Delta_t} \le \sum_{t \in I_u}{q_t \Delta_t}=\tilde{O}(\tilde{\eta}^2 d_u)$. So with probability $1-\delta/4$,
\begin{align}
\sum_{t \in I_u}{A_t^i p_t^i \Delta_t} = \tilde{O}(\tilde{\eta}^2 d_I).
\label{freedman_eq2}
\end{align}

We henceforth condition on Equations~\eqref{freedman_eq1} and~\eqref{freedman_eq2} occuring, which happens with probability $1-\delta/2$ by union bound. 
Let $I_+=\{t \in I_u: \alpha_i \Delta_j>1\}$, and $I_-=I_u-I_+$.

\begin{enumerate}
\item For $I_+$, by Equation~\eqref{freedman_eq2}, $
\sum_{t \in I_+}{A_j^i \Delta_j}  =\tilde{O}(\tilde{\eta}^2 d_u) \implies \sum_{j \in I_+}{A_j^i p_j^i} =\tilde{O}( \alpha_i \tilde{\eta}^2 d_u)$.

\item For $I_-$, by Equation~\eqref{freedman_eq2}, $\sum_{j \in I_-}{A_j^i \alpha_i \Delta^2_j } = \sum_{j \in I_-}{A_j^i p_j \Delta_j } =\tilde{O}(\tilde{\eta}^2 d_u)$; this implies that $\sum_{j \in I_-}{A_j^i \Delta_j} = \tilde{O}(\tilde{\eta} \sqrt{d_u T_u/\alpha_i})$. 
In this event, we also have 
$\sum_{j \in I_-}{A_j^i p_j^i} 
= \sum_{j \in I_-}{A_j^i \alpha_i \Delta_j} = \tilde{O}(\tilde{\eta} \sqrt{d_u T_u \alpha_i})$.
\end{enumerate}

Summing over the two cases, we have
\[ 
   \sum_{t \in I_u} A_t^i p_t^i
   \leq
   \tilde{O}(\alpha_i \tilde{\eta}^2 d_u + \tilde{\eta} \sqrt{\alpha_i d_u T_u}),
   \quad
   \sum_{t \in I_u} A_t^i \Delta_t 
   \leq 
   \tilde{O}(\tilde{\eta}^2 d_u + \tilde{\eta} \sqrt{d_u T_u/\alpha_i}),
\]
By the assumption that $\alpha_i \le \frac1{\tilde{\eta}^2} \min_{u} \frac{T_u}{d_u}$, for every $u$, we have, $\alpha_i \tilde{\eta}^2 d_u \leq \tilde{\eta} \sqrt{\alpha_i d_u T_u}$. This implies that 
\begin{align}
   \sum_{t \in I_u} A_t^i p_t^i
   \leq
   \tilde{O}(\tilde{\eta} \sqrt{\alpha_i d_u T_u}),
   \quad
   \sum_{t \in I_u} A_t^i \Delta_t 
   \leq 
   \tilde{O}(\tilde{\eta} \sqrt{d_u T_u/\alpha_i}).
\end{align}
Summing over $u \in [m]$, we get
\[
    \sum_{t=1}^T A_t^i p_t^i
   \leq
   \tilde{O}(\tilde{\eta} \sum_{u=1}^m \sqrt{\alpha_i d_u T_u}),
   \quad
   \sum_{t=1}^T A_t^i \Delta_t 
   \leq 
   \tilde{O}(\tilde{\eta} \sum_{u=1}^m \sqrt{d_u T_u/\alpha_i}).
\]
Therefore, using Equation~\eqref{freedman_eq1}, we have
\[
   \sum_{t=1}^T A_t^i r_t^i 
   \leq 
   \tilde{O}\left(\sum_{t=1}^T {A_t^i p_t^i} + 1 \right)
   \leq
   \tilde{O}\del{ \tilde{\eta} \sum_{u=1}^m \sqrt{\alpha_i d_u T_u} + 1} 
   \leq 
   \tilde{O}\del{ \tilde{\eta} \sum_{u=1}^m \sqrt{\alpha_i d_u T_u}},
\]
where the last inequality uses the assumption that $\alpha_i \geq \frac{1}{\tilde{\eta}^2} \del{\frac{1}{\sum_u \sqrt{d_u T_u}}}^2$.
The lemma follows.
\end{proof}

We are now ready to prove Theorem~\ref{thm:fixBudget}.
\begin{proof}[Proof of Theorem~\ref{thm:fixBudget}]
First, the query complexity of \fbqufur is $B$ by construction, as the algorithm maintains $k$ copies of \qufur, and each copy consumes at most $B' = \lfloor B/k \rfloor$ labels.

We now bound the regret of \fbqufur. 
We consider $\wbar{B} = C k (\sum_u \sqrt{d_u T_u}) \cdot \min_{u \in [m]} \sqrt{T_u/d_u} = \otil{ (\sum_u \sqrt{d_u T_u}) \cdot \min_{u \in [m]} \sqrt{T_u/d_u} }$, where $C = \polylog(T,\frac1\delta) \geq 1$ is defined in Lemma~\ref{lem:apriori}.
We will show that if $B \in (0,\wbar{B}]$, with probability $1-\delta$, the regret of \fbqufur is at most $\tilde{O}\del{ \frac{\tilde{\eta}^2 ( \sum_u\sqrt{d_u T_u})^2 }{B} }$.

If $B <2C \tilde{\eta}^2 k$, the regret of the algorithm is trivially upper bounded by $4T$, which is clearly $\tilde{O}\del{ \frac{ \tilde{\eta}^2 ( \sum_u\sqrt{d_u T_u})^2 }{B} }$.
Therefore, throughout the rest of the proof, we consider $B \in [2C \tilde{\eta}^2 k, \wbar{B}]$.

Recall that $I = \cbr{\frac{2^i}{T^2}: i \in \cbr{0,1,\ldots,k}}$. We denote by $\alpha_{\min} = \frac{1}{T^2}$ the minimum element of $I$, and $\alpha_{\max} = \frac{2^k}{T^2} \geq T$ the maximum element of $I$.

Denote by
\[ 
i_B 
= 
\max\cbr{ i \in I: C \tilde{\eta} \sqrt{\alpha_i} \sum_{u=1}^m \sqrt{d_u T_u} < B'}
=
\max\cbr{ i \in I: 
\alpha_i < \del{\frac{B'}{C \tilde{\eta} \sum_u\sqrt{d_u T_u} }}^2
}.
\]
As $B \in [2C\tilde{\eta}^2 k, \wbar{B}]$, we have 
$\del{\frac{B'}{C \tilde{\eta} \sum_u\sqrt{d_u T_u} }}^2 \in (\alpha_{\min}, \alpha_{\max}]$. 
Indeed, $\del{\frac{B'}{C \tilde{\eta} \sum_u\sqrt{d_u T_u} }}^2 \leq \del{\frac{\wbar{B}}{C k \tilde{\eta} \sum_u\sqrt{d_u T_u} }}^2 \leq T \leq \alpha_{\max}$, 
$\del{\frac{B'}{C \tilde{\eta} \sum_u\sqrt{d_u T_u} }}^2 \geq \del{\frac{\tilde{\eta}}{\sum_u\sqrt{d_u T_u} }}^2 > \alpha_{\min}$, as $\sum_u \sqrt{d_u T_u} \leq \sum_u T_u = T$.

Therefore, by the definition of $i_B$, we have
\begin{equation}
\alpha_{i_B} \in \intco{ \frac12 \del{\frac{B'}{C \tilde{\eta} \sum_u\sqrt{d_u T_u} }}^2, \del{\frac{B'}{C \tilde{\eta} \sum_u\sqrt{d_u T_u} }}^2}
\label{eqn:ib}
\end{equation}
Again by our assumption on $B$, $\frac12 \del{\frac{B'}{C \tilde{\eta} \sum_u\sqrt{d_u T_u} }}^2 \geq
\tilde{\eta}^2 \del{\frac{1}{\sum_u \sqrt{d_u T_u}}}^2
\geq \frac{1}{\tilde{\eta}^2} \del{\frac{1}{\sum_u \sqrt{d_u T_u}}}^2$, 
$\del{\frac{B'}{C \tilde{\eta} \sum_u\sqrt{d_u T_u} }}^2 \leq \del{\frac{\wbar{B}}{C k \tilde{\eta} \sum_u\sqrt{d_u T_u} }}^2 \leq \frac{1}{\tilde{\eta}^2} \min_{u \in [m]} \frac{T_u}{d_u}$.
Therefore,
\[
\alpha_{i_B} \in \intcc{ \frac{1}{\tilde{\eta}^2} \del{\frac{1}{\sum_u \sqrt{d_u T_u}}}^2, \frac{1}{\tilde{\eta}^2} \min_{u \in [m]} \frac{T_u}{d_u}}.
\] 
Hence, the premises of Lemma~\ref{lem:apriori} is satisfied for $i = i_B$; this gives that with probability $1-\delta/2$,
\begin{equation}
\sum_{t=1}^T{A_t^{i_B} \Delta_t} \leq C \cdot \tilde{\eta} \sum_u{\sqrt{d_u T_u}/\sqrt{\alpha_{i_B}}}, 
\label{eqn:reg-ib}
\end{equation}
and
\begin{equation}
\sum_{t=1}^T{A_t^{i_B} r_t^{i_B}} \leq C \cdot \tilde{\eta}\sqrt{\alpha_{i_B}}\sum_u{\sqrt{d_u T_u}}.
\label{eqn:lc-ib}
\end{equation}
Now from Equation~\eqref{eqn:lc-ib} and the definition of $i_B$, we have
\[ 
\sum_{t=1}^T{A_t^{i_B} r_t^{i_B}} \leq C \cdot \tilde{\eta}\sqrt{\alpha_{i_B}}\sum_u{\sqrt{d_u T_u}} < B'.
\]
Applying Lemma~\ref{lem:trick}, we deduce that for all $t$ in $[T]$, $A_t^{i_B} = 1$. Plugging this back to Equation~\eqref{eqn:reg-ib}, we have
\begin{align*} 
\sum_{t=1}^T \Delta_t = & \sum_{t=1}^T{A_t^{i_B} \Delta_t}  \\
\leq & C \cdot \tilde{\eta} \sum_u{\sqrt{d_u T_u}/\sqrt{\alpha_{i_B}}} \\
\leq & \tilde{O}\del{  \frac{\tilde{\eta}^2 (\sum_u\sqrt{d_u T_u})^2 }{B} }.
\end{align*}
where the second inequality is from the lower bound of $\alpha_{i_B}$ in Equation~\eqref{eqn:ib}.

Combining the above observation with Lemma~\ref{lem:subseq}, along with the union bound, we get that with probability $1 - \delta$, 
\[
R = \sum_{t=1}^T (\hat{y}_t - \inner{\theta^*}{x_t})^2  = \otil{\sum_{t=1}^T{\Delta_t}} = \otil{ \frac{\tilde{\eta}^2 (\sum_u\sqrt{d_u T_u})^2 }{B} }.
\qedhere
\]
\end{proof}

\subsection{Proof of Theorem~\ref{thm:nonlin}}
\label{sec:proof_nonlin}

Define
\begin{align}
    \label{eq:beta_k}
 \beta_k=\beta_k(\Fcal, \delta) \defeq 8 \eta^2 \log{(4\Ncal(\Fcal, 1/T^2, \|\cdot\|_\infty)/\delta)}+2k/T^2 (16+\sqrt{2\eta^2\ln{({16k^2/\delta})}}),
\end{align}
and
\[ 
R_u \defeq \frac{T_u}{T^2}+4\min(d'_u, T_u)+4d'_u\beta_T\ln{T_u} =   
     \tilde{O}\del{ \eta^2 d'_u \log{\Ncal(\Fcal, T^{-2},\| \cdot \|_\infty)}}.
\]

Analogous to Theorem~\ref{thm:upper}, the following theorem provides the query and regret guarantees of of Algorithm~\ref{alg:3}. 
\begin{theorem}
\label{thm:alg3}
Suppose the example sequence $\cbr{x_t}_{t=1}^T$ has the following structure: $[T]$ has an admissible partition $\cbr{I_u: u \in [m]}$, where for each $u$, $|I_u|=T_u$, and the eluder dimension of $\Fcal$ w.r.t. $\cbr{x_t}_{t \in I_u}$ is $d'_u$. Suppose $\alpha \leq \frac{1}{\tilde{\eta}^2} \min_{u \in [m]} \frac{T_u}{R_u}$. With probability $1-\delta$, Algorithm~\ref{alg:3} satisfies:\\
1. Its query complexity $Q= \tilde{O} ( \tilde{\eta} \cdot \sqrt{\alpha} \sum_u \sqrt{R_u T_u}  ) $. \\  
2. Its regret $R= \tilde{O}(\tilde{\eta} \cdot \sum_u {\sqrt{ R_u T_u}})  /\sqrt{\alpha} $.
\end{theorem}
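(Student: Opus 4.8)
The plan is to mirror the proof of Theorem~\ref{thm:upper} as closely as possible, substituting the eluder-dimension-based domain complexity $R_u$ for the role played by $\tilde\eta^2 d_u$ in the linear case. The key structural parallel is the pair of bounds in Equation~\eqref{eqn:grt-exp}: I want to establish, for each domain $u$, that $\sum_{t \in I_u} \Delta_t = \tilde O(\tilde\eta\sqrt{R_u T_u/\alpha})$ and $\sum_{t \in I_u} p_t = \tilde O(\tilde\eta\sqrt{\alpha R_u T_u})$, where $p_t = \min(1,\alpha\Delta_t)$ and $\Delta_t = \sup_{f_1,f_2 \in \Fcal_t}|f_1(x_t)-f_2(x_t)|^2$ is now the nonlinear uncertainty measure. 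Summing over $u$ and converting $p_t$ to the realized query decisions $q_t$ via Freedman's inequality then yields both claims, exactly as in the final steps of the proof of Theorem~\ref{thm:upper}.

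\textbf{Main ingredients needed.} Two lemmas from the linear proof must be replaced by their eluder-dimension analogues. First, in place of Lemma~\ref{lem:subseq}, I need a concentration result guaranteeing that with high probability $f^* \in \Fcal_t$ for all $t$ (so that $\Delta_t$ upper bounds the instantaneous regret $(\hat f_t(x_t)-f^*(x_t))^2$); this is precisely what the confidence-width parameter $\beta_k$ in Equation~\eqref{eq:beta_k} is designed to control, and it is the standard confidence-set construction from~\citet{VR} adapted to our sub-Gaussian noise with variance proxy $\eta^2$. Second, and more importantly, in place of the log-determinant potential bound of Lemma~\ref{lem:logdet}, I need the eluder-dimension pigeonhole argument from~\citet{VR}: summing the squared confidence widths over \emph{queried} examples in domain $u$ is bounded by $\tilde O(R_u)$, where the $d'_u$ and the covering-number factor in $\beta_T$ combine to give $R_u = \tilde O(\tilde\eta^2 d'_u \log\Ncal(\Fcal,T^{-2},\|\cdot\|_\infty))$. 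This replaces the deterministic bound $\sum_{t\in I_u} q_t\Delta_t = \tilde O(\tilde\eta^2 d_u)$.

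With these two substitutions in hand, the high/low-risk split into $I_+ = \{t \in I_u : \alpha\Delta_t > 1\}$ and $I_- = I_u \setminus I_+$ goes through verbatim. On $I_+$ every label is queried so $\sum_{I_+}\Delta_t \le \sum_{I_u} q_t\Delta_t = \tilde O(R_u)$ and $|I_+| = \tilde O(\alpha R_u)$; on $I_-$ the identity $p_t = \alpha\Delta_t$ gives $\sum_{I_-}\alpha\Delta_t^2 = \tilde O(R_u)$, and Cauchy--Schwarz with $|I_-|\le T_u$ delivers $\sum_{I_-}\Delta_t = \tilde O(\tilde\eta\sqrt{R_u T_u/\alpha})$. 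The assumption $\alpha \le \frac{1}{\tilde\eta^2}\min_u \frac{T_u}{R_u}$ is exactly what makes the $I_+$ terms dominated by the $I_-$ terms, collapsing the bounds to the clean form stated in the theorem. A Freedman step on $\{q_t\Delta_t\}$ (resolving the gap between intended probabilities $p_t$ and realizations $q_t$) and a final application of Lemma~\ref{lem:subseq}'s analogue to pass from $\sum_t\Delta_t$ to genuine regret complete the argument.

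\textbf{The main obstacle} is not the combinatorics above — which is essentially a transcription of the linear proof — but rather verifying the eluder-dimension potential bound with the adaptive adversary and an \emph{admissible} (history-dependent) partition. Unlike the oblivious linear setting where $\cbr{x_t}_{t\in I_u}$ are fixed and lie in a genuine $d_u$-dimensional subspace, here domain membership $u_t$ is only $\sigma(H_{t-1},x_t)$-measurable, so I must confirm that the eluder-dimension pigeonhole bound from~\citet{VR} still applies to the data-dependent subsequence $\cbr{x_t : t \in I_u}$ and that the confidence sets $\Fcal_t$, built on \emph{all} queried data rather than per-domain data, remain valid simultaneously across domains. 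Establishing the uniform validity of $\Fcal_t$ under the martingale filtration induced by the adaptive adversary, and ensuring the covering-number union bound in $\beta_k$ is large enough to absorb the adaptivity, is where the delicate measure-theoretic bookkeeping lives.
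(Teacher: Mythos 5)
Your proposal matches the paper's proof essentially step for step: the paper proves this theorem by establishing exactly your two substitute lemmas --- an analogue of Lemma~\ref{lem:subseq} showing $f^* \in \Fcal_t$ for all $t$ with high probability (via Proposition 2 of \citet{VR}, which is built for adaptively chosen inputs), and an analogue of Lemma~\ref{lem:logdet} showing $\sum_{t \in I_u} q_t \Delta_t \le R_u$ (via Lemma~\ref{lem:VR3}, a subsequence generalization of \citet{VR}'s eluder-dimension pigeonhole) --- and then reruns the Freedman-plus-high/low-risk-split argument of Theorem~\ref{thm:upper} with $R_u$ in place of $\tilde{\eta}^2 d_u$. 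The obstacle you flag is resolved exactly as you anticipate: the pigeonhole bound is a deterministic, pathwise statement, so adaptivity is harmless there, and the admissibility condition makes $\one[t \in I_u]$ measurable with respect to $\sigma(H_{t-1}, x_t)$, so the sequences fed to Freedman's inequality remain martingale difference sequences.
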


We shall prove Theorem~\ref{thm:nonlin} directly below; the proof of Theorem~\ref{thm:alg3} follows as a corollary, using the same argument in the proof of Theorem~\ref{thm:fixBudget}; we note that the admissibility condition on domain partition $\cbr{I_u}_{u=1}^m$ ensures that $\{A_t^i(r_t^i-p_t^i) \one[t \in I] \}_{t=1}^T$ and $\{A_t^i(r_t^i-p_t^i) \Delta_t \one[t \in I] \}_{t=1}^T$ are still martingale difference sequences in our proof.

\begin{proof}[Proof of Theorem~\ref{thm:nonlin}]
We focus on proving the analogues of Lemma~\ref{lem:subseq} and Lemma~\ref{lem:logdet}; the rest of the proof follows the same argument as the proof of Theorem~\ref{thm:fixBudget} and is therefore omitted.

\begin{lemma}[Analogue of Lemma~\ref{lem:subseq}]
With probability $1-\delta / 2$, $R \le \sum_{t=1}^T{\Delta_t}$.
\end{lemma}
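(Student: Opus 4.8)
The plan is to show that the regret $R = \sum_{t=1}^T (\hat{y}_t - f^*(x_t))^2$ is controlled by $\sum_{t=1}^T \Delta_t$, where $\Delta_t = \sup_{f_1, f_2 \in \Fcal_t}\abs{f_1(x_t) - f_2(x_t)}^2$ is the squared diameter of the confidence set's predictions at $x_t$. The key structural fact to establish is that, with probability $1 - \delta/2$, the ground truth $f^*$ lies in $\Fcal_t$ for all $t$ simultaneously. Once this ``validity of the confidence set'' holds, the argument is immediate: since $\hat{f}_t \in \Fcal_t$ by construction (it is the center of the ball defining $\Fcal_t$), and $f^* \in \Fcal_t$, both $\hat{f}_t(x_t)$ and $f^*(x_t)$ are values attained by members of $\Fcal_t$ at $x_t$, so their squared difference $(\hat{y}_t - f^*(x_t))^2$ is at most the squared diameter $\Delta_t$. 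Summing over $t$ gives $R \le \sum_t \Delta_t$ on this event.

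First I would make precise the confidence-set guarantee. The set is $\Fcal_t = \{ f \in \Fcal : \sum_{i \in \Qcal}(f(x_i) - \hat{f}_t(x_i))^2 \le \beta_{\abs{\Qcal}}(\Fcal, \delta) \}$, with $\beta_k$ as defined in Equation~\eqref{eq:beta_k}. The natural tool here is the standard self-normalized / least-squares confidence bound for nonlinear regression with bounded eluder dimension, as developed in~\citet{VR}: with the noise $\xi_t$ sub-Gaussian of variance proxy $\eta^2$ and $f^* \in \Fcal$, a union bound over a $1/T^2$-cover of $\Fcal$ in $\|\cdot\|_\infty$ (which is why $\log \Ncal(\Fcal, 1/T^2, \|\cdot\|_\infty)$ appears in $\beta_k$) yields that, with probability $1 - \delta/2$, the empirical loss gap between $f^*$ and the empirical minimizer $\hat f_t$ on the queried data is at most $\beta_{\abs{\Qcal}}$, for every round $t$ at once. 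This is precisely the membership condition $f^* \in \Fcal_t$. I would cite or reproduce this concentration step, tracking that the discretization error from the cover contributes the lower-order $2k/T^2(\cdots)$ term in $\beta_k$ and the $T_u/T^2$ term in $R_u$.

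Given $f^* \in \Fcal_t$ for all $t$ on this $1 - \delta/2$ event, I would conclude with the two-line diameter argument above. The subtlety worth flagging is that the predictor is actually $\hat{y}_t = \hat{f}_t(x_t)$ (no clipping is shown in Algorithm~\ref{alg:3}, unlike the linear case), so one wants both $\hat{f}_t$ and $f^*$ to be genuine members of $\Fcal_t$; $\hat{f}_t \in \Fcal_t$ is trivial since the defining inequality holds with value $0$ at $f = \hat{f}_t$, and $f^* \in \Fcal_t$ is exactly the event just established. Thus $(\hat{y}_t - f^*(x_t))^2 = (\hat{f}_t(x_t) - f^*(x_t))^2 \le \Delta_t$ pointwise.

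The main obstacle is the concentration step establishing $f^* \in \Fcal_t$ uniformly over all $t$, rather than the final diameter bound, which is essentially definitional. Making the uniform-in-$t$ claim rigorous requires care because $\Qcal$ grows adaptively and the queried indices are themselves random (chosen with probability $\min\{1, \alpha\Delta_t\}$), so the least-squares residuals form a self-normalized martingale; the clean way around this is to invoke the eluder-dimension confidence bound of~\citet{VR} as a black box, verifying only that our sub-Gaussian noise assumption and the choice of $\beta_k$ in Equation~\eqref{eq:beta_k} match its hypotheses. The remaining pieces of Theorem~\ref{thm:nonlin} --- the analogue of Lemma~\ref{lem:logdet} bounding $\sum_{t \in I_u} q_t \Delta_t$ by $R_u$ via the eluder dimension, and the budget-aggregation argument --- then follow exactly as in the proofs of Theorem~\ref{thm:upper} and Theorem~\ref{thm:fixBudget}, which is why I would present only these two analogue lemmas and defer the rest.
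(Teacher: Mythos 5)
Your proposal is correct and takes essentially the same route as the paper's proof: establish, via the confidence-set concentration result of \citet[Proposition 2]{VR} (cited as a black box, exactly as the paper does), that with probability $1-\delta/2$ we have $f^* \in \Fcal_t$ for all $t \in [T]$, and then conclude pointwise by the definitional diameter bound $(\hat{f}_t(x_t)-f^*(x_t))^2 \le \sup_{f_1,f_2 \in \Fcal_t}\abs{f_1(x_t)-f_2(x_t)}^2 = \Delta_t$, using that $\hat{f}_t \in \Fcal_t$ trivially. The subtleties you flag (absence of clipping, adaptivity of the queried set $\Qcal$) are handled identically in the paper by the same black-box citation, so there is no substantive difference.
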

\begin{proof}
Recall that the confidence set at time $t$ is $\Fcal_t = \{ f \in \Fcal: \sum_{i \in \Qcal_t}{(f(x_i)-\hat{f}_t(x_i))^2} \le \beta_{|\Qcal_t|}(\Fcal, \delta) \}$.
By~\citet[Proposition 2]{VR}, we have that with probability $1-\delta/2$, $f^* \in \Fcal_t$, for all $t \in [T]$.

Meanwhile, if $f^* \in \Fcal_t$, for all $t \in [T]$, $(\hat{f}_t(x_t) - f^*(x_t))^2 \leq \sup_{f_1, f_2 \in \Fcal_t} (f_1(x_t) - f_2(x_t))^2 = \Delta_t$.
This implies that the regret is bounded by $R \le \sum_{t=1}^T{\Delta_t}$. 
\end{proof}

\begin{lemma}[Analogue of Lemma~\ref{lem:logdet}]
$\sum_{t \in I_u}{q_t \Delta_t} \le R_u$. 
\end{lemma}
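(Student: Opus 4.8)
The plan is to mirror the linear-case argument of Lemma~\ref{lem:logdet}, replacing the elliptical-potential (log-determinant) bound with a bound driven by the eluder dimension. Writing $w_t \defeq \sup_{f_1,f_2\in\Fcal_t}\abs{f_1(x_t)-f_2(x_t)}$ for the width of the confidence set, we have $\Delta_t = w_t^2$, so the target $\sum_{t\in I_u} q_t \Delta_t \le R_u$ is exactly a bound on the sum of squared widths over the \emph{queried} rounds inside domain $u$. Let $t_1 < \cdots < t_{n_u}$ enumerate these rounds (so each $x_{t_j}\in\Xcal_u$, and $n_u \le T_u$). The whole task reduces to controlling $\sum_{j=1}^{n_u} w_{t_j}^2$ in terms of the domain-local eluder dimension $d'_u = \dim_u^E(\Fcal, 1/T_u^2)$ and $\beta_T$.

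The key step is a \emph{domain-restricted width-count} inequality: for every $\epsilon \ge 1/T$,
\[
\#\cbr{\,j\in[n_u]: w_{t_j} > \epsilon\,} \le \del{\frac{4\beta_T}{\epsilon^2}+1}\, d'_u .
\]
To prove it I would follow the combinatorial argument of~\citet{VR}, but applied only to the subsequence $x_{t_1},\dots,x_{t_{n_u}}\subseteq \Xcal_u$. The one observation that makes this work is a monotonicity fact: if $f_1,f_2\in\Fcal_{t_j}$ then, since each lies within $\sqrt{\beta_{\abs{\Qcal_{t_j}}}}$ of $\hat f_{t_j}$ in the empirical $\ell_2$ norm over $\Qcal_{t_j}$, we get $\sum_{i\in\Qcal_{t_j}}(f_1(x_i)-f_2(x_i))^2 \le 4\beta_{\abs{\Qcal_{t_j}}} \le 4\beta_T$ (using that $\beta_k$ is nondecreasing in $k$). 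The queried domain-$u$ predecessors $\cbr{x_{t_\ell}}_{\ell<j}$ form a \emph{subset} of $\Qcal_{t_j}$, so dropping the other terms only relaxes this to $\sum_{\ell<j}(f_1(x_{t_\ell})-f_2(x_{t_\ell}))^2 \le 4\beta_T$. Thus whenever $w_{t_j}>\epsilon$, the point $x_{t_j}$ is $\epsilon$-independent of its domain-$u$ predecessors witnessed by the pair $(f_1,f_2)$, and the standard disjoint-block counting of~\citet{VR} — which counts how often a point can be $\epsilon$-independent of enough predecessors before the cumulative squared slack $4\beta_T$ is exhausted — yields the displayed bound, with eluder dimension taken \emph{with respect to $\Xcal_u$}. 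This is precisely where using only the domain-$u$ predecessors (legitimate, since discarding conditioning data enlarges widths) lets the \emph{local} quantity $d'_u$ appear in place of a global eluder dimension.

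Given the count, I would convert it into the squared-width sum by a peeling argument. Split the queried rounds into those with $w_{t_j}\le 1/T$, contributing at most $n_u\cdot T^{-2}\le T_u/T^2$ (the first term of $R_u$); the top $\min(d'_u,T_u)$ widths, each bounded trivially by $w_{t_j}^2\le 4$ since $f_1,f_2\in[-1,1]$, contributing $4\min(d'_u,T_u)$ (the second term); and the remaining large widths, for which sorting in decreasing order $w_{(1)}\ge\cdots$ and applying the count at $\epsilon\uparrow w_{(k)}$ gives the rank inequality $w_{(k)}^2 \le 4\beta_T d'_u/(k-d'_u)$, whose harmonic sum is $\tilde O(4 d'_u\beta_T\ln T_u)$ (the third term). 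Adding the three pieces reproduces $R_u = \frac{T_u}{T^2}+4\min(d'_u,T_u)+4 d'_u\beta_T\ln T_u$ exactly.

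The main obstacle is the second paragraph: verifying that restricting the cumulative-error constraint to the domain-$u$ queried predecessors preserves the validity of the eluder counting scheme while legitimately swapping a global eluder dimension for the domain-local $d'_u$. This is the genuinely new ingredient relative to~\citet{VR}, and it hinges on the monotonicity of confidence widths under additional conditioning together with the admissibility of the partition (which keeps the relevant sequences well-defined). By contrast, the peeling step is routine bookkeeping; the only care needed there is aligning the thresholds ($1/T$ for the negligible tail and the scale $1/T_u^2$ built into $d'_u$) so that the three terms of $R_u$ come out with the stated constants.
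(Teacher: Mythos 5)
Your proposal takes essentially the same route as the paper: your domain-restricted width-count inequality is precisely the paper's Lemma~\ref{lem:VR3}, proved exactly as you describe (running the disjoint-subsequence counting argument of \citet{VR} on the queried domain-$u$ points, using that these predecessors form a subset of $\Qcal_t$ so the cumulative $4\beta_T$ disagreement budget transfers), and your sort-and-peel conversion into the three terms of $R_u$ matches the paper's proof step for step. The only differences are cosmetic (your small-width threshold $w_t \le 1/T$ versus the paper's $\Delta_t \le 1/T^4$, both yielding the $T_u/T^2$ term), so the proposal is correct and essentially identical to the paper's argument.
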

\begin{proof}
Let $k=\abs{I_u \cap \Qcal_T}$ and write $d=d'_u$ as a shorthand. Let $(D_1, \dots, D_k)$ be $\{\Delta_t: t \in I_u \cap \Qcal_T\}$ sorted in non-increasing order. We have
\begin{align*}
    \sum_{t\in I_u \cap \Qcal_T}\Delta_t = \sum_{j=1}^k{D_j} =\sum_{j=1}^k{D_j \one[D_j \le 1/T^4]} + \sum_{j=1}^k{D_j \one[D_j > 1/T^4]}.
\end{align*}
Clearly, $\sum_{j=1}^k{D_j \one[D_j \le 1/T^4]} \le \frac{T_u}{T^2}$.

We know for all $j \in [k]$, $D_j \le 4$. In addition, $D_j > \epsilon^2 \iff \sum_{t\in I_u \cap \Qcal_T}{\one[\Delta_t>\epsilon^2]} \ge j$. By Lemma~\ref{lem:VR3} below, this can only occur if $j < (4\beta_T/\epsilon^2+1)d$. Thus, when $D_j > \epsilon^2$, $j < (4\beta_T/\epsilon^2+1)d$, which implies $\epsilon^2 < \frac{4\beta_T d}{j-d}$. This shows that if $D_j > 1/T^4$, $D_j \le \min\left\{4, \frac{4\beta_T d}{j-d} \right\}$. Therefore $\sum_j{D_j \one[D_j > 1/T^4]} \le 4d+ \sum_{j=d+1}^k{\frac{4\beta_T d}{j-d}}\le 4d + 4 d\beta_T \log{T_u}$.

Consequently,
\begin{align*}
    \sum_{t \in I_u}{q_t \Delta_t} = \sum_{t\in I_u \cap \Qcal_T}\Delta_t \le \min\left\{4T_u, \frac{T_u}{T^2}+4d'_u+4 d'_u\beta_T \log{T_u} \right\} \le R_u. \qquad \qedhere
\end{align*}
\end{proof}

\end{proof}

The following lemma generalizes~\citet[Proposition 3]{VR}, in that it considers a subsequence of examples coming from a subdomain of $\Xcal$. We define $\dim_I^E$ as the eluder dimension of $\Fcal$ with respect to support $\cbr{x_t: t \in I}$. It can be easily seen that $\dim_{I_u}^E \leq \dim_u^E$.

\begin{lemma}
\label{lem:VR3}
Fix $I \subseteq [T]$. If $\{\beta_t \ge 0\}_{t=1}^T$ is a nondecreasing sequence and $\Fcal_t \defeq  \{ f \in \Fcal: \sum_{i \in \Qcal_t}{(f(x_i)-\hat{f}_t(x_i))^2} \le \beta_{|\Qcal_t|}(\Fcal, \delta) \}$, then
\begin{align*}
    \forall  \epsilon>0, \sum_{t \in I \cap \Qcal_T}{\one[\Delta_t>\epsilon^2]} < \left(\frac{4\beta_T}{\epsilon^2}+1\right)\mbox{dim}^E_I(\Fcal, \epsilon).
\end{align*}
\end{lemma}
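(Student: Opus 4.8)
The plan is to adapt the two-part counting argument behind Russo--Van Roy's Proposition 3~\citep{VR}, restricting attention throughout to the subsequence of queried examples lying in the domain $I$, so that the domain-restricted eluder dimension $\dim_I^E$ is the relevant quantity. Fix $\epsilon > 0$ and let $(x_{t_1}, \ldots, x_{t_N})$ be the subsequence of $\cbr{x_t : t \in I \cap \Qcal_T}$, in increasing order of $t$, on which $\Delta_t > \epsilon^2$; the goal is to bound $N = \sum_{t \in I \cap \Qcal_T}\one[\Delta_t > \epsilon^2]$.

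First I would establish a ``width-implies-bounded-dependence'' step. If $\Delta_t > \epsilon^2$, then since $\Delta_t = \sup_{f_1,f_2 \in \Fcal_t}(f_1(x_t) - f_2(x_t))^2$, there exist $f_1, f_2 \in \Fcal_t$ with $f_1(x_t) - f_2(x_t) > \epsilon$ (swapping the two if needed). Both lie in $\Fcal_t$, hence each is within $\sqrt{\beta_{|\Qcal_t|}} \le \sqrt{\beta_T}$ of $\hat f_t$ in the empirical $\ell_2$ norm over $\Qcal_t$; by the triangle inequality $\sum_{i \in \Qcal_t}(f_1(x_i) - f_2(x_i))^2 \le 4\beta_T$, and dropping the nonnegative non-$I$ terms gives $\sum_{i \in I \cap \Qcal_t}(f_1(x_i) - f_2(x_i))^2 \le 4\beta_T$ as well. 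Now if $x_t$ were $\epsilon$-dependent on a subset $S \subseteq \cbr{x_i : i \in I \cap \Qcal_t}$, the contrapositive of the definition of $\epsilon$-dependence applied to $f_1, f_2$ forces $\sum_{i \in S}(f_1(x_i) - f_2(x_i))^2 > \epsilon^2$. Summing over any collection of disjoint such subsets and comparing with the $4\beta_T$ bound shows that $x_t$ is $\epsilon$-dependent on strictly fewer than $4\beta_T/\epsilon^2$ disjoint subsequences of its $I$-predecessors.

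Next I would run the greedy bucketing argument. Processing $x_{t_1}, \ldots, x_{t_N}$ in order, assign each $x_{t_j}$ to the lowest-indexed bucket $B_\ell$ on whose current contents it is $\epsilon$-independent, opening a new bucket if none works. By construction every element of a bucket is $\epsilon$-independent of its predecessors in that bucket; since all these points lie in the support $\cbr{x_t : t \in I}$, the definition of the domain-restricted eluder dimension gives $|B_\ell| \le \dim_I^E(\Fcal, \epsilon)$ for each $\ell$. To bound the number of buckets $K$, observe that the first point placed in $B_K$ was $\epsilon$-dependent on each of the nonempty disjoint buckets $B_1, \ldots, B_{K-1}$, so by the previous step $K - 1 < 4\beta_T/\epsilon^2$. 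Combining, $N = \sum_{\ell=1}^K |B_\ell| \le K\,\dim_I^E(\Fcal, \epsilon) < (4\beta_T/\epsilon^2 + 1)\dim_I^E(\Fcal, \epsilon)$, which is exactly the claim.

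The main obstacle is the bucketing step bounding $K$: this is the genuinely combinatorial heart of the eluder-dimension counting argument, and it requires the bookkeeping of the first step (that a large-width point is dependent on only a few disjoint past blocks) to interlock correctly with the greedy assignment. A secondary point needing care is the $\epsilon' > \epsilon$ appearing in the definition of eluder dimension: one must check that a sequence whose every element is (strictly) $\epsilon$-independent of its predecessors is indeed counted by $\dim_I^E(\Fcal, \epsilon)$, which holds because the inequalities witnessing $\epsilon$-independence are strict. The generalization relative to~\citep{VR} is otherwise painless, since restricting the subsequence to $I$ and discarding the nonnegative non-$I$ terms preserves the $4\beta_T$ bound while replacing the ambient eluder dimension by the smaller $\dim_I^E$.
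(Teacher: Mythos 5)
Your proof is correct and takes essentially the same route as the paper's: the first step (large width at a queried point plus the triangle inequality within $\Fcal_t$ implies $\epsilon$-dependence on fewer than $4\beta_T/\epsilon^2$ disjoint subsequences of queried $I$-predecessors) is identical, and your greedy bucketing is just a reorganization of the paper's pigeonhole step, which fixes $K$ disjoint subsequences upfront and shows some element must be $\epsilon$-dependent on all of them. Two details you handle are worth noting: your reconciliation of strict $\epsilon$-independence with the $\epsilon' > \epsilon$ in the eluder-dimension definition is exactly right, and your triangle inequality correctly uses $\hat{f}_t$ (the paper's own proof writes $f^*$ there, which is a typo given how $\Fcal_t$ is defined).
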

\begin{proof}
Let $k=\abs{I \cap \Qcal_T}$, $(a_1, \dots, a_{k})=(x_t: t \in I \cap \Qcal_T)$, and $(b_1, \dots, b_{k})=(\Delta_t: t \in I \cap \Qcal_T)$.
First, we show that if $b_j>\epsilon^2$ then $a_j$ is $\epsilon$-dependent on fewer than $4\beta_T/\epsilon^2$ disjoint subsequences of $(a_1, \dots, a_{j-1})$, for $j \le k$, in other words, if there exist $K$ disjoint subsequences of $(a_1,\ldots,a_{j-1})$ such that $a_j$ is $\epsilon$-dependent on all of them, then $K < \frac{4\beta_T}{\epsilon^2}$.

Indeed, suppose $b_j>\epsilon^2$ and $a_j=x_t$, there are $f_1, f_2 \in \Fcal_t$ such that $f_1(a_j)-f_2(a_j)>\epsilon$. By definition, if $a_j$ is $\epsilon$-dependent on a subsequence $(a_{i_1}, \dots, a_{i_p})$ of $(a_1, \dots, a_{j-1})$, then $\sum_{l=1}^p{(f_1(a_{i_l})-f_2(a_{i_l}))^2} >\epsilon^2$. Thus, if $a_j=x_t$ is $\epsilon$-dependent on $K$ subsequences of $(a_1, \dots, a_{j-1})$, then $\sum_{i \in \Qcal_t}{(f_1(x_i)-f_2(x_i))^2} >K \epsilon^2$. By the triangle inequality, $$\sqrt{\sum_{i \in \Qcal_t}{(f_1(x_i)-f_2(x_i))^2}} \le  \sqrt{\sum_{i \in \Qcal_t}{(f_1(x_i)-f^*(x_i))^2}}+\sqrt{\sum_{i \in \Qcal_t}{(f_2(x_i)-f^*(x_i))^2}} \le 2\sqrt{\beta_T}.$$ Thus, $K<4\beta_T/\epsilon^2$.

Next, we show that in any sequence of elements in $I$, $(c_1, \dots, c_\tau)$, there is some $c_j$ that is $\epsilon$-dependent on at least $\tau/d-1$ disjoint subsequences of $(c_1, \dots, c_{j-1})$, where $d \defeq \mbox{dim}^E_I(\Fcal, \epsilon)$. For any integer $K$ satisfying $Kd+1 \le \tau \le Kd+d$, we will construct $K$ disjoint subsequences $C_1, \dots, C_K$. First let $C_i=(c_i)$ for $i \in [K]$. If $c_{K+1}$ is $\epsilon$-dependent on $C_1, \dots, C_K$, our claim is established. Otherwise, select a $C_i$ such that $c_{K+1}$ is $\epsilon$-independent and append $c_{K+1}$ to $C_i$. Repeat for all $j>K+1$ until $c_j$ is $\epsilon$-dependent on each subsequence or $j=\tau$. In the latter case $\sum{\abs{C_i}} \ge Kd$, and $\abs{C_i}=d$. In this case, $c_\tau$ must be $\epsilon$-dependent on each subsequence, by the definition of $\dim^E_I$.

Now take $(c_1, \dots, c_\tau)$ to be the subsequence $(a_{t_1}, \dots, a_{t_\tau})$ of $(a_1, \dots, a_k)$ consisting of elements $a_j$ for which $b_j>\epsilon^2$. We proved that each   $a_{t_j}$ is $\epsilon$-dependent on fewer than $4\beta_T/\epsilon^2$ disjoint subsequences of $(a_1, \dots, a_{t_j-1})$. Thus, each $c_j$ is $\epsilon$-dependent on fewer than $4\beta_T/\epsilon^2$ disjoint subsequences of $(c_1, \dots, c_{j-1})$.\footnote{To see this, observe that if $c$ is $\epsilon$-dependent on a sequence $S$, then $c$ must also be $\epsilon$-dependent on any supersequence of $S$.} Combining this with the fact that there is some $c_j$ that is $\epsilon$-dependent on at least $\tau/d-1$ disjoint subsequences of $(c_1, \dots, c_{j-1})$, we have $\tau/d-1 < 4\beta_T/\epsilon^2$. Thus, $\tau < (4\beta_T/\epsilon^2+1)d$.
\end{proof}

\begin{algorithm}
\caption{Fixed-budget QuFUR for general function class \label{alg:4}}
\begin{algorithmic}[1]

\Require Hypotheses set $\Fcal$, time horizon $T$, label budget $B$, parameter $\delta$, noise level $\eta$.
\State Labeled dataset $\Qcal\leftarrow \emptyset$.
\State $k \leftarrow 3 \lceil \log_2{T} \rceil$.
\For {$i=0$ to $k$}
    \State Parameter $\alpha_i \leftarrow 2^i /T^2$.
\EndFor
\For {$t=1$ to $T$}
    \State Predict $\hat{f}_t \leftarrow \argmin_{f \in \Fcal}{\sum_{i \in \Qcal}{(f(x_i)-y_i)^2}}$.
    \State Confidence set $\Fcal_t \leftarrow \{ f \in \Fcal: \sum_{i \in \Qcal}{(f(x_i)-\hat{f}(x_i))^2} \le \beta_{|\Qcal|}(\Fcal, \delta) \}$, \\ where $\beta_{k} \defeq 8 \eta^2 \log{(4\Ncal(\Fcal, 1/T^2, \|\cdot\|_\infty)/\delta)}+2k/T^2 (16+\sqrt{2\eta^2\ln{({16k^2/\delta})}})$.
    \State Uncertainty estimate $\Delta_t = \sup_{f_1, f_2 \in \Fcal_t}{\abs{f_1(x_t)-f_2(x_t)}^2}$.
    \For {$i=0$ to $k$}
        \If {$\sum_{j=1}^{t-1}{q_j^i} < \lfloor B/k \rfloor$}
            \State With probability $\min{\{1, \alpha_i \Delta_t\}}$, set $q_t^i=1$.
        \EndIf
    \EndFor
    \If {$\sum_i{q_t^i}>0$}
        \State Query $y_t$. $\Qcal \leftarrow \Qcal \bigcup \{t\}$.
    \EndIf
\EndFor

\end{algorithmic}
\end{algorithm}

\subsection{Analysis of uniform query strategy for online active linear regression with oblivious adversary}
\label{sec:proof_uniform}
\begin{theorem}
With probability $1-\delta$, the uniformly querying strategy with probability $\mu$ achieves $\EE[R] =\otil{ \frac{\tilde{\eta}^2 d} \mu}$ and $\EE[Q]= \mu T$.
\end{theorem}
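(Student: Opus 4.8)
The plan is to prove the two claims separately, since the query complexity is immediate and the regret requires a concentration argument to pass from the conditional-expectation bounds of the fully-supervised analysis to the sparsely-labeled estimator.

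First I would dispatch the query complexity. Each $q_t$ is an independent $\Bernoulli(\mu)$ random variable, so $\EE[Q] = \EE[\sum_{t=1}^T q_t] = \mu T$ by linearity of expectation; this is exact and requires no further work.

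For the regret, the key structural observation is that conditioning on the realized query set $\Qcal$, the estimator $\hat\theta_t = \argmin_\theta \sum_{i \in \Qcal \cap [t-1]} (\inner{\theta}{x_i} - y_i)^2 + \lambda \|\theta\|^2$ is exactly a regularized least-squares estimator on the subsample. I would therefore reuse the per-step bound already established in Lemma~\ref{lem:subseq}: with high probability $(\hat y_t - \inner{\theta^*}{x_t})^2 = \tilde O(\Delta_t)$ where $\Delta_t = \tilde\eta^2 \min(1, \|x_t\|_{M_t^{-1}}^2)$ and $M_t = \lambda I + \sum_{i \in \Qcal_t} x_i x_i^\top$. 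Summing, the realized regret is $R = \tilde O(\sum_{t=1}^T \Delta_t)$. To bound this sum in expectation I would split each term using $\Delta_t = q_t \Delta_t + (1-q_t)\Delta_t$; the queried part is controlled by the elliptical-potential argument of Lemma~\ref{lem:logdet}, giving $\sum_t q_t \Delta_t = \tilde O(\tilde\eta^2 d)$, while for the unqueried part I would take expectations and use $\EE[(1-q_t)\Delta_t \mid H_{t-1}, x_t] = (1-\mu)\Delta_t \le \Delta_t$, so that in expectation $\EE[\sum_t \Delta_t]$ is comparable to $\tfrac{1}{\mu}\EE[\sum_t q_t \Delta_t]$. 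The cleanest route is to note $\EE[q_t \Delta_t \mid H_{t-1}, x_t] = \mu \Delta_t$, whence $\EE[\sum_t \Delta_t] = \tfrac{1}{\mu}\EE[\sum_t q_t \Delta_t] \le \tfrac{1}{\mu}\tilde O(\tilde\eta^2 d)$, yielding $\EE[R] = \tilde O(\tilde\eta^2 d / \mu)$ after taking expectations over the high-probability event from Lemma~\ref{lem:subseq}.

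The main obstacle is handling the dependence correctly: $M_t$ depends on the random query outcomes $q_1,\dots,q_{t-1}$, so $\Delta_t$ is itself random and not independent of $q_t$ across time. The trick $\EE[q_t \Delta_t \mid H_{t-1}, x_t] = \mu \Delta_t$ works because $\Delta_t$ is $H_{t-1}$-measurable (it depends only on past queries, not the current one), so I would be careful to take the conditional expectation in the right filtration before summing. A minor subtlety is that the high-probability event in Lemma~\ref{lem:subseq} must be combined with the expectation bound, which is routine since $\Delta_t \le \tilde\eta^2$ is bounded and the failure probability $\delta$ contributes only a lower-order $\delta \tilde\eta^2 T$ term that can be absorbed by choosing $\delta$ polynomially small.
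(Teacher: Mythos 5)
Your proposal is correct, and for the query complexity it coincides exactly with the paper's argument. For the regret, however, you take a genuinely different route in the key step. Both you and the paper reduce to the same two ingredients: Lemma~\ref{lem:subseq} (so that $R = \tilde O(\sum_t \Delta_t)$ on a high-probability event) and Lemma~\ref{lem:logdet} (so that $\sum_t q_t \Delta_t = \tilde O(\tilde\eta^2 d)$ holds deterministically). The difference is how the unqueried rounds are related to the queried ones. The paper applies Freedman's inequality to the martingale differences $q_t\Delta_t - \mu\Delta_t$, obtaining the \emph{high-probability} statement $\sum_t \Delta_t = \tilde O(\tilde\eta^2 d/\mu)$, and only then converts to expectation by integrating the tail $\EE[R] = \int_0^\infty \PP(R \ge a)\,da$. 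You instead use the tower property: since $q_t$ is an independent $\Bernoulli(\mu)$ coin and $\Delta_t$ is measurable with respect to $\sigma(H_{t-1}, x_t)$ (your phrasing says ``$H_{t-1}$-measurable,'' but since you condition on $x_t$ as well this is harmless; in the oblivious setting $x_t$ is deterministic anyway), one gets $\EE[q_t\Delta_t] = \mu\,\EE[\Delta_t]$ termwise, hence $\EE[\sum_t\Delta_t] = \tfrac1\mu \EE[\sum_t q_t\Delta_t] \le \tilde O(\tilde\eta^2 d/\mu)$ with no concentration inequality at all. Your handling of the failure event of Lemma~\ref{lem:subseq} (bounded regret times polynomially small $\delta$) is also sound. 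What each approach buys: yours is more elementary and suffices for the theorem as literally stated, since the conclusions are about $\EE[R]$ and $\EE[Q]$; the paper's Freedman-based argument is heavier but delivers strictly more, namely a bound on the \emph{realized} regret $R$ holding with probability $1-\delta$, which is what the ``with probability $1-\delta$'' phrasing of the statement is gesturing at, and which your purely-in-expectation argument cannot recover (controlling $\EE[\sum_t\Delta_t]$ does not rule out upward deviations of $\sum_t \Delta_t$ without a concentration step).
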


\begin{proof}[Proof sketch]

As $Q = \sum_{t=1}^T q_t$ is a sum of $T$ iid Bernoulli random variables with means $\mu$, $\EE[Q] = \mu T$.

We now bound the regret of the algorithm. We still define $\Delta_t = \tilde{\eta}^2 \min\{1, \|x_t\|_{M_t^{-1}}^2\}$.

Using Lemma~\ref{lem:logdet} with $\{a_i\}_{i=1}^k= \cbr{x_t}_{t=1}^T$, and $S = \Qcal_T$, $\sum_t{q_t \Delta_t} = \tilde{O}(\tilde{\eta}^2 d)$.
Let $Z_t=q_t \Delta_t$. 
We have $Z_t \le \Delta_t \le \tilde{\eta}^2$, $\EE_{t-1}Z_t = \mu \Delta_t$, and $\EE_{t-1}Z_t^2 \le  \tilde{\eta}^2 \mu \Delta_t$. Applying Freedman's inequality, with probability $1-\delta/2$, 
\begin{align*}
    \sum_{t=1}^T{\mu \Delta_t} - \sum_{t=1}^T{q_t \Delta_t} =O\left(\tilde{\eta} \sqrt{\sum_{t=1}^T{\mu \Delta_t \ln{(\ln{T}/\delta})} }+\tilde{\eta}^2 \ln{(\ln{T}/\delta)})\right).
\end{align*}
The above inequality implies that $\sum_{t=1}^T{\Delta_t} = \otil{ \frac{\tilde{\eta}^2 d} \mu}$.
Now, applying Lemma~\ref{lem:subseq} and take the union bound, we have that with probability $1-\delta$, 
\[ R = \otil{\sum_{t=1}^T{\Delta_t}} = \otil{ \frac{\tilde{\eta}^2 d} \mu}. \]
Use the basic relationship between the expectation and  tail probability $\EE[R] = \int_0^\infty \PP(R \geq a) d a$, we conclude that $\EE[R] = \otil{ \frac{\tilde{\eta}^2 d} \mu}$.
\end{proof}

\section{Proofs for lower bounds}
\subsection{Proof of Theorem~\ref{thm:lower}}
\label{sec:proof_lower}

\begin{reptheorem}{thm:lower}

For any $\eta \geq 1$, any set of positive integers $\cbr{(d_u, T_u)}_{u=1}^m$ and integer $B$ that satisfy
\begin{equation*} 
d_u \leq T_u, \forall u \in [m], \quad \sum_{u=1}^m d_u \leq d, \quad B \geq \sum_{u=1}^m d_u, 
\end{equation*}
there exists an oblivious adversary such that:\\
1. It uses a ground truth linear predictor $\theta^\star \in \RR^d$ such that $\| \theta^* \|_2 \leq \sqrt{d}$, and $\abs{\inner{\theta^*}{x_t}} \leq 1$; in addition, the noises $\cbr{\xi_t}_{t=1}^T$ are sub-Gaussian with variance proxy $\eta^2$.\\
2. It shows example sequence $\cbr{x_t}_{t=1}^T$ such that $[T]$ can be partitioned into $m$ disjoint nonempty subsets $\cbr{I_u}_{u=1}^m$, where for each $u$, $|I_u|=T_u$, and  $\cbr{x_t}_{t \in I_u}$ lie in a subspace of dimension $d_u$.\\
3. Any online active learning algorithm $\cal{A}$ with label budget $B$ has regret $\Omega((\sum_{u=1}^m \sqrt{d_u T_u})^2/B)$.
\end{reptheorem}

\begin{proof}
Our proof is inspired by~\citet[Theorem 2]{vovk2001competitive}.
For $u \in [m]$ and $i \in [d_u]$, define $c_{u,i} = e_{\sum_{v=1}^{u-1} d_v + i}$, where $e_j$ denotes the $j$-th standard basis of $\RR^d$. It can be easily seen that all $c_{u,i}$'s are orthonormal. In addition, for a vector $\theta \in \RR^d$, denote by $\theta_{u,i} = \theta_{\sum_{v=1}^{u-1} d_v + i}$.

For task $u$, we construct domain $\Xcal_u = \cspan(c_{u,i}: i \in [d_u])$. The sequence of examples shown by the adversary is the following: it is divided to $m$ blocks, where the $u$-th block occupies a time interval $I_u = [\sum_{v=1}^{u-1} T_v +1, \sum_{v=1}^{u} T_v ]$; Each block is further divided to $d_u$ subblocks, where for $i \in [d_u - 1]$, subblock $(u,i)$ spans time interval $I_{u,i} = [\sum_{v=1}^{u-1} T_v + (i-1) \lfloor T_u / d_u \rfloor +1, \sum_{v=1}^{u-1} T_v + i \lfloor T_u / d_u \rfloor]$, and subblock $(u, d_u)$ spans time interval $I_{u,d_u} = [\sum_{v=1}^{u-1} T_v + (d_u-1) \lfloor T_u / d_u \rfloor +1, \sum_{v=1}^{u-1} T_v + T_u]$. 
At block $u$, examples from domain $\Xcal_u$ are shown; furthermore, for every $t$ in $I_{u,i}$, i.e. in the $(u,i)$-th subblock, example $c_{u,i}$ is repeatedly shown to the learner.
Observe that $(u,i)$-th subblock contains at least $\lfloor \frac{T_u}{d_u} \rfloor \geq \frac{T_u}{2 d_u}$ examples, as $T_u \geq d_u$.

We first choose $\theta^*$ from distribution $D_\theta$, such that for every coordinate $j \in [d]$, $\theta_i^* \sim \Beta(1,1)$, which is also the uniform distribution over $[0,1]$.
Given $\theta^*$, the adversary reveals labels using the following mechanism: given $x_t$, it draws $y_t \sim \Bernoulli(\inner{\theta^*}{x_t})$ independently and optionally reveals it to the learner upon learner's query. Specifically, given $\theta^*$, if $t \in I_{u,i}$,  $y_t \sim \Bernoulli(\theta^*_{u,i})$.
By Hoeffding's Lemma, $\xi_t = y_t - \theta^*_{u,i}$ is zero mean subgaussian with variance proxy $\frac{1}{4} \leq \eta^2$.

Denote by $N_{u,i}(t) = \sum_{s \in I_{u,i}: s \leq t} q_s$ the number of label queries of the learner in domain $(u,i)$ up to time $t$. Because the learner satisfies a budget constraint of $B$ under all environments, we have
\[ \EE \sbr{ \sum_{u=1}^m \sum_{i=1}^{d_u} N_{u,i}(T) \mid \theta^* } \leq B.  \]
Adding $2\sum_{u=1}^m d_u$ on both sides and by linearity of expectation, we get
\begin{equation} 
\sum_{u=1}^m \sum_{i=1}^{d_u} \EE \sbr{  (N_{u,i}(T) + 2) \mid \theta^* } \leq B + 2\sum_{u=1}^{m} d_u \leq 3B. 
\label{eqn:qc-ub}
\end{equation}

On the other hand, we observe that the expected regret of the algorithm can be written as follows:
\[ 
\EE\sbr{R} = \EE \sbr{ \sum_{u=1}^m \sum_{i=1}^{d_u} \sum_{t \in I_{u,i}} (\hat{y}_t - \theta_{u,i}^*)^2},
\]
where the expectation is with respect to both the choice of $\theta^*$ and the random choices of $\Acal$.

We define a filtration $\cbr{\Fcal_t}_{t=1}^T$, where $\Fcal_t$ is the $\sigma$-algebra generated by $\cbr{(x_s, q_s, y_s q_s)}_{s=1}^t$, which encodes the informative available to the {\em learner} up to time step $t$.\footnote{This notion should be distinguished from the history notion $H_t$ defined before, in that it does not include the labels not queried by the learner up to time step $t$. For $s$ in $[t]$, we use $y_s q_s$ to indicate the labeled data information acquired at time step $s$; if $q_s = 1$, $y_sq_s = y_s$, encoding the fact that the learner has access to label $y_s$; otherwise $q_s = 0$, $y_sq_s$ is always $0$, meaning that the learner does not have label $y_s$ available.} We note that $\hat{y}_t$ is $\Fcal_{t-1}$-measurable. 
Denote by $N_{u,i}^+(t) = \sum_{s \in I_{u,i}: s \leq t} q_s \cdot \one\del{y_s = 1 }$, which is the number of $1$ labels seen on example $c_{u,i}$ by the learner up to round $t-1$.
Observe that both $N_{u,i}^+(t-1)$ and $N_{u,i}(t-1)$ are $\Fcal_{t-1}$-measurable. 

Observe that conditioned on the interaction logs $(x_s, q_s, y_s q_s)_{s=1}^{t-1}$, the posterior distribution of
$\theta_{u,i}^*$ is $\Beta(1+N_{u,i}^+(t-1), 1+N_{u,i}(t-1)-N_{u,i}^+(t-1))$. 
Therefore, define random variable $\hat{y}_t^* = \EE\sbr{ \theta_{u,i}^* \mid \Fcal_{t-1} } =  \frac{1+N_{u,i}^+}{2+N_{u,i}}$, we have by bias-variance decomposition,
\begin{align*} 
\EE \sbr{(\hat{y}_t - \theta_{u,i})^2 \mid \Fcal_{t-1}} &= \EE \sbr{ (\hat{y}_t^* - \theta_{u,i}^*)^2 \mid \Fcal_{t-1}} + (\hat{y}_t - \hat{y}_t^*)^2 \\
&\geq \EE \sbr{ (\hat{y}_t^* - \theta_{u,i}^*)^2 \mid \Fcal_{t-1}}
\end{align*}

Summing over all time steps, we have
\[
\EE\sbr{R} \geq \EE \sbr{ \sum_{u=1}^m \sum_{i=1}^{d_u} \sum_{t \in I_{u,i}} (\hat{y}_t^* - \theta_{u,i}^*)^2}.
\]

On the other hand, from Lemma~\ref{lem:reg-lb-qc}, we have for all $t \in I_{u,i}$, 
\[ \EE \sbr{ (\hat{y}_t - \theta_{u,i}^* )^2 \mid N_{u,i}(T), \theta^* } \geq \frac{f(\theta_{u,i}^*)}{2(N_{u,i}(T) + 2)}, \]
where $f(\gamma) = \min( \gamma \cdot (1-\gamma), (2\gamma - 1)^2)$.

By the tower property of conditional expectation and conditional Jensen's inequality, we have
\[ \EE \sbr{ (\hat{y}_t - \theta_{u,i})^2 \mid \theta^* } \geq \EE \sbr{ \frac{f(\theta_{u,i}^*)}{N_{u,i}(T) + 2} \mid \theta^* } \geq \frac{f(\theta_{u,i}^*)}{2(\EE\sbr{N_{u,i}(T) \mid \theta^*} + 2)}. \]
Summing over all $t$ in $I_{u,i}$, and then summing over all subblocks $(u,i): u \in [m], i \in [d_u]$, and using the aforementioned fact that the $(u,i)$ subblock has at least $\frac{T_u}{2d_u}$ examples, we have
\begin{align}
\EE \sbr{ R \mid \theta^* } & =  \sum_{u=1}^m \sum_{i=1}^{d_u} \sum_{t \in I_{u,i}} \EE \sbr{ (\hat{y}_t - \theta_{u,i})^2 \mid \theta^*} \nonumber \\
& \geq \sum_{u=1}^m \sum_{i=1}^{d_u} \frac{T_u / d_u \cdot f(\theta_{u,i}^*)}{4(\EE\sbr{N_{u,i}(T) \mid \theta^*} + 2)}.
\end{align}
Combining the above inequality with Equation~\eqref{eqn:qc-ub}, we have:
\begin{align*}
3B \cdot \EE \sbr{ R \mid \theta^*} & \geq \del{\sum_{u=1}^m \sum_{i=1}^{d_u} \frac{T_u / d_u \cdot f(\theta_{u,i}^*)}{4(\EE\sbr{N_{u,i}(T) \mid \theta^*} + 2)}} \cdot \del{\sum_{u=1}^m \sum_{i=1}^{d_u} \EE \sbr{  (N_{u,i}(T) \mid \theta^*} + 2) } \\
& \geq \frac{1}{4}\del{ \sum_{u=1}^m \sum_{i=1}^{d_u} \del{\sqrt{T_u / d_u} \cdot \sqrt{f(\theta_{u,i}^*)}} }^2 .
\end{align*}
where the second inequality is from Cauchy-Schwarz. Now taking expectation over $\theta$, using Jensen's inequality and Lemma~\ref{lem:e-f-theta} that $\EE \sqrt{f(\theta_{u,i}^*)} \geq \frac{1}{25}$, and some algebra yields
\[
3B \cdot \EE \sbr{R} \geq \frac{1}{2}\del{ \sum_{u=1}^m \sum_{i=1}^{d_u} \del{\sqrt{T_u / d_u} \cdot \EE\sbr{\sqrt{f(\theta_{u,i}^*)}}} }^2 \geq \frac{1}{2500}  \del{ \sum_{u=1}^m \sqrt{d_u T_u}  }^2.
\]
 
In conclusion, we have
\[ \EE \sbr{R} \geq \frac{\del{ \sum_{u=1}^m \sum_{i=1}^{d_u} \sqrt{T_u / d_u} }^2}{7500 \cdot B}. \]
As the above expectation is over $\theta^*$ chosen randomly from $D_\theta$, there must exists an $\theta^*$ from $\supp(D_\theta) = [0,1]^d$ such that 
\[ \EE \sbr{ R \mid \theta^*} \geq \frac{\del{ \sum_{u=1}^m \sum_{i=1}^{d_u} \sqrt{T_u / d_u} }^2}{7500 \cdot B} \]
holds. This $\theta^*$ has $\ell_2$ norm at most $\sqrt{\sum_{j=1}^d (\theta_j^*)^2} \leq \sqrt{d}$.
\end{proof}

\begin{lemma}
If $t$ is in $I_{u,i}$, then 
\[ \EE \sbr{ (\hat{y}_t^* - \theta_{u,i}^*)^2 \mid N_{u,i}(T), \theta^* } \geq \frac{f(\theta_{u,i}^*)}{2(N_{u,i}(T) + 2)}, \]
where $f(\gamma) = \min\del{ \gamma(1-\gamma), (2\gamma - 1)^2 }$.
\label{lem:reg-lb-qc}
\end{lemma}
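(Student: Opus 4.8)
The plan is to condition further on the number of queries made before round $t$ and to reduce the statement to a clean one-variable inequality. Write $n \defeq N_{u,i}(t-1)$ and $s \defeq N_{u,i}^+(t-1)$, so that the Bayes predictor is $\hat{y}_t^* = \frac{1+s}{2+n}$ and, crucially, $n \le N_{u,i}(T)$ always. I would begin from the exact decomposition
\[
\hat{y}_t^* - \theta_{u,i}^* = \frac{(s - n\,\theta_{u,i}^*) + (1 - 2\,\theta_{u,i}^*)}{2+n},
\]
which isolates a mean-zero ``sampling'' part $s - n\,\theta_{u,i}^*$ from the deterministic ``prior bias'' part $1 - 2\,\theta_{u,i}^*$. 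The two terms in $f(\gamma)=\min(\gamma(1-\gamma),(2\gamma-1)^2)$ are meant to capture exactly these two sources of error: the sampling variance dominates when $n$ is large, and the prior bias dominates when $n$ is small.

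Conditioning on $n$, $N\defeq N_{u,i}(T)$, and $\theta^\ast=\gamma$, and granting the distributional claim that the queried labels behave like $n$ independent $\Bernoulli(\gamma)$ draws (so $\EE[s\mid n,N,\gamma]=n\gamma$ and the sampling variance is $n\gamma(1-\gamma)$), the cross term vanishes and
\[
\EE\sbr{(\hat{y}_t^* - \gamma)^2 \mid n, N, \gamma} = \frac{n\gamma(1-\gamma) + (2\gamma-1)^2}{(2+n)^2}.
\]
I would then lower bound the numerator using $f(\gamma)\le \gamma(1-\gamma)$ and $f(\gamma)\le (2\gamma-1)^2$ \emph{simultaneously}, giving $n\gamma(1-\gamma)+(2\gamma-1)^2 \ge (n+1)f(\gamma)$, and finish with the elementary fact that $\frac{n+1}{(2+n)^2}\ge \frac{1}{2(N+2)}$ for every $0\le n\le N$; this follows from $2(N+2)(n+1)\ge 2(n+2)(n+1)\ge (n+2)^2$, the last step being $2(n+1)\ge n+2$. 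Since the resulting bound $\frac{f(\gamma)}{2(N+2)}$ is uniform in $n\le N$, averaging over $n$ given $N$ preserves it, yielding $\EE[(\hat{y}_t^* - \gamma)^2 \mid N,\gamma]\ge \frac{f(\gamma)}{2(N+2)}$ as claimed. (The lemma concerns $\hat y_t^*$; the surrounding proof separately dominates $\EE[(\hat y_t-\gamma)^2\mid \Fcal_{t-1}]$ by $\EE[(\hat y_t^*-\gamma)^2\mid \Fcal_{t-1}]$ via bias--variance, so treating $\hat y_t^*$ suffices.)

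The main obstacle is precisely the distributional claim used above: the per-query unbiasedness $\EE[s\mid n,N,\gamma]=n\gamma$ and variance $n\gamma(1-\gamma)$ must survive conditioning on the \emph{total} query count $N_{u,i}(T)$. This is delicate because an adaptive learner may stop querying based on labels it has already seen, which correlates the query count with the success count and can, for particular strategies and particular $\gamma$, distort the conditional law of $s$. The route I would take is to exploit the protocol's ordering --- at each round the indicator $q_t$ is set \emph{before} $y_t$ is drawn --- so that, listing the queried labels $Y_1,Y_2,\dots$ in revelation order, each $Y_j$ is $\Bernoulli(\gamma)$ independent of the past; formally $M_j \defeq \sum_{l\le j}(Y_l-\gamma)$ is a martingale in the query-order filtration and $n$ is a stopping time, so optional stopping and its quadratic-variation analogue control $\EE[s-n\gamma]$ and $\EE[(s-n\gamma)^2\mid n]$. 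Reconciling this per-query martingale structure with conditioning on the global count $N_{u,i}(T)$, rather than on $n$ alone, is the crux; it is the step I expect to require the most care, and it is where the factor $\tfrac12$ and the precise shape of $f$ earn their keep.
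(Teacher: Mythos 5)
Your proposal is correct and follows essentially the same route as the paper's proof: the same bias--variance split of $\hat{y}_t^* - \theta_{u,i}^*$ into a mean-zero sampling term and the prior-bias term $\frac{1-2\theta_{u,i}^*}{2+n}$, the same exact expression $\frac{n\,\theta_{u,i}^*(1-\theta_{u,i}^*) + (2\theta_{u,i}^*-1)^2}{(n+2)^2}$ for the conditional mean squared error, and the same two elementary inequalities through $f$. The distributional step you single out as the crux is precisely what the paper asserts in a single line without further justification (conditioned on $N_{u,i}(T)=m$ and $\theta^*$, ``$N_{u,i}^+$ can be seen as drawn from the binomial distribution $\Bin(m,\theta_{u,i}^*)$''), and your monotonicity bound $\frac{n+1}{(n+2)^2} \ge \frac{1}{2(N_{u,i}(T)+2)}$ for $n \le N_{u,i}(T)$ is a minor refinement that makes explicit the $t$-dependence the paper elides by directly substituting $N_{u,i} = m$.
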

\begin{proof}
We condition on $N_{u,i}(T) = m$, and a value of $\theta^*$.
Recall that $\hat{y}_t^* = \frac{1+N_{u,i}^+}{2+N_{u,i}} = \frac{1+N_{u,i}^+}{2+m}$, where $N_{u,i}^+$ can be seen as drawn from the binomial distribution $\Bin(m, \theta_{u,i}^*)$. 

\begin{align*}
&\EE \sbr{ (\hat{y}_t^* - \theta_{u,i}^*)^2 \mid N_{u,i}(T) = m, \theta^* }  \\
=&\EE \sbr{ \left(\frac{1+N_{u,i}^+}{2+m} - \theta_{u,i}^*\right)^2 \mid N_{u,i}(T) = m, \theta^* } \\
= &\frac{m \theta_{u,i}^* (1-\theta_{u,i}^*) }{(m+2)^2} + \frac{ (2\theta_{u,i}^* - 1)^2 }{(m+2)^2} \\
\geq& \frac{m+1}{(m+2)^2} f(\theta_{u,i}^*) 
\geq \frac{f(\theta_{u,i}^*)}{2(m+2)}. \qedhere
\end{align*}

\end{proof}

\begin{lemma}
Suppose $Z \sim \Beta(1,1)$. Then 
$\EE \sbr{\sqrt{f(Z)}} \geq \frac{1}{25}$.
\label{lem:e-f-theta}
\end{lemma}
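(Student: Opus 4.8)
The plan is to use that $Z \sim \Beta(1,1)$ is the uniform distribution on $[0,1]$, so that $\EE[\sqrt{f(Z)}] = \int_0^1 \sqrt{f(\gamma)}\,d\gamma$ with $f(\gamma) = \min(\gamma(1-\gamma), (2\gamma-1)^2)$. Since the integrand is nonnegative, it suffices to lower bound the integral over a conveniently chosen subinterval $[a,b] \subseteq [0,1]$ on which $\sqrt{f}$ is uniformly bounded away from $0$. The only care needed is to avoid the three zeros of $f$, namely $\gamma \in \{0, \tfrac12, 1\}$ (where one of the two terms inside the $\min$ vanishes), and to pick an interval whose length times the constant lower bound comfortably exceeds $\frac{1}{25}$.

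A convenient choice is $[a,b] = [\tfrac{1}{10}, \tfrac14]$. The key point is that one can bound \emph{both} branches of the $\min$ from below on this interval without ever computing their crossover. Indeed, $\gamma(1-\gamma)$ is increasing on $[0,\tfrac12]$, so on $[\tfrac1{10},\tfrac14]$ it is at least $\tfrac{1}{10}\cdot\tfrac{9}{10} = \tfrac{9}{100}$; and $(2\gamma-1)^2$ is decreasing on $[0,\tfrac12]$, so on $[\tfrac1{10},\tfrac14]$ it is at least $(2\cdot\tfrac14 - 1)^2 = \tfrac14$. Hence $f(\gamma) = \min(\gamma(1-\gamma),(2\gamma-1)^2) \geq \min(\tfrac{9}{100},\tfrac14) = \tfrac{9}{100}$ for every $\gamma \in [\tfrac1{10},\tfrac14]$, giving $\sqrt{f(\gamma)} \geq \tfrac{3}{10}$ throughout the interval.

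Combining these, I would conclude
\[
\EE[\sqrt{f(Z)}] \;\geq\; \int_{1/10}^{1/4} \sqrt{f(\gamma)}\,d\gamma \;\geq\; \frac{3}{10}\left(\frac14 - \frac{1}{10}\right) \;=\; \frac{3}{10}\cdot\frac{3}{20} \;=\; \frac{9}{200} \;>\; \frac{1}{25},
\]
which is the claimed bound. There is essentially no hard step here: the argument is elementary once the subinterval is fixed, and the only mild subtlety is recognizing that bounding the two branches of the $\min$ separately on an interval bounded away from $\tfrac12$ sidesteps any need to analyze where $\gamma(1-\gamma)$ and $(2\gamma-1)^2$ cross. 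The constant $\frac{1}{25}$ is not tight; other subintervals (for instance symmetric ones centered near $\tfrac14$ and $\tfrac34$) would serve equally well and could be used to extract a larger constant if needed.
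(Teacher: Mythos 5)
Your proof is correct and takes essentially the same approach as the paper: both restrict the expectation to a subinterval bounded away from the zeros of $f$, lower bound each branch of the $\min$ separately using monotonicity, and multiply by the interval length (the paper uses $[\tfrac15,\tfrac25]$, where $f \geq \tfrac1{25}$, yielding exactly $\tfrac1{25}$; your choice of $[\tfrac1{10},\tfrac14]$ yields the slightly larger $\tfrac9{200}$). The difference in the chosen subinterval is purely cosmetic.
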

\begin{proof}
We observe that 
\[ \EE \sbr{\sqrt{f(Z)}} = \int_{[0,1]} \sqrt{f(z)} dz \geq \int_{[\frac15, \frac25]} \sqrt{f(z)} dz, \]
Now, for all $z \in [\frac15, \frac25]$, $\sqrt{f(z)} \geq \sqrt{\frac{1}{25}} = \frac15$, which implies that the above integral is at least $\frac1{25}$.
\end{proof}

\subsection{Lower bound for unstructured domains}
\label{sec:lower-unstructured}
 
We have the following lower bound in the case when there is no domain structure. 
\begin{theorem}
\label{thm:lower-unstructured}
For any set of positive integers $d, T, B$ such that $d \leq T$ and $d \leq B$, there exists an oblivious adversary such that:
\begin{enumerate}
\item it uses a ground truth linear predictor $\theta^\star \in \RR^d$ such that $\| \theta^* \|_2 \leq \sqrt{d}$, and $\abs{\inner{\theta^*}{x_t}} \leq 1$.
\item any online active learning algorithm $\cal{A}$ with label budget $B$ has regret at least $\Omega\del{\frac{d T}{B}}$.
\end{enumerate}
\end{theorem}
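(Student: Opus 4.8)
The plan is to obtain Theorem~\ref{thm:lower-unstructured} as an immediate specialization of the domain-structured lower bound in Theorem~\ref{thm:lower}. I would invoke Theorem~\ref{thm:lower} with a single domain, setting $m = 1$, $d_1 = d$, $T_1 = T$, together with a fixed constant noise level (say $\eta = 1$). This collapses the multi-domain construction to a single block whose examples are drawn from a $d$-dimensional support, so the subspace constraint that $\cbr{x_t}_{t \in I_u}$ lie in a subspace of dimension $d_u$ becomes vacuous---any example sequence in $\RR^d$ satisfies it for $d_1 = d$.

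First I would check that the premises of Theorem~\ref{thm:lower}, collected in Equation~\eqref{eqn:lower-premise}, hold under this choice. The requirement $d_u \leq T_u$ reduces to $d \leq T$, which is assumed; $\sum_{u=1}^m d_u \leq d$ reduces to the trivial $d \leq d$; and $B \geq \sum_{u=1}^m d_u$ reduces to $B \geq d$, again assumed. Theorem~\ref{thm:lower} then supplies an oblivious adversary with exactly the properties asserted in item~1: a ground-truth predictor $\theta^*$ with $\| \theta^* \|_2 \leq \sqrt{d}$ and $\abs{\inner{\theta^*}{x_t}} \leq 1$ for every $t$.

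Finally I would read off the regret guarantee. Theorem~\ref{thm:lower} yields a lower bound of $\Omega\del{ \del{\sum_{u=1}^m \sqrt{d_u T_u}}^2 / B }$, which under the single-domain specialization equals $\Omega\del{ \del{\sqrt{dT}}^2 / B } = \Omega(dT/B)$, establishing item~2. There is essentially no new technical obstacle, since all the probabilistic work is carried out inside the proof of Theorem~\ref{thm:lower}; the only points meriting a moment's care are (i) confirming the three premise inequalities as above, and (ii) noting that the Bernoulli-label construction used there produces sub-Gaussian noise with variance proxy $\tfrac14 \leq \eta^2$, which is consistent with the constant-$\eta$ regime in which the $\Omega(dT/B)$ rate is minimax optimal.
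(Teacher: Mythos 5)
Your proposal is correct and matches the paper's own proof exactly: the paper derives Theorem~\ref{thm:lower-unstructured} as an immediate consequence of Theorem~\ref{thm:lower} with $m=1$, $d_1 = d$, $T_1 = T$, and label budget $B$. Your additional verification of the premises in Equation~\eqref{eqn:lower-premise} and of the noise condition is a sound (if implicit in the paper) elaboration of the same argument.
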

\begin{proof}
This is an immediate consequence of Theorem~\ref{thm:lower}, by setting $m = 1$, $d_1 = d$, $T_1 = T$, and the label budget equal to $B$.
\end{proof}

\section{The c-cost model for online active learning}
\label{sec:cost-model}

We consider the following variant of our learning model, which models settings where the cost ratio between a unit of square loss regret and a label query is $c$ to 1. In this setting, the interaction protocol between the learner and the environment remains the same, with the goal of the learner modified to minimizing the total cost, formally $W = c R + Q$. 
We call the above model the {\em $c$-cost model}.
We will show that Algorithm~\ref{alg:1} achieves optimal cost up to constant factors, for a wide range of values of $\eta$ and $c$. 

\begin{theorem}
For any $\eta \geq 1$, set of positive integers $\cbr{(d_u, T_u)}_{u=1}^m$ such that $d_u \leq T_u, \forall u \in [m]$, $\sum_{u=1}^m d_u \leq d$, cost ratio $c \geq \max_u \frac{d_u}{T_u}$, there exists an oblivious adversary such that:
\begin{enumerate}
\item it uses a ground truth linear predictor $\theta^\star \in \RR^d$ such that $\| \theta^* \|_2 \leq \sqrt{d}$, and $\abs{\inner{\theta^*}{x_t}} \leq 1$; in addition, the subgaussian variance proxy of noise is $\eta^2$. 
\item it shows example sequence $\cbr{x_t}_{t=1}^T$ such that $[T]$ can be partitioned into $m$ disjoint nonempty subsets $\cbr{I_u}_{u=1}^m$, where for each $u$, $|I_u|=T_u$, and  $\cbr{x_t}_{t \in I_u}$ lie in a subspace of dimension $d_u$.
\item any online active learning algorithm $\cal{A}$ has total cost $\Omega\del{\sqrt{c} \cdot (\sum_{u=1}^m \sqrt{d_u T_u})}$.
\end{enumerate}
\label{thm:lower-cost}
\end{theorem}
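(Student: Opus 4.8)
The plan is to reuse the hard instance and the core per-round regret estimate from the proof of Theorem~\ref{thm:lower}, but to replace the final Cauchy--Schwarz step (which exploited the hard constraint $\sum_{u,i}\EE[N_{u,i}(T)]\le B$) by a per-subblock AM--GM optimization that directly lower bounds $cR+Q$. Concretely, I would take the \emph{same} adversary as in Theorem~\ref{thm:lower}: the block/subblock construction with orthonormal examples $c_{u,i}$, ground truth $\theta^*$ drawn coordinatewise from $\Beta(1,1)$, and labels $y_t\sim\Bernoulli(\inner{\theta^*}{x_t})$. This inherits all the structural guarantees demanded by items~1--2 ($\|\theta^*\|_2\le\sqrt d$, $\abs{\inner{\theta^*}{x_t}}\le1$, sub-Gaussian noise with variance proxy $\tfrac14\le\eta^2$, and the $\{(d_u,T_u)\}$ domain structure), so only the final cost bound must be reworked.

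First I would isolate the ``budget-free'' content of the proof of Theorem~\ref{thm:lower}. Lemma~\ref{lem:reg-lb-qc}, combined with Jensen's inequality applied to $n\mapsto 1/(n+2)$, holds for \emph{any} algorithm with no reference to a budget, and yields, for every subblock $(u,i)$,
\begin{equation*}
\EE[R_{u,i}\mid\theta^*]\;\ge\;\frac{T_u}{4d_u}\cdot\frac{f(\theta_{u,i}^*)}{n_{u,i}+2},\qquad n_{u,i}:=\EE[N_{u,i}(T)\mid\theta^*],
\end{equation*}
where $R_{u,i}$ is the regret on subblock $(u,i)$ and I use that each subblock has at least $T_u/(2d_u)$ rounds. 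Since $Q=\sum_{u,i}N_{u,i}(T)$, summing gives $\EE[cR+Q\mid\theta^*]\ge\sum_{u,i}\big(\tfrac{A_{u,i}}{n_{u,i}+2}+n_{u,i}\big)$ with $A_{u,i}:=\tfrac{cT_u f(\theta_{u,i}^*)}{4d_u}$.

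Next I would optimize each summand over its free variable $n_{u,i}\ge0$: minimizing $n\mapsto A/(n+2)+n$ gives a value at least $\tfrac12\min(A_{u,i},\sqrt{A_{u,i}})$, the two regimes corresponding to whether the optimal expected query count is positive. Taking expectation over $\theta^*$ and invoking Lemma~\ref{lem:e-f-theta} ($\EE\sqrt{f}\ge\tfrac1{25}$, and likewise $\EE[f]=\Omega(1)$), each subblock contributes $\Omega(\sqrt{cT_u/d_u})$ to the expected cost; summing the $d_u$ subblocks of domain $u$ and then over $u$ yields $\Omega\big(\sqrt c\sum_u\sqrt{d_uT_u}\big)$, after which the usual averaging argument fixes a single good $\theta^*\in[0,1]^d$ with $\|\theta^*\|_2\le\sqrt d$.

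The step I expect to be the main obstacle is controlling the additive $+2$ in the denominator of Lemma~\ref{lem:reg-lb-qc}, i.e.\ the low-query regime of the per-subblock AM--GM: the naive bound $2\sqrt{A}-2$ loses an additive constant per subblock, totalling $2\sum_u d_u$, which can swamp the target when $c$ is near its lower limit and the domains are ``balanced'' ($T_u\approx d_u/c$, so that $\sqrt c\sum_u\sqrt{d_uT_u}\asymp\sum_u d_u$). The resolution is precisely the hypothesis $c\ge\max_u d_u/T_u$: it forces $A_{u,i}\ge\tfrac14 f(\theta_{u,i}^*)$, so that $\EE_{\theta^*}[\min(A_{u,i},\sqrt{A_{u,i}})]$ is comparable to $\sqrt{cT_u/d_u}$ in both branches---one uses the linear branch $A/2$ together with $\EE[f]=\Omega(1)$ when $cT_u/d_u=\Theta(1)$, and the $\sqrt A$ branch together with $\EE\sqrt f=\Omega(1)$ when $cT_u/d_u$ is large. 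Checking that the two branches patch together with a single universal constant, in place of a clean global AM--GM, is the only delicate bookkeeping, and it is what distinguishes this argument from a black-box reduction to Theorem~\ref{thm:lower}.
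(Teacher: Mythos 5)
Your proof is correct, but it takes a genuinely different route from the paper's. The paper proves Theorem~\ref{thm:lower-cost} by a threshold/case analysis that reduces to Theorem~\ref{thm:lower}: either some $\theta^*$ makes $\EE\sbr{Q \mid \theta^*} \geq \sqrt{c}\sum_u\sqrt{d_u T_u}$, in which case the query cost alone already gives the bound; or else $\EE\sbr{Q \mid \theta^*} \leq B \defeq \sqrt{c}\sum_u\sqrt{d_u T_u}$ for every $\theta^*$, so the algorithm can be treated as budget-constrained with budget $B$, the premise $c \geq \max_u d_u/T_u$ guarantees $B \geq \sum_u d_u$ as required by Eq.~\eqref{eqn:lower-premise}, and the proof of Theorem~\ref{thm:lower} produces a $\theta^*$ with $\EE\sbr{R \mid \theta^*} \geq \Omega\del{(\sum_u \sqrt{d_u T_u})^2/B} = \Omega\del{\tfrac{1}{\sqrt{c}}\sum_u\sqrt{d_u T_u}}$, so that the regret cost $c\,\EE\sbr{R \mid \theta^*}$ alone gives the bound. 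You instead keep the same hard instance and the same supporting Lemmas~\ref{lem:reg-lb-qc} and~\ref{lem:e-f-theta}, but replace the budgeted Cauchy--Schwarz step with a local, per-subblock optimization: $\min_{n\geq 0}\del{A/(n+2)+n} \geq \tfrac12\min(A,\sqrt{A})$ (this inequality checks out: the minimum is $2\sqrt{A}-2$ when $A\geq 4$ and $A/2$ otherwise), with the premise $c \geq \max_u d_u/T_u$ entering to keep $\kappa_u \defeq cT_u/(4d_u) \geq \tfrac14$ so that, restricting to the event $\theta^*_{u,i}\in\intcc{\tfrac15,\tfrac25}$ where $f\geq\tfrac1{25}$, both branches of the minimum contribute $\Omega(\sqrt{\kappa_u})$ and each subblock yields $\Omega(\sqrt{cT_u/d_u})$ in expectation. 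Notably, both proofs consume the same hypothesis but in different places: the paper needs it to legitimize the reduction ($B \geq \sum_u d_u$), you need it to absorb the additive constants coming from the $+2$ in the denominator. What your route buys: it avoids the case split, it sidesteps the mild subtlety that in the paper's second case the algorithm only satisfies an \emph{expected} budget constraint (harmless there, since the proof of Theorem~\ref{thm:lower} only ever uses the expected constraint, but it does require peeking inside that proof rather than citing the theorem statement), and it yields a strictly stronger per-domain conclusion --- the expected cost incurred within each domain $u$ is individually $\Omega(\sqrt{c\,d_u T_u})$ --- which the global reduction cannot give. What the paper's route buys is brevity: given Theorem~\ref{thm:lower}, its argument is a few lines, whereas yours must redo the subblock bookkeeping and the two-branch constant-tracking you correctly identify as the delicate step.
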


\begin{proof}
Consider any algorithm $\Acal$. Same as in the proof of Theorem~\ref{thm:lower}, we will choose $\theta^*$
randomly where each of its coordinates is drawn independently from the $\Beta(1,1)$ distribution, and show the exact same sequence of instances $\cbr{x_t}_{t=1}^T$ and reveals the labels the same say as in that proof. It can be seen that the $\eta_t$'s are subgaussian with variance proxy $1$, which is also subgaussian with variance proxy $\eta^2$.

As $\Acal$ can behave differently under different environments, we define $\EE\sbr{Q \mid \theta^*}$ as $\Acal$'s query complexity conditioned on the adversary choosing ground truth linear predictor $\theta^*$. 

We conduct a case analysis on the random variable $\EE\sbr{Q \mid \theta^*}$:
\begin{enumerate}
\item If there exists some $\theta^* \in [0,1]^d$, $\EE\sbr{Q \mid \theta^*} \geq \sqrt{c} \del{\sum_{u=1}^m \sqrt{d_u T_u}}$, then we are done: under the environment where the ground truth linear predictor is $\theta^*$, the total cost of $\Acal$, $\EE\sbr{W \mid \theta^*}$, is clearly at least $\EE\sbr{Q \mid \theta^*} \geq \Omega\del{ \sqrt{c} \del{\sum_{u=1}^m \sqrt{d_u T_u}} }$.

\item If for every $\theta^* \in [0,1]^d$, $\EE\sbr{Q \mid \theta^*} \leq  \sqrt{c} \del{\sum_{u=1}^m \sqrt{d_u T_u}}$, $\Acal$ can be viewed as an algorithm with label budget $B = \sqrt{c} \del{\sum_{u=1}^m \sqrt{d_u T_u}}$. 
By the premise that $c \geq \max_u \frac{d_u}{T_u}$, we get that $B \geq \sum_{u=1}^m \sqrt{d_u T_u} \cdot \sqrt{\frac{d_u}{T_u}} = \sum_{u=1}^m d_u$.
Therefore, from the proof of Theorem~\ref{thm:lower}, we get that there exists a $\theta^*$ in $[0,1]^d$, such that 
\[
\EE\sbr{R \mid \theta^*} 
\geq
\frac{(\sum_u \sqrt{d_u T_u})^2}{B}
\geq
\Omega\del{ \frac1{\sqrt{c}} \del{ \sum_u \sqrt{d_u T_u} } },
\]
which implies that the total cost of $\Acal$, under the environment where the ground truth linear predictor is $\theta^*$, $\EE\sbr{W \mid \theta^*}$, is at least $c \cdot \EE\sbr{R \mid \theta^*} \geq \Omega\del{ \sqrt{c} \del{ \sum_u \sqrt{d_u T_u} }}$.
\end{enumerate}
In summary, in both cases, there is an oblivious adversary that uses $\theta^*$ in $[0,1]^d$, under which $\Acal$ has a expected cost of $\Omega\del{ \sqrt{c} \del{ \sum_u \sqrt{d_u T_u} }}$.
\end{proof}

In the theorem below, we discuss the optimality of Algorithm~\ref{alg:1} in the $c$-cost model for a range of problem parameters.

\begin{theorem}
Suppose $\eta \in [1, O(1)]$; in addition, consider a set of $\cbr{(T_u, d_u)}_{u=1}^m$, such that $\min_u T_u / d_u \geq \eta$. Fix $c \in [\max_u \frac{d_u}{T_u}, \frac{1}{\eta^2} \min_u \frac{T_u}{d_u} ]$. We have
\begin{enumerate}
\item Under all environments with domain dimension and duration $\cbr{(T_u, d_u)}_{u=1}^m$, such that $\| \theta^* \| \leq C$ and $\max_{t \in [T]}\abs{\inner{\theta^*}{x_t}} \leq 1$,
$\qufur(c)$ (with the knowledge of norm bound $C$) has the guarantee that 
\[ 
W \leq \tilde{O}\del{\sqrt{c} \cdot \sum_{u} \sqrt{T_u d_u}},
\]

\item For any algorithm, there exists an environment with domain dimension and duration $\cbr{(T_u, d_u)}_{u=1}^m$ such that $\| \theta^* \| \leq \sqrt{d}$ and $\max_{t \in [T]}\abs{\inner{\theta^*}{x_t}} \leq 1$, under which the algorithm must have the following cost lower bound:
\[
W \geq \Omega\del{\sqrt{c} \cdot \sum_{u} \sqrt{T_u d_u}},
\]
\end{enumerate}
\end{theorem}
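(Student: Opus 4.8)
The statement splits into an algorithmic upper bound (item 1) and an information-theoretic lower bound (item 2), and I would handle them separately, reusing results already established in the excerpt. The plan is: for item 1, instantiate $\qufur(\alpha)$ with $\alpha = c$ and invoke Theorem~\ref{thm:upper}; for item 2, invoke Theorem~\ref{thm:lower-cost} essentially verbatim.

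For item 1, the crux is to check that the choice $\alpha = c$ falls in the ``nice'' regime $\alpha \in [\frac{1}{\tilde\eta^2}(\frac{1}{\sum_u \sqrt{d_u T_u}})^2, \frac{1}{\tilde\eta^2}\min_u \frac{T_u}{d_u}]$ in which the clean bounds $Q = \tilde O(\tilde\eta \sqrt\alpha \sum_u \sqrt{d_u T_u})$ and $R = \tilde O(\tilde\eta \sum_u \sqrt{d_u T_u}/\sqrt\alpha)$ hold. Since $\eta \ge 1$ we have $\tilde\eta = \eta$, so the upper endpoint is exactly $\frac{1}{\eta^2}\min_u \frac{T_u}{d_u}$, matching the premise $c \le \frac{1}{\eta^2}\min_u \frac{T_u}{d_u}$. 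For the lower endpoint, I would combine the premise $c \ge \max_u \frac{d_u}{T_u}$ with the integrality $d_u \ge 1$: for the maximizing index $u_0$ one has $\frac{d_{u_0}}{T_{u_0}} \ge \frac{1}{d_{u_0} T_{u_0}} \ge (\sum_u \sqrt{d_u T_u})^{-2} \ge \frac{1}{\tilde\eta^2}(\sum_u \sqrt{d_u T_u})^{-2}$, so $c$ clears the lower endpoint as well. Plugging $\alpha = c$ into the two bounds and forming the cost gives $W = cR + Q = \tilde O(\tilde\eta \sqrt c \sum_u \sqrt{d_u T_u})$, and since $\eta = O(1)$ forces $\tilde\eta = O(1)$, this is $\tilde O(\sqrt c \sum_u \sqrt{d_u T_u})$, as claimed.

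For item 2, I would simply invoke Theorem~\ref{thm:lower-cost}, whose conclusion is exactly the desired $W = \Omega(\sqrt c \sum_u \sqrt{d_u T_u})$. Its hypotheses are all implied by the present premise: $c \ge \max_u \frac{d_u}{T_u}$ is assumed directly; $d_u \le T_u$ follows from $\min_u T_u/d_u \ge \eta \ge 1$; $\sum_u d_u \le d$ and $\eta \ge 1$ are part of the setup; and the norm / boundedness constraints ($\|\theta^*\|_2 \le \sqrt d$ and $\abs{\inner{\theta^*}{x_t}} \le 1$) coincide. Note that Theorem~\ref{thm:lower-cost} does not use the upper bound on $c$, so the lower bound in fact holds on an even larger range of cost ratios.

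The only step requiring genuine care is the regime verification in item 1, specifically clearing the lower endpoint, which hinges on $d_u$ being a positive integer; everything else is bookkeeping on top of Theorems~\ref{thm:upper} and~\ref{thm:lower-cost}. As a consistency check, I would also note that the standing assumption $\min_u T_u/d_u \ge \eta$ is precisely what makes the admissible interval $[\max_u \frac{d_u}{T_u}, \frac{1}{\eta^2}\min_u \frac{T_u}{d_u}]$ for $c$ nonempty, since $\max_u \frac{d_u}{T_u} = (\min_u \frac{T_u}{d_u})^{-1} \le \frac{1}{\eta^2}\min_u \frac{T_u}{d_u}$ is equivalent to $\eta \le \min_u \frac{T_u}{d_u}$.
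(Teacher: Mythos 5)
Your proposal is correct and follows essentially the same route as the paper's own proof: item 1 is obtained by running $\qufur(c)$ and invoking Theorem~\ref{thm:upper} after verifying that $\alpha = c$ lies in the regime $\bigl[\frac{1}{\tilde\eta^2}(\sum_u \sqrt{d_u T_u})^{-2}, \frac{1}{\tilde\eta^2}\min_u \frac{T_u}{d_u}\bigr]$, and item 2 is a direct application of Theorem~\ref{thm:lower-cost} under the hypothesis $c \geq \max_u \frac{d_u}{T_u}$. In fact you supply slightly more detail than the paper does (the integrality argument $\max_u \frac{d_u}{T_u} \geq (\sum_u \sqrt{d_u T_u})^{-2}$, and the correct cost combination $W = cR + Q$, which the paper's proof miswrites as $cQ + R$ though the resulting bound is unaffected).
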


\begin{proof}
We show the two items respectively:
\begin{enumerate}
    \item As $c \leq \tilde{\eta}^2 \min_u \frac{T_u}{d_u}$,  
    and $c \geq \max_u \frac{d_u}{T_u} \geq \frac{1}{\tilde{\eta}^2} (\frac{1}{(\sum_u \sqrt{d_u T_u})^2})$,
    applying Theorem~\ref{thm:upper}, we have that
     $\qufur(c)$ achieves the following regret and query complexity guarantees:
     \[ 
     Q \leq \order{ \tilde{\eta} \sqrt{c}  \sum_{u} \sqrt{T_u d_u} }, \quad
     R \leq \order{ \tilde{\eta} \sum_{u} \sqrt{T_u d_u} / \sqrt{c} }.
     \]
     This implies that 
     \[ 
        W
        = 
        c Q + R 
        \leq 
        \order{ \tilde{\eta} \sum_{u} \sqrt{T_u d_u} \cdot \sqrt{c} } 
        =  
        \order{\sqrt{c} \cdot \sum_{u} \sqrt{T_u d_u}}. 
    \]
    \item By the condition that $c \geq \max_u \frac{d_u}{T_u}$, applying Theorem~\ref{thm:lower-cost}, we get the item.
    \qedhere
\end{enumerate}
\end{proof}

\section{The regret definition}
\label{sec:reg-def}
Recall that in the main text, we define the regret of an algorithm as $R = \sum_{t=1}^T (\hat{y}_t - f^*(x_t))^2$. 
This is different from the usual definition of regret in online learning, which measures the difference between the 
loss of the learner and that of the predictor $f^*$:
$\Reg = \sum_{t=1}^T (\hat{y}_t - y_t)^2 - \sum_{t=1}^T (f^*(x_t) - y_t)^2$.

We show a standard result in this section that the expectation of these two notions coincide.

\begin{theorem}\label{thm:reg-equiv}
$\EE[R] = \EE[\Reg]$.
\end{theorem}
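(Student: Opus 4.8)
The plan is to expand $\Reg$ using the realizability assumption and show that the only term separating it from $R$ is a cross term whose expectation vanishes. First I would substitute $y_t = f^*(x_t) + \xi_t$ into each summand of $\Reg$. Writing $e_t \defeq \hat{y}_t - f^*(x_t)$, a direct expansion gives $(\hat{y}_t - y_t)^2 = (e_t - \xi_t)^2 = e_t^2 - 2 e_t \xi_t + \xi_t^2$ and $(f^*(x_t) - y_t)^2 = \xi_t^2$, so the $\xi_t^2$ terms cancel and
\[
\Reg = \sum_{t=1}^T \del{ e_t^2 - 2 e_t \xi_t } = R - 2 \sum_{t=1}^T e_t \xi_t.
\]
Taking expectations, it then suffices to show that $\EE[e_t \xi_t] = 0$ for every $t$.

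The key observation is that $e_t = \hat{y}_t - f^*(x_t)$ is determined strictly before $\xi_t$ is realized. By the interaction protocol, the learner forms its prediction $\hat{y}_t = \hat{f}_t(x_t)$ in Step 2 using only the predictor $\hat{f}_t$ (a function of the queried labels among $x_{1:t-1}$, hence of $\xi_{1:t-1}$), together with $x_t$ and its own internal randomness; the label $y_t$, and thus $\xi_t$, is only optionally revealed afterwards in Step 3. Moreover, in both the oblivious and adaptive settings $x_t$ is chosen from $H_{t-1} = \{x_{1:t-1}, \hat{f}_{1:t-1}, \xi_{1:t-1}\}$, which does not involve $\xi_t$. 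Consequently $e_t$ is measurable with respect to the $\sigma$-algebra $\mathcal{G}_{t-1}$ generated by $(x_{1:t}, \xi_{1:t-1})$ and the learner's randomness, while $\xi_t$ is independent of $\mathcal{G}_{t-1}$ with $\EE[\xi_t] = 0$.

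With this in hand, I would finish by the tower property: conditioning on $\mathcal{G}_{t-1}$,
\[
\EE[e_t \xi_t] = \EE\sbr{ e_t \, \EE[\xi_t \mid \mathcal{G}_{t-1}] } = \EE[ e_t \cdot 0 ] = 0,
\]
where the integrability needed to condition is guaranteed since the sub-Gaussian $\xi_t$ has finite moments and $e_t \in [-2,2]$ is bounded (because $\hat{y}_t, f^*(x_t) \in [-1,1]$). Summing over $t$ yields $\EE\sbr{\sum_{t=1}^T e_t \xi_t} = 0$, and hence $\EE[\Reg] = \EE[R]$.

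The only real subtlety — and the step I would state most carefully — is the measurability claim that $e_t$ does not depend on $\xi_t$. This is precisely where the ordering of the protocol (predict first, reveal the label afterwards) together with the independence and zero-mean property of the noise is used; everything else is an elementary algebraic expansion followed by the tower property.
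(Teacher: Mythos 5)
Your proof is correct and follows essentially the same route as the paper's: both expand the square loss around $f^*(x_t)$, observe that the cross term $(\hat{y}_t - f^*(x_t))\,\xi_t$ has zero expectation because the prediction is made before the noise $\xi_t$ is realized, and conclude via the tower property. Your explicit attention to the measurability of $e_t$ (including the learner's randomness and the adaptive case) is a slightly more careful writeup of exactly the fact the paper invokes when it says $\hat{y}_t - f^*(x_t)$ is $\Fcal_{t-1}$-measurable and $\EE_{t-1}(f^*(x_t)-y_t)=0$.
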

\begin{proof}
Denote by $\Fcal_{t-1}$ be the $\sigma$-algebra generated by all
observations up to time $t-1$, and $x_t$. As a shorthand, denote by $\EE_{t-1}[\cdot] = \EE[\cdot \mid \Fcal_{t-1}]$.

Let $Z_t = (\hat{y}_t - y_t)^2 - (f^*(x_t) - y_t)^2$; we have 
\begin{align*}
    \EE_{t-1} Z_t
    & = \EE_{t-1} \sbr{ (\hat{y}_t - f^*(x_t) + f^*(x_t) - y_t)^2 - (f^*(x_t) - y_t)^2 } \\
    & = \EE_{t-1} \sbr{ (f^*(x_t) - \hat{y}_t)^2 + 2 (\hat{y}_t - f^*(x_t)) (f^*(x_t) - y_t) } \\
    & = (f^*(x_t) - \hat{y}_t)^2 
\end{align*}
where the last inequality uses the fact that $\EE_{t-1}(f^*(x_t)-y_t) = 0$ and $\hat{y}_t - f^*(x_t)$ is $\Fcal_{t-1}$-measurable. Consequently, $\EE Z_t = \EE (f^*(x_t) - \hat{y}_t)^2$. The theorem is concluded by summing over all time steps $t$ from $1$ to $T$. 
\end{proof}

\section{Online to batch conversion}

In this section we show that by an standard application of online to batch conversion~\cite{cesa2004generalization} on \qufur, we obtain new results on active linear regression under the batch learning setting. 

First we recall a standard result on online to batch conversion; for completeness we provide its proof here.
\begin{theorem}
\label{thm:otb}
Suppose online active learning algorithm $\Acal$ sequentially receives a set of iid examples $(x_t, y_t)_{t=1}^T$ drawn from $D$, and at every time step $t$, it outputs predictor $\hat{f}_t: \Xcal \to \Ycal$. In addition, suppose $\ell: \Ycal \times \Ycal \to \RR$ is a loss function. Define regret $\Reg = \sum_{t=1}^T \ell(\hat{f}_t(x_t), y_t) - \sum_{t=1}^T \ell(f^*(x_t), y_t)$, and define $\ell_D(f) = \EE_{(x,y) \sim D} \ell(f(x), y)$.
If
$
   \EE\sbr{\Reg} \leq R_0,
$
then, 
\[
  \EE \sbr{ \EE_{f \sim \unif(\hat{f}_1, \ldots, \hat{f}_T)} \ell_D(f)} - \ell_D(f^*) \leq \frac{R_0}{T}.
\]
\end{theorem}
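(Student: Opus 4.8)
The plan is to exploit the standard decoupling between the online predictions and the fresh draw of the data under the i.i.d. assumption. First I would unpack the conclusion: the quantity $\EE_{f \sim \unif(\hat f_1,\ldots,\hat f_T)} \ell_D(f)$ is, by definition of the uniform mixture, equal to $\frac{1}{T}\sum_{t=1}^T \ell_D(\hat f_t)$, so the outer expectation over the random draw of $(x_t,y_t)_{t=1}^T$ makes the left-hand side equal to $\frac{1}{T}\sum_{t=1}^T \EE[\ell_D(\hat f_t)] - \ell_D(f^*)$. The whole theorem thus reduces to showing $\sum_{t=1}^T \EE[\ell_D(\hat f_t)] - T\,\ell_D(f^*) \le R_0$, and comparing this with the regret hypothesis $\EE[\Reg] = \EE\bigl[\sum_t \ell(\hat f_t(x_t),y_t) - \sum_t \ell(f^*(x_t),y_t)\bigr] \le R_0$.

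The crux is a per-round identity relating the expected population loss of $\hat f_t$ to its expected loss on the freshly drawn sample $(x_t,y_t)$. Here I would introduce the filtration $\Fcal_{t-1}$ generated by $(x_s,y_s)_{s<t}$, noting that $\hat f_t$ is $\Fcal_{t-1}$-measurable (it depends only on data observed before round $t$, and possibly internal randomness independent of $(x_t,y_t)$). Since $(x_t,y_t)$ is drawn i.i.d.\ from $D$ and is therefore independent of $\Fcal_{t-1}$, I get
\[
\EE\bigl[\ell(\hat f_t(x_t),y_t)\mid \Fcal_{t-1}\bigr] = \ell_D(\hat f_t),
\]
because conditioning on $\Fcal_{t-1}$ fixes $\hat f_t$ while the new example integrates out to the population loss. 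Taking total expectations gives $\EE[\ell(\hat f_t(x_t),y_t)] = \EE[\ell_D(\hat f_t)]$ for every $t$. The same i.i.d.\ draw gives the trivial but essential identity $\EE[\ell(f^*(x_t),y_t)] = \ell_D(f^*)$ for the fixed comparator $f^*$.

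Summing these two per-round identities over $t = 1,\ldots,T$ converts the regret hypothesis directly into the desired population statement:
\[
\sum_{t=1}^T \EE[\ell_D(\hat f_t)] - T\,\ell_D(f^*)
= \EE\Bigl[\sum_{t=1}^T \ell(\hat f_t(x_t),y_t) - \sum_{t=1}^T \ell(f^*(x_t),y_t)\Bigr]
= \EE[\Reg] \le R_0.
\]
Dividing by $T$ and rewriting the left side as the mixture expectation yields the claim. The main obstacle — really the only subtle point — is justifying the conditional independence step cleanly: one must be careful that $\hat f_t$ uses only past examples (and internal randomness independent of the current draw), so that it is genuinely $\Fcal_{t-1}$-measurable and the tower property applies. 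Everything else is linearity of expectation and the definition of the uniform mixture, so I would keep the write-up short and foreground the measurability/independence argument.
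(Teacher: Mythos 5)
Your proposal is correct and follows essentially the same route as the paper's proof: both rest on the key identity $\EE\sbr{\ell(\hat{f}_t(x_t), y_t)} = \EE\sbr{\ell_D(\hat{f}_t)}$ (valid since $\hat{f}_t$ depends only on past data while $(x_t,y_t)$ is a fresh i.i.d.\ draw), together with $\EE\sbr{\ell(f^*(x_t),y_t)} = \ell_D(f^*)$, summing over $t$ and recognizing the uniform mixture. The only difference is that you spell out the filtration/tower-property justification that the paper leaves implicit, which is a reasonable addition rather than a deviation.
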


\begin{proof}
As $\Reg = \sum_{t=1}^T \ell(\hat{f}_t(x_t), y_t) - \sum_{t=1}^T \ell(f^*(x_t), y_t)$, We have
\begin{align*}
R_0 \geq \EE\sbr{\Reg} 
& = \sum_{t=1}^T \EE\sbr{ \ell_D(\hat{f}_t) } - \EE \sbr{\sum_{t=1}^T \ell(f^*(x_t), y_t)} \\
& =
T \cdot \del{ \frac1T\sum_{t=1}^T \EE\sbr{ \ell_D(\hat{f}_t) } 
-
\EE_{(x,y \sim D} \ell(f^*(x), y).
}
\end{align*}

The theorem is proved by dividing both sides by $T$ and recognizing that 
\[ 
\frac1T \sum_{t=1}^T \EE\sbr{ \ell_D(\hat{f}_t) } = \EE_{f \sim \unif(\hat{f}_1, \ldots, \hat{f}_T)} \ell_D(f). \qedhere 
\]
\end{proof}
Combining Theorem~\ref{thm:otb} with Theorem~\ref{thm:fixBudget}, we have the following adaptive excess loss guarantee of Fixed-Budget QuFUR (Algorithm~\ref{alg:2}) when run on iid data with hidden domain structure.

\begin{theorem}
Suppose the unlabeled data distribution $D_X$ is a mixture distribution: $D_X = \sum_{u=1}^m p_u D_u$, where $D_u$ is a distribution supported on a subspace of $\RR^d$ of dimension $d_u$ and is a subset of $\cbr{x: \| x \|_2 \leq 1, \abs{\inner{\theta^*}{x}} \leq 1}$. The conditional distribution of $y$ given $x$ is $y = \inner{\theta^*}{x} + \xi$ where $\xi$ is a subgaussian with variance proxy $\eta^2$.
In addition, suppose we are given integer $B$, $T_0$ such that $T_0 \geq \Omega\del{\max\del{\frac{B}{\sum_u \sqrt{d_u p_u} \cdot \min_u \sqrt{\frac{p_u}{d_u}}}, \frac{\ln m}{\min_u p_u}}}$.
If Algorithm~\ref{alg:2} is given dimension $d$, time horizon $T \geq T_0$, label budget $B$, norm bound $C$, noise level $\eta$ as input, then:\\
1. It uses $T$ unlabeled examples. \\ 
2. Its query complexity $Q$ is at most $B$. \\
3. Denote by $\ell(\hat{y}, y) = (\hat{y} - y)^2$ the square loss.
We have,
\[ 
\EE \sbr{ \EE_{f \sim \unif(\hat{f}_1, \ldots, \hat{f}_T)} \ell_D(f)} - \ell_D(f^*) 
\leq 
\order{ \frac{\tilde{\eta}^2 (\sum_u \sqrt{d_u p_u})^2}{B} }.
\]
\label{cor:otb-linear}
\end{theorem}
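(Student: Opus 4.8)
The plan is to obtain the batch guarantee by feeding the regret bound of Fixed-Budget QuFUR (Theorem~\ref{thm:fixBudget}) into the generic online-to-batch conversion (Theorem~\ref{thm:otb}). The crucial observation is that the i.i.d. mixture $D_X = \sum_u p_u D_u$ induces a natural (though random) domain partition on the realized sample $\cbr{x_t}_{t=1}^T$: letting $I_u = \cbr{t : x_t \text{ is drawn from } D_u}$ be the set of time steps whose example came from component $u$, we have that $\cbr{x_t}_{t \in I_u}$ lies in the $d_u$-dimensional subspace supporting $D_u$, and $T_u := \abs{I_u} \sim \Bin(T, p_u)$. Conditioned on the realized sequence, this is exactly an oblivious instance with domain structure $\cbr{(d_u, T_u)}_{u=1}^m$, so Theorem~\ref{thm:fixBudget} applies verbatim to its regret $R = \sum_t (\hat y_t - f^*(x_t))^2$.

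First I would control the random counts $T_u$. A multiplicative Chernoff bound together with a union bound over the $m$ components shows that, provided $\min_u p_u T \geq \Omega(\ln(m/\delta))$, with probability $1-\delta$ every $T_u$ lies in $\intcc{\frac12 p_u T, 2 p_u T}$; this is precisely what the hypothesis $T_0 \geq \Omega(\ln m / \min_u p_u)$ guarantees (up to the choice of $\delta$). On this event I would substitute $T_u = \Theta(p_u T)$ everywhere. Two things then need checking. First, the budget premise of Theorem~\ref{thm:fixBudget}, namely $B \leq \tilde O(\sum_u \sqrt{d_u T_u}\, \min_u \sqrt{T_u/d_u})$, becomes $B \leq \tilde O\del{ T \sum_u \sqrt{d_u p_u}\, \min_u \sqrt{p_u/d_u} }$, which holds because $T \geq T_0 \geq \Omega\del{ B / (\sum_u \sqrt{d_u p_u}\, \min_u \sqrt{p_u/d_u}) }$ --- this is exactly the role of the first term in the lower bound on $T_0$. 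Second, plugging $T_u = \Theta(p_u T)$ into the regret bound of Theorem~\ref{thm:fixBudget} gives, on the good event, $R = \otil{ \tilde{\eta}^2 \del{\sum_u \sqrt{d_u T_u}}^2 / B } = \otil{ \tilde{\eta}^2 \, T \del{\sum_u \sqrt{d_u p_u}}^2 / B }$, since $\del{\sum_u \sqrt{d_u T_u}}^2 = \Theta(T) \del{\sum_u \sqrt{d_u p_u}}^2$.

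It remains to turn this high-probability bound into a bound on $\EE\sbr{\Reg}$, as required by Theorem~\ref{thm:otb}. By Theorem~\ref{thm:reg-equiv}, $\EE\sbr{\Reg} = \EE\sbr{R}$, so I would split $\EE\sbr{R}$ over the good event (counts concentrate and the $1-\delta$ event of Theorem~\ref{thm:fixBudget} holds) and its complement. On the bad event I use the deterministic bound $(\hat y_t - f^*(x_t))^2 \leq 4$, giving $R \leq 4T$; choosing the failure probability polynomially small in $T$ (e.g. $\delta = 1/T^2$) makes the bad-event contribution $O(1/T)$, negligible next to the main term. Hence $R_0 := \EE\sbr{\Reg} = \otil{ \tilde{\eta}^2 \, T \del{\sum_u \sqrt{d_u p_u}}^2 / B }$, and applying Theorem~\ref{thm:otb} with this $R_0$ yields the excess-risk bound $R_0 / T = \order{ \tilde{\eta}^2 \del{\sum_u \sqrt{d_u p_u}}^2 / B }$, which is the claim; the bound $Q \leq B$ and the use of exactly $T$ unlabeled examples are immediate from the construction of Algorithm~\ref{alg:2}.

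The main obstacle I anticipate is the interplay between the two lower bounds on $T_0$: one must simultaneously ensure that the empirical counts $T_u$ concentrate around $p_u T$ (the $\ln m / \min_u p_u$ term) and that the \emph{resulting random} domain structure still satisfies the budget premise of Theorem~\ref{thm:fixBudget} (the $B / (\sum_u \sqrt{d_u p_u}\, \min_u \sqrt{p_u/d_u})$ term), since that premise is itself a function of the random $T_u$. Once the concentration window $T_u = \Theta(p_u T)$ is established, the remaining steps are bookkeeping; the only mild care needed is in setting $\delta$ so that the trivial $4T$ regret on the low-probability bad event does not dominate the final expectation.
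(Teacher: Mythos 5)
Your proposal is correct and follows essentially the same route as the paper's own proof: condition on the Chernoff event that every $T_u$ lies in $\intcc{p_u T/2,\, 2p_u T}$, verify the budget premise of Theorem~\ref{thm:fixBudget} on that event, invoke its high-probability regret bound, absorb the bad event via the deterministic $R \leq 4T$ bound with failure probability $1/T^2$, and finish with Theorem~\ref{thm:reg-equiv} and the online-to-batch conversion of Theorem~\ref{thm:otb}. No gaps; even your handling of the randomness of the induced domain structure matches the paper's treatment.
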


\begin{proof}[Proof sketch]
From Theorem~\ref{thm:otb} it suffices to show that
\[ \EE\sbr{\Reg} \leq  \order{ \frac{\tilde{\eta}^2 T \cdot  (\sum_u \sqrt{d_u p_u})^2}{B} }. \]
By Theorem~\ref{thm:reg-equiv}, $\EE\sbr{\Reg} = \EE\sbr{R}$, it therefore suffices to show that
\[
\EE\sbr{R} \leq \order{ \frac{\tilde{\eta}^2 T \cdot  (\sum_u \sqrt{d_u p_u})^2}{B} }.
\]

We first show a high probability upper bound of $R$. Given a sequence of unlabeled examples $\cbr{x_t}_{t=1}^T$, we denote by $S_u$ the subset of examples drawn from component $D_u$, and denote by $T_u$ the size of $S_u$. From the assumption of $D_u$, we know that $S_u$ all lies in a subspace of dimension $d_u$.

Define event $E$ as follows:
\[
E = \cbr{ \forall u \in [m] \centerdot T_u \in \intcc{\frac{T p_u}{2}, 2 T p_u} }.
\]
From the assumption that $T \geq T_0 \geq \Omega(\frac{\ln m}{\min_u p_u})$, we have that by Chernoff bound and union bound, $\PP(E) \geq 1-\frac{1}{T^2}$.

Conditioned on event $E$ happening, we have that by the assumption that $T \geq T_0 \geq \frac{B}{\sum_u \sqrt{d_u p_u} \cdot \min_u \sqrt{\frac{p_u}{d_u}}}$,
\[ 
   B 
   \leq 
   \otil{T \cdot \sum_u \sqrt{d_u p_u} \min_u \sqrt{\frac{p_u}{d_u}} } 
   \leq 
   \otil{\sum_u \sqrt{d_u T_u} \min_u \sqrt{\frac{T_u}{d_u}} }. 
\]
Therefore, applying Theorem~\ref{thm:fixBudget}, we have that conditioned on event $E$ happening, with probability $1 - \frac1{T^2}$ over the draw of $\cbr{y_t}_{t=1}^T$,
\[ 
   R 
   \leq   
   \order{ \frac{\tilde{\eta}^2 \cdot  (\sum_u \sqrt{d_u T_u})^2}{B} } 
   \leq 
   \order{ \frac{\tilde{\eta}^2 T \cdot  (\sum_u \sqrt{d_u p_u})^2}{B} } .
\]
Combining the above two equations and using union bound, we conclude that with probability $1-\frac2{T^2}$,
\[
   R 
   \leq 
   \order{ \frac{\tilde{\eta}^2 T \cdot  (\sum_u \sqrt{d_u p_u})^2}{B} }.
\]

Observe that with probability $1$, $\hat{y}_t \in [-1,1]$ and $\inner{\theta^*}{x_t} \in [-1,1]$. Therefore, $R = \sum_{t=1}^T (\hat{y}_t - \inner{\theta^*}{x_t})^2  \in [0, 4T]$. Hence,
\[ 
  \EE[R] 
  \leq 
  \del{1-\frac2{T^2}} \cdot \order{ \frac{\tilde{\eta}^2 T \cdot  (\sum_u \sqrt{d_u p_u})^2}{B} } + \frac{2}{T^2} \cdot 4T
  =
  \order{ \frac{\tilde{\eta}^2 T \cdot  (\sum_u \sqrt{d_u p_u})^2}{B} }.
\]
The theorem follows.
\end{proof}

\section{Kernelisation of \qufur}
\label{sec:kernelisation}
We extend \qufur($\alpha$) to kernel regression, following an approach similar to \citet{valko2013finite}. Assume mapping $\phi: \R^d \rightarrow \Hcal$ maps the data to a reproducing kernel Hilbert space. Assume $\|\theta^*\| \le C =\tilde{O}(1)$, and $\|\phi(x)\| \le 1$, $\langle \phi(x), \theta^* \rangle^2 \le 1$, for all $x$. Define the kernel function $k(x,x')=\phi(x)^\top \phi(x'), \forall x, x' \in \R^d$. Assume the ground-truth label is generated via $y_t = \phi(x_t)^\top \theta^* + \xi_t$.

The kernelised \qufur algorithm is as follows: Let $\Qcal_t$ denote the set of indices of the queried examples up to round $t-1$. Denote $M_t = \lambda I + K_t$ where $K_t$ is the kernel matrix $[k(x, x')]_{x, x' \in \Qcal_t}$, and $\lambda=1/C^2 = \tilde{\Omega}(1)$. Define column vector $k_t = [k(x_t, x)]^\top_{x \in \Qcal_t}$. 
We predict $\hat{y}_t = \clip(k_t^\top M_t^{-1} Y_{\Qcal_t})$. Uncertainty estimate $\Delta_t=  \tilde{\eta}^2 \min\{1, \|k_t\|_{M_t^{-1}}^2)\}$, where $\|k_t\|_{M_t^{-1}}^2=\frac{1}\lambda (k(x_t, x_t)-k_t^\top M_t^{-1} k_t)$. We still query with probability $\min{\{1, \alpha \Delta_t\}}$.

A trivial regret and query guarantee is similar to Theorem 1, with $d_u$ replaced by the dimension of the support of $\phi(x)$ for $x$ in domain $u$, which is possibly infinite. Below we obtain a trade-off dependent on the \textit{effective dimension} $\tilde{d}_u$ of $\Xcal_u$ defined in equation~\eqref{eq:def_tilde_d}. For example, $\tilde{d}_u = \tilde{O}((\log{T_u})^{d_u+1})$ for the RBF kernel~\citep{srinivas2009gaussian}.

\begin{theorem}
\label{thm:kernel}
Suppose the inputs $\cbr{x_t}_{t=1}^T$ have the following structure: $[T]$ can be partitioned into $m$ disjoint nonempty subsets $\cbr{I_u}_{u=1}^m$, where for each $u$, $|I_u|=T_u$, and the effective dimension of $\cbr{x_t}_{t \in I_u}$ is $\tilde{d}_u$. If kernelised \qufur receives inputs dimension $d$, time horizon $T$, norm bound $C =\tilde{O}(1)$, noise level $\eta$, parameter $\alpha$, then, with probability $1-\delta$:\\
1. Its query complexity $Q= \tilde{O}\left(  \sum_{u=1}^m \min\{T_u, \tilde{\eta} \sqrt{\alpha \tilde{d}_u T_u}\} + 1 \right)$.\\
2. Its regret $R= \tilde{O}\del{ \sum_{u=1}^m \max\{\tilde{\eta}^2 \tilde{d}_u,\tilde{\eta} \sqrt{\tilde{d}_u T_u/\alpha}\}}$.
\end{theorem}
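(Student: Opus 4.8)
The plan is to mirror the proof of Theorem~\ref{thm:upper} essentially verbatim, lifting every computation from $\RR^d$ into the RKHS $\Hcal$ and replacing the ambient dimension $d_u$ by the effective dimension $\tilde{d}_u$. The only two ingredients that are genuinely kernel-specific are the RKHS analogues of Lemma~\ref{lem:subseq} and Lemma~\ref{lem:logdet}; once these are in place, the Freedman-inequality argument, the high/low-risk split of each domain, and the summation over $u \in [m]$ carry over without change.

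First I would establish the kernel version of Lemma~\ref{lem:subseq}. Writing $\hat\theta_t$ for the kernel ridge solution, I decompose $\inner{\phi(x_t)}{\hat\theta_t - \theta^*}$ into a noise (variance) term $\sum_{s \in \Qcal_t}\xi_s\, \inner{\phi(x_t)}{M_t^{-1}\phi(x_s)}$ and a regularization (bias) term $\lambda\inner{\phi(x_t)}{M_t^{-1}\theta^*}$, exactly as in Equation~\eqref{eqn:bias-var-decomp}. The variance term is a sum of independent sub-Gaussians whose proxy is controlled by $\|\phi(x_t)\|_{M_t^{-1}}^2$, so a union bound over $t$ and the sub-Gaussian tail give a $\tilde{O}(\tilde\eta \|\phi(x_t)\|_{M_t^{-1}})$ bound; the bias term is at most $\|\phi(x_t)\|_{M_t^{-1}}$ by Cauchy--Schwarz together with $\lambda = 1/C^2$ and $\|\theta^*\|\le C$. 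Combining with the trivial bound $(\hat y_t - \inner{\phi(x_t)}{\theta^*})^2 \le 4$ from clipping yields $(\hat y_t - \inner{\phi(x_t)}{\theta^*})^2 = \tilde{O}(\Delta_t)$. All quantities are evaluated via the kernel identities of Section~\ref{sec:kernelisation}, in particular $\|k_t\|_{M_t^{-1}}^2 = \tfrac1\lambda(k(x_t,x_t) - k_t^\top M_t^{-1}k_t)$, so no explicit feature map is needed.

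Second I would prove the kernel analogue of Lemma~\ref{lem:logdet}: for any $S \subseteq I_u$, $\sum_{t\in S}\min(1,\|\phi(x_t)\|_{M_t^{-1}}^2) \le \ln\det(I + \tfrac1\lambda K_S)$, where $K_S$ is the kernel matrix of $\{x_t : t\in S\}$. The proof is identical in spirit: restricting the preconditioner from the full $M_t$ (built on all queried examples) to $M_{t,S}$ (built on $S$ only) can only increase the self-normalized norm, so $\min(1,\|\phi(x_t)\|_{M_t^{-1}}^2)\le \min(1,\|\phi(x_t)\|_{M_{t,S}^{-1}}^2)$, and the standard elliptical-potential bound applies in $\Hcal$. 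Taking $S = I_u \cap \Qcal_T$ and using monotonicity of the log-determinant in $S$ together with the definition of effective dimension in Equation~\eqref{eq:def_tilde_d} gives $\sum_{t\in I_u} q_t\Delta_t \le \tilde\eta^2\,\ln\det(I + \tfrac1\lambda K_{I_u}) = \tilde{O}(\tilde\eta^2 \tilde d_u)$, the exact counterpart of the $\tilde{O}(\tilde\eta^2 d_u)$ bound used in Theorem~\ref{thm:upper}.

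With these two lemmas the remainder is a transcription of the proof of Theorem~\ref{thm:upper}: apply Freedman's inequality to $\{q_t\Delta_t\}_{t\in I_u}$ to get $\sum_{t\in I_u}p_t\Delta_t = \tilde{O}(\sum_{t\in I_u}q_t\Delta_t + \tilde\eta^2) = \tilde{O}(\tilde\eta^2\tilde d_u)$, split each domain into $I_+=\{t\in I_u:\alpha\Delta_t>1\}$ and $I_-$, bound $\sum_{t\in I_+}\Delta_t$ directly and $\sum_{t\in I_-}\Delta_t$ via Cauchy--Schwarz with $|I_-|\le T_u$, and sum the resulting $\min\{T_u,\tilde\eta\sqrt{\alpha\tilde d_u T_u}\}$ and $\max\{\tilde\eta^2\tilde d_u,\tilde\eta\sqrt{\tilde d_u T_u/\alpha}\}$ bounds over $u$. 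I expect the main obstacle to be the first lemma: the variance term is now a self-normalized sum in a possibly infinite-dimensional space, so making the sub-Gaussian concentration rigorous requires the self-normalized martingale / elliptical-potential machinery (as in~\citet{valko2013finite}) rather than the elementary fixed-dimension argument of Lemma~\ref{lem:subseq}, and one must verify that the effective-dimension bound controls the relevant log-determinant uniformly over $t$.
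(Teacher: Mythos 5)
Your high-level plan is the same as the paper's: the paper also reduces Theorem~\ref{thm:kernel} to kernel analogues of Lemma~\ref{lem:subseq} and Lemma~\ref{lem:logdet} and then reruns the proof of Theorem~\ref{thm:upper} (Freedman's inequality, the $I_+/I_-$ split, summation over $u$). Where you genuinely diverge is the proof of the key estimate $\sum_{t\in I_u} q_t\Delta_t = \tilde{O}(\tilde\eta^2\tilde d_u)$. You go through the elliptical potential: $\sum_{t\in S}\min(1,\|\phi(x_t)\|^2_{M_t^{-1}}) \le 2\ln\det(I+\tfrac1\lambda K_S)$ (a Fredholm/Gram determinant, since $\Hcal$ may be infinite-dimensional; note the factor $2$ from $\min(1,x)\le 2\ln(1+x)$), then monotonicity in $S$, then a conversion of the log-determinant into the effective dimension. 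The paper never touches the log-determinant: it invokes Lemma 3 of \citet{valko2013finite} (restated and reproved as Lemma~\ref{eq:valko_lem3} via \citet{auer2002using}), which bounds each $\|a_{i,u}\|^2_{N_{i,u}^{-1}}$ by the eigenvalue increments $(4+6/\lambda)\sum_j(\lambda_{j,i}-\lambda_{j,i-1})/\lambda_{j,i-1}$, splits the eigenvalue index at $\tilde d_u$, controls the tail by Equation~\eqref{eq:def_tilde_d}, and controls the head by solving a relaxed optimization program via KKT conditions. Your route is shorter (given standard facts from the kernel bandit literature) and yields a bound in terms of the effective dimension of the whole domain $I_u$, which is what the theorem statement literally refers to; the paper's argument is self-contained, avoids infinite-dimensional determinants entirely, and is phrased in terms of the spectrum of the queried subsequence.

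The one real gap is the step you present as following ``from the definition of effective dimension'': the claim $\ln\det(I+\tfrac1\lambda K_{I_u}) = \tilde{O}(\tilde d_u)$ is exactly where the work lies, and it needs the same head/tail eigenvalue split that the paper performs. Concretely, let $\mu_1\ge\mu_2\ge\cdots$ be the eigenvalues of $K_{I_u}$, so that $\ln\det(I+\tfrac1\lambda K_{I_u}) = \sum_j\ln(1+\mu_j/\lambda)$. The tail satisfies $\sum_{j>\tilde d_u}\ln(1+\mu_j/\lambda)\le\tfrac1\lambda\sum_{j>\tilde d_u}\mu_j < \tilde d_u\ln T_u$ by Equation~\eqref{eq:def_tilde_d}, and each of the $\tilde d_u$ head terms satisfies $\ln(1+\mu_j/\lambda)\le\ln(1+T_u/\lambda)=\tilde{O}(1)$ because $\mu_j\le\mathrm{tr}(K_{I_u})\le T_u$ and $\lambda=1/C^2=\tilde\Omega(1)$; summing the two pieces gives $\tilde{O}(\tilde d_u)$. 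With this inserted, your argument is complete. Finally, your worry that the confidence bound (the analogue of Lemma~\ref{lem:subseq}) requires self-normalized-martingale machinery is unfounded: the paper's proof is dimension-free operator algebra --- the variance-proxy computation $\sum_{s\in\Qcal_t}\langle\phi(x_t),M_t^{-1}\phi(x_s)\rangle^2\le\|\phi(x_t)\|^2_{M_t^{-1}}$ holds verbatim in $\Hcal$, and $\Qcal_t$, $M_t$ are independent of the label noise because \qufur's query decisions never depend on observed labels --- so it transfers with a fixed-$t$ sub-Gaussian tail bound and a union bound over $t$ exactly as in the linear case, which is presumably why the paper omits it.
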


Let $S$ denote the set of indices for queried examples in domain $u$. Suppose $|S|=s$. If the $i$-th queried example in domain $u$ happens at time $t$, we define $a_{i,u}=\phi(x_t)$,  $\Phi_{i} = [\phi(x)^\top]^\top_{x \in \Qcal_t}$, $\Phi_{i,u} = [\phi(x)^\top]^\top_{x \in \Qcal_t \cap I_u}$, $N_{i,u} = {\Phi_{i,u}}^\top \Phi_{i,u}+\lambda I$, for all $i \in [s]$. Note that $\|k_t\|_{M_t^{-1}}^2 = a_{i,u}^\top (\Phi_i^\top \Phi_i +\lambda I) a_{i,u} \le \|a_{i,u}\|_{N_{i,u}^{-1}}^2$. We still have that
\begin{align*}
    \sum_{t \in I_u}q_t \Delta_t = \sum_{i \in S}\tilde{\eta}^2 \min\del{1, \|k_t\|_{M_t^{-1}}^2} \leq \tilde{\eta}^2 \sum_{i \in S} \min\del{1, \|a_{i,u}\|_{N_{i,u}^{-1}}^2}.
\end{align*}
We now focus on bounding $\|a_{i,u}\|_{N_{i,u}^{-1}}^2$. We use the following lemma:
\begin{lemma}[Lemma 3 of~\citet{valko2013finite}]
    For all $i \in [s]$, the eigenvalues of $N_{i, u}$ can be arranged so that $\lambda_{j, i-1} \le \lambda_{j,i}$ for all $j \ge 1$; $\lambda_{j, i} \le \lambda_{j-1,i}$ for all $j \ge 2$; $\lambda_{j,0}=\lambda$ for all $j$, and 
\begin{align*}
   \|a_{i,u}\|_{N_{i,u}^{-1}}^2 \le \left(4+\frac{6}{\lambda}\right)\sum_{j=1}^i\frac{\lambda_{j, i}-\lambda_{j, i-1}}{\lambda_{j, i-1}}.
\end{align*}
 \label{eq:valko_lem3}
\end{lemma}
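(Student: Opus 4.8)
The lemma bundles a structural claim about the spectra of the matrices $N_{i,u}$ with a quantitative bound on $\|a_{i,u}\|_{N_{i,u}^{-1}}^2$, and I would treat them in that order. Write $N$ for the regularized design matrix that excludes the rank-one term $a_{i,u}a_{i,u}^\top$ and $N' = N + a_{i,u}a_{i,u}^\top$ for the one that includes it, so that in the lemma's indexing the spectra of $N$ and $N'$ are $\{\lambda_{j,i-1}\}_j$ and $\{\lambda_{j,i}\}_j$. The structural claims are then pure Hermitian perturbation theory: listing both spectra in non-increasing order immediately gives $\lambda_{j,i}\le\lambda_{j-1,i}$, Weyl's monotonicity theorem (adding a positive semidefinite matrix cannot decrease any eigenvalue) gives $\lambda_{j,i-1}\le\lambda_{j,i}$, and $N_{0,u}=\lambda I$ gives $\lambda_{j,0}=\lambda$; Cauchy interlacing further ensures each step perturbs essentially one direction. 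Since $N_{i,u}\succeq N$ in either indexing convention, $N_{i,u}^{-1}\preceq N^{-1}$ and hence $\|a_{i,u}\|_{N_{i,u}^{-1}}^2 \le w$, where $w := a_{i,u}^\top N^{-1} a_{i,u}$; it therefore suffices to bound $w$.

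The engine of the bound is the matrix determinant lemma applied to the rank-one update $N' = N + a_{i,u}a_{i,u}^\top$:
\[
1 + w = \frac{\det N'}{\det N} = \prod_j \frac{\lambda_{j,i}}{\lambda_{j,i-1}} = \prod_j (1+\delta_j), \qquad \delta_j := \frac{\lambda_{j,i}-\lambda_{j,i-1}}{\lambda_{j,i-1}} \ge 0.
\]
Only finitely many $\delta_j$ are nonzero (indeed $N'-\lambda I$ has rank at most $i$, so $\delta_j=0$ for $j>i$), so the product and the sum $S := \sum_j \delta_j = \sum_{j=1}^i \frac{\lambda_{j,i}-\lambda_{j,i-1}}{\lambda_{j,i-1}}$ are finite even in the RKHS setting; equivalently one may pass to the finite determinant $\det(I + \lambda^{-1}K)$ via the kernel identity. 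The goal $w \le (4+6/\lambda) S$ is thus the scalar inequality relating $w = \prod_j(1+\delta_j)-1$ to $S = \sum_j \delta_j$.

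I would prove this by squeezing $\log(1+w)$ from both sides. On one side, $\log(1+w) = \sum_j \log(1+\delta_j) \le \sum_j \delta_j = S$, using $\log(1+x)\le x$ and the nonnegativity of the $\delta_j$. On the other side, the concavity bound $\log(1+w)\ge \tfrac{w}{1+w}$ together with the a priori estimate $w \le 1/\lambda$ --- which follows from $N \succeq \lambda I$ and $\|a_{i,u}\| = \|\phi(x_t)\| \le 1$ --- yields
\[
S \;\ge\; \log(1+w) \;\ge\; \frac{w}{1+w} \;\ge\; \frac{w}{1+1/\lambda} \;\ge\; \frac{w}{4+6/\lambda},
\]
where the last step uses $1+1/\lambda \le 4+6/\lambda$. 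Rearranging gives $w \le (4+6/\lambda) S$, and combined with $\|a_{i,u}\|_{N_{i,u}^{-1}}^2 \le w$ this is the claim (in fact with the sharper constant $1+1/\lambda$).

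The one place where care is essential --- and what I expect to be the main obstacle --- is the direction of the determinant-to-sum comparison. The tempting route, $\prod_j(1+\delta_j) \le e^{S}$ and hence $w \le e^{S}-1$, is an \emph{upper} bound on $w$ but a useless one when $\lambda$ is small: there $S$ can be of order $1/\lambda$, so $e^{S}-1$ blows up exponentially while the target $(4+6/\lambda)S$ grows only polynomially, and the inequality fails. The fix is to compare in the opposite direction, lower-bounding $S$ by $w$ through $\log(1+w)\ge w/(1+w)$ and the crude but crucial a priori cap $w\le 1/\lambda$; this is exactly what converts the exponential slack into the advertised constant factor.
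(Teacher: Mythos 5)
Your proof is correct, and it takes a genuinely different route from the paper's. The paper imports the argument of \citet{valko2013finite}: after a change of basis reducing to a finite $i\times i$ problem, it invokes Auer's eigenvalue-perturbation lemma (Lemma~\ref{eq:auer_lem19}), tracks the coupling terms $y_{h,j}$, and splits indices according to whether $\lambda_h > \lambda_j + 3$; that case split is precisely where the constant $4+6/\lambda$ arises. You instead use the matrix determinant lemma, $1+w=\det(N')/\det(N)=\prod_j(1+\delta_j)$ with $w=a_{i,u}^\top N^{-1}a_{i,u}$ and $\delta_j=(\lambda_{j,i}-\lambda_{j,i-1})/\lambda_{j,i-1}\ge 0$ by Weyl monotonicity, and squeeze $\log(1+w)$: it is at most $S=\sum_j\delta_j$ by $\log(1+x)\le x$, and at least $w/(1+w)\ge w/(1+1/\lambda)$ via the a priori cap $w\le\|a_{i,u}\|^2/\lambda\le 1/\lambda$ (which uses the same assumptions $\|\phi(x)\|\le 1$ and $N\succeq\lambda I$ that the paper's proof needs to ensure $\|z\|\le 1$ in Auer's lemma). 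The supporting details all check out: the decreasing arrangement satisfies the three structural claims; $\delta_j=0$ for $j>i$ makes the product finite and matches the lemma's summation range, including in the RKHS setting where $N'-\lambda I$ has finite rank; and under the paper's definitions $N_{i,u}$ is exactly your pre-update matrix $N$, so your hedge $\|a_{i,u}\|_{N_{i,u}^{-1}}^2\le w$ holds with equality. Your route is shorter, self-contained, avoids Auer's machinery entirely, and even yields the sharper constant $1+1/\lambda\le 4+6/\lambda$; your closing caution is also well placed, since the naive comparison $w\le e^S-1$ indeed fails for small $\lambda$, and the reversed comparison anchored at $w\le 1/\lambda$ is exactly the step that makes the argument go through. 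The only thing the paper's longer derivation offers here is fidelity to the original reference; for this statement, your elementary argument is strictly cleaner.
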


Let $\Lambda_{s, j}=\sum_{i>j}({\lambda_{i, s}}-\lambda)$. The effective dimension of domain $u$ is defined as follows:
\begin{align}
    \tilde{d}_u = \min\{j: j \lambda \ln{s} >\Lambda_{s, j}\} .
    \label{eq:def_tilde_d}
\end{align}
The effective dimension is a proxy for the number of principle directions over which the projection of $x_t$'s in domain $u$ in the RKHS is spread. If they fall in a subspace of $\Hcal$ of dimension $\tilde{d}'$, then $\tilde{d}_u' \le \tilde{d}'$. More generally it captures how quickly the eigenvalues of ${\Phi_{i,u}}^\top \Phi_{i,u}$ decrease.

We prove Lemma~\ref{eq:valko_lem3} below for completeness. We use the following lemma as a black box:
\begin{lemma}[Lemma 19 of~\citet{auer2002using}]
    Let $\lambda_1 \ge \dots \ge \lambda_d \ge 0$. The eigenvalues $\nu_1, \dots, \nu_d$ of the matrix $diag(\lambda_1, \dots, \lambda_d)+z z^\top$ with $\|z\|\le 1$ can be arranged such that there are $y_{h,j} \ge 0$, $1 \le h<j \le d$, and the following holds:
    \begin{align}
        \nu_j &\ge \lambda_j \\
        \nu_j &= \lambda_j + z_j^2-\sum_{h=1}^{j-1} y_{h,j} + \sum_{k=j+1}^d y_{j,k} \label{eq:auer_eq}\\
        \sum_{h=1}^{j-1} y_{h,j} &\le z_j^2 \\
        \sum_{j=h+1}^d y_{h,j} &\le \nu_h-\lambda_h \label{eq:auer_y}\\
        \sum_{j=1}^d{\nu_j} &= \sum_{j=1}^d{\lambda_j}+\|z\|^2
    \end{align} and if $\lambda_h>\lambda_j+1$ then
    \begin{align}
        y_{h,j} &< \frac{z_j^2 z_h^2}{\lambda_h-\lambda_j-1}.
        \label{eq:auer_con}
    \end{align}
    \label{eq:auer_lem19}
\end{lemma}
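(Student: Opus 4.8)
The plan is to treat this as a statement about the rank-one positive perturbation $A = D + zz^\top$ of $D = \mathrm{diag}(\lambda_1,\dots,\lambda_d)$, and to realize the transfer coefficients $y_{h,j}$ as time-integrals of the pairwise couplings that appear when the perturbation is switched on continuously. Concretely, I would introduce the homotopy $A(t) = D + t\,zz^\top$ for $t \in [0,1]$, write $\nu_j(t)$ for its eigenvalues (ordered decreasingly) with orthonormal eigenvectors $u_j(t)$, and set $s_j(t) = (u_j(t)^\top z)^2 \ge 0$, so that $s_j(0) = z_j^2$. Two bookkeeping facts are then immediate: monotonicity of eigenvalues under a PSD perturbation gives $\nu_j(t) \ge \lambda_j$, hence $\nu_j \ge \lambda_j$; and since the $u_j(t)$ form an orthonormal basis, $\sum_j s_j(t) = \|z\|^2$, which (upon integrating the eigenvalue derivative below) yields the trace identity $\sum_j \nu_j = \sum_j \lambda_j + \|z\|^2$. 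I would assume the $\lambda_i$ distinct and all $z_i \ne 0$, so that eigenvalues stay simple for $t>0$ (the secular equation $1 = \sum_i z_i^2/(\nu - \lambda_i)$ has simple roots strictly between consecutive poles); the general case follows by perturbing $(\lambda,z)$ and passing to the limit, all relevant quantities being continuous.

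First-order perturbation theory for simple eigenvalues gives $\dot\nu_j = (u_j^\top z)^2 = s_j$ and $\dot u_j = \sum_{k\ne j} \frac{(u_k^\top z)(u_j^\top z)}{\nu_j - \nu_k} u_k$, from which a short computation yields the flow equation $\dot s_j = \sum_{k \ne j} \frac{2\, s_j s_k}{\nu_j - \nu_k}$. The pairwise terms are antisymmetric, reconfirming that $\sum_j s_j$ is conserved. Integrating $\dot\nu_j = s_j$ over $[0,1]$ and applying Fubini to the resulting double integral gives $\nu_j - \lambda_j = z_j^2 + \int_0^1 (1-\tau)\,\dot s_j(\tau)\,d\tau$; splitting the sum in $\dot s_j$ by the sign of $\nu_h - \nu_j$ (positive for $h < j$, negative for $k > j$) suggests the definition $y_{h,j} := \int_0^1 (1-\tau)\, \frac{2\, s_h(\tau) s_j(\tau)}{\nu_h(\tau) - \nu_j(\tau)}\, d\tau$ for $h < j$. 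Since $s_h, s_j \ge 0$ and $\nu_h > \nu_j$ along the path, each $y_{h,j} \ge 0$, and the decomposition reproduces exactly Equation~\eqref{eq:auer_eq}.

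It remains to verify the inequality constraints. I would first note that, given Equation~\eqref{eq:auer_eq}, the bounds $\sum_{h<j} y_{h,j} \le z_j^2$ and $\sum_{k>h} y_{h,k} \le \nu_h - \lambda_h$ (Equation~\eqref{eq:auer_y}) are, index by index, algebraically equivalent, so it suffices to establish one of them; in the $2\times 2$ case the first reads precisely $\nu_2 \ge \lambda_2$, i.e. it is the interlacing lower bound, which indicates the right tool is the refined interlacing $\lambda_{j-1} \ge \nu_j \ge \lambda_j$ together with the Weyl bound $\nu_j(\tau) \le \lambda_j + \tau\|z\|^2 \le \lambda_j + 1$ along the homotopy. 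For the quantitative coupling bound (Equation~\eqref{eq:auer_con}), the same Weyl estimate gives $\nu_h(\tau) - \nu_j(\tau) \ge \lambda_h - \lambda_j - 1 > 0$ whenever $\lambda_h > \lambda_j + 1$, so that $y_{h,j} \le \frac{1}{\lambda_h - \lambda_j - 1} \int_0^1 2(1-\tau)\, s_h(\tau) s_j(\tau)\, d\tau$, reducing the claim to the flow-product estimate $\int_0^1 2(1-\tau)\, s_h s_j\, d\tau \le z_h^2 z_j^2$. The target shape $z_h^2 z_j^2/(\lambda_h - \lambda_j)$ is exactly the second-order perturbation coupling through the off-diagonal entry $V_{hj} = z_h z_j$, which is reassuring that the track is correct.

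The main obstacle is precisely these flow-product and sum bounds: along the homotopy $s_j(\tau) = z_j^2 \exp\big(\int_0^\tau (\gamma_j - \rho_j)\big)$, with $\gamma_j,\rho_j \ge 0$ the rates of mass gained from and lost to other levels, so $s_j$ can grow (top levels gain mass) and bounding $\int_0^1 2(1-\tau) s_h s_j$ by its initial value $z_h^2 z_j^2$ is not automatic. I would attack it either by a monotonicity/potential argument controlling the product $s_h s_j$, or by bypassing the ODE and evaluating $y_{h,j}$ through the closed-form secular-equation eigenvectors $u_{j,i}^2 = \frac{z_i^2/(\nu_j - \lambda_i)^2}{\sum_k z_k^2/(\nu_j - \lambda_k)^2}$ at $t=1$; the second route makes Equation~\eqref{eq:auer_con} a direct partial-fraction estimate — the $d=2$ check $\nu_1 - \lambda_1 - z_1^2 = \frac{2 z_1^2 z_2^2}{\delta + \sqrt{\delta^2 + 4 z_1^2 z_2^2}} \le \frac{z_1^2 z_2^2}{\lambda_1 - \lambda_2 - 1}$ (with $\delta = \lambda_1 - \lambda_2 + z_1^2 - z_2^2$) already displays all the needed cancellations — at the cost of re-deriving additivity. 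I expect the cleanest final proof to combine the homotopy (for Equation~\eqref{eq:auer_eq} and nonnegativity) with the resolvent/interlacing estimates (for the three inequalities).
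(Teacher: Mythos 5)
The paper does not actually prove this lemma: it is quoted verbatim from \citet{auer2002using}, and the paper explicitly says it uses it ``as a black box'' inside the kernelisation appendix (Appendix~\ref{sec:kernelisation}), so your attempt can only be measured against Auer's original argument rather than an in-paper proof. Judged on its own terms, the homotopy half of your plan is sound as far as it goes: for $A(t)=D+t\,zz^\top$ with distinct $\lambda_i$ and nonzero $z_i$ the eigenvalues stay simple along the path, $\dot\nu_j=s_j$ and $\dot s_j=\sum_{k\ne j}2s_js_k/(\nu_j-\nu_k)$ are the correct perturbation formulas, the Fubini step is right, and your definition $y_{h,j}=\int_0^1(1-\tau)\,\frac{2s_h(\tau)s_j(\tau)}{\nu_h(\tau)-\nu_j(\tau)}\,d\tau\ge 0$ does reproduce Equation~\eqref{eq:auer_eq}, the bound $\nu_j\ge\lambda_j$, and the trace identity. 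Your observation that, modulo Equation~\eqref{eq:auer_eq}, the unlabeled constraint $\sum_{h<j}y_{h,j}\le z_j^2$ and Equation~\eqref{eq:auer_y} are index-by-index equivalent is also correct.

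The genuine gap is that the two remaining inequalities --- the mass bound $\sum_{h<j}y_{h,j}\le z_j^2$ and the coupling bound~\eqref{eq:auer_con} --- are precisely the load-bearing content of the lemma (it is~\eqref{eq:auer_con}, with its gap denominator, that drives the effective-dimension bound in the paper's kernel analysis), and your sketch proves neither. For~\eqref{eq:auer_con}, the Weyl estimate correctly yields $\nu_h(\tau)-\nu_j(\tau)\ge\lambda_h-\lambda_j-1>0$, but this only reduces the claim to $\int_0^1 2(1-\tau)\,s_h s_j\,d\tau< z_h^2z_j^2$, and that estimate is not free: along the flow mass migrates upward, and the product $s_hs_j$ can strictly increase (already for $d=2$, $\frac{d}{d\tau}(s_1s_2)=\dot s_1(s_2-s_1)>0$ at $\tau=0$ whenever $0<z_1^2<z_2^2$), so dominating the time-weighted integral by its initial value needs a quantitative use of the gap hypothesis that you do not supply. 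Likewise the mass bound does not follow from interlacing: Equation~\eqref{eq:auer_eq} gives $z_j^2-\sum_{h<j}y_{h,j}=\nu_j-\lambda_j-\sum_{k>j}y_{j,k}$, and $\nu_j\ge\lambda_j$ says nothing against the gain term $\sum_{k>j}y_{j,k}$; your $2\times 2$ check succeeds only because that term vanishes for the bottom index. In transport terms, a level with tiny $z_j^2$ could a priori receive mass from below and re-export it upward, so its weighted outflow is not automatically bounded by its initial mass --- ruling this out is the heart of the lemma, and you candidly leave it open (the ``monotonicity/potential argument'' is named but never constructed, and the secular-equation fallback would have to re-derive additivity and nonnegativity from scratch). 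Two smaller patches would also be needed: the limiting argument for degenerate $(\lambda,z)$ degrades the strict inequality in~\eqref{eq:auer_con} to a weak one unless you perturb so as to slightly inflate the gaps first, and the equivalence of the two sum constraints must be invoked consistently with whichever one you prove. As written, this is a plausible program whose hardest half is open, not a proof.
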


\begin{proof}[Proof of Lemma~\ref{eq:valko_lem3}]
We omit the domain index subscript $u$ for clarity.
Assume $\phi=\phi_{\Ecal}$ where $\Ecal$ is some basis for $\Hcal$. Let $\Bcal$ be any basis of $\Hcal$ extended from a maximal linearly independent subset of $\{a_{j}\}_{j \le i}$. If $Q_{\Bcal\Ecal}$ denotes the change of basis matrix from $\Bcal$ to $\Ecal$ then $\Phi_{\Ecal, i}=\Phi_{\Bcal, i} Q_{\Bcal\Ecal}$ and
\begin{align*}
    \Phi_{\Ecal, i}^\top \Phi_{\Ecal,i} = Q_{\Bcal\Ecal}^\top \Phi_{\Bcal, i}^\top \Phi_{\Bcal, i}Q_{\Bcal\Ecal}
\end{align*} where $\Phi_{\Bcal, i}$, $\Phi_{\Ecal, i}$ denote $\Phi_{i}$ with respect to the basis $\Bcal$, $\Ecal$. Thus the eigenvalues of  $\Phi_{\Ecal, i}^\top \Phi_{\Ecal, i}$ do not depend on the basis, and we can focus on $\Phi_{\Bcal, i}^\top \Phi_{\Bcal, i}$, which has zeros everywhere outside its top-left $i \times i$-submatrix. Denote this submatrix as $C_{i}$. We apply Lemma~\ref{eq:auer_lem19} by setting $d=i$, $\lambda_1 \ge \dots \ge \lambda_d \ge \lambda$ as the eigenvalues of $C_{i}+\lambda I_i$, and $z$ as the first $i$ entries of the vector $ {Q_{\Bcal\Ecal}^\top}^{-1} a_i$. Our target turns into
\begin{align*}
    \|a_{i}\|_{N_{i}^{-1}}^2 = \sum_{j=1}^d{\frac{z_j^2}{\lambda_j}}
\end{align*}
For any $1 \le h<j \le d$, if $\lambda_h > \lambda_j+3$, by inequality~\eqref{eq:auer_con}, we have
\begin{align*}
    y_{h,j} \le \frac{1}{2}z_j^2 z_h^2,
\end{align*} and since $\|z\| \le 1$,
\begin{align*}
    \sum_{h:h<j, \lambda_h > \lambda_j+3} y_{h,j} \le \frac{z_j^2}{2}\sum_{h:h<j, \lambda_h > \lambda_j+3}{z_h^2} \le \frac{z_j^2}{2}.
\end{align*}
If $\lambda_h \le \lambda_j+3$, since $\lambda_j \ge \lambda$, $\lambda_j \ge \frac{\lambda}{\lambda+3} \lambda_h$, so
\begin{align*}
    \sum_{j=1}^d\sum_{h<j:\lambda_h \le \lambda_j+3} \frac{y_{h,j}}{\lambda_j} &\le \frac{\lambda+3}{\lambda} \sum_{j=1}^d \sum_{h<j:\lambda_h \le \lambda_j+3} \frac{y_{h,j}}{\lambda_h} \\ &\le \frac{\lambda+3}{\lambda} \sum_{h=1}^d \sum_{j=h+1}^d \frac{y_{h,j}}{\lambda_h} \\ &\le  \frac{\lambda+3}{\lambda} \sum_{j=1}^d \frac{\nu_j-\lambda_j}{\lambda_j}
\end{align*} where the last step is due to inequality~\eqref{eq:auer_y}.

By Equation~\eqref{eq:auer_eq},
\begin{align*}
    z_j^2 &\le \nu_j-\lambda_j+\sum_{h=1}^{j-1} y_{h,j} = \nu_j-\lambda_j+\sum_{h<j, \lambda_h > \lambda_j+3} y_{h,j} + \sum_{h<j, \lambda_h \le \lambda_j+3} y_{h,j} \\ &\le \nu_j-\lambda_j+\frac{z_j^2}{2} + \sum_{h<j, \lambda_h \le \lambda_j+3} y_{h,j},
\end{align*} so
\begin{align*}
    \sum_{j=1}^d{\frac{z_j^2}{\lambda_j}} &\le 2\sum_{j=1}^d\frac{\nu_j-\lambda_j}{\lambda_j}+ 2\sum_{j=1}^d\sum_{h<j:\lambda_h \le \lambda_j+3} \frac{y_{h,j}}{\lambda_j} \\ &\le \left(2+2 \cdot \frac{\lambda+3}{\lambda}\right) \sum_{j=1}^d\frac{\nu_j-\lambda_j}{\lambda_j} \\
    &= \left(4+\frac{6}{\lambda}\right) \sum_{j=1}^d\frac{\nu_j-\lambda_j}{\lambda_j},
\end{align*} or equivalently,
\begin{align*}
    \|a_{i}\|_{N_{i}^{-1}}^2 \le \left(4+\frac{6}{\lambda}\right)\sum_{j=1}^i\frac{\lambda_{j, i}-\lambda_{j, i-1}}{\lambda_{j, i-1}}.
\end{align*}

\end{proof}

The proof of Theorem~\ref{thm:kernel} is similar to that of Theorem~\ref{thm:upper}. We only prove the following analogue to Lemma~\ref{lem:logdet}.
\begin{lemma}
    $\sum_{i \in S} \min\del{1, \|a_{i,u}\|_{N_{i,u}^{-1}}^2} \le \tilde{O}(\tilde{d_u}).$
\end{lemma}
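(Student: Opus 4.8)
The plan is to feed the per-round bound of Lemma~\ref{eq:valko_lem3} into a summation-and-split argument, exactly mirroring how Lemma~\ref{lem:logdet} turns a per-round potential into a dimension count in the non-kernel case. Since dropping the truncation only enlarges the left-hand side, I would first prove the stronger statement $\sum_{i \in S}\|a_{i,u}\|_{N_{i,u}^{-1}}^2 = \tilde{O}(\tilde{d}_u)$, which immediately implies the claim. Throughout, write $\lambda_{j,i}$ for the $j$-th largest eigenvalue of $\lambda I + \sum_{l \le i} a_{l,u} a_{l,u}^\top$, so that $N_{i,u}$ has eigenvalues $\{\lambda_{j,i-1}\}_j$ and $\lambda_{j,0} = \lambda$; note that $\lambda = 1/C^2$ with $C = \tilde{O}(1)$ makes both $\lambda = \tilde{\Omega}(1)$ and the constant $C_\lambda := 4 + 6/\lambda = \tilde{O}(1)$ harmless. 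Summing Lemma~\ref{eq:valko_lem3} over $i$ and swapping the order of summation gives
\[ \sum_{i \in S}\|a_{i,u}\|_{N_{i,u}^{-1}}^2 \le C_\lambda \sum_{j \ge 1} \sum_{i \ge j} \frac{\lambda_{j,i}-\lambda_{j,i-1}}{\lambda_{j,i-1}}, \]
and I would then split the outer sum at the effective dimension $\tilde{d}_u$.

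For the tail $j > \tilde{d}_u$, I would bound each denominator crudely by $\lambda_{j,i-1} \ge \lambda$, so the inner sum telescopes to $(\lambda_{j,s} - \lambda)/\lambda$; summing over $j > \tilde{d}_u$ produces $\tfrac{1}{\lambda}\Lambda_{s,\tilde{d}_u}$, and the definition $\tilde{d}_u = \min\{j : j\lambda \ln s > \Lambda_{s,j}\}$ bounds this by $\tilde{d}_u \ln s = \tilde{O}(\tilde{d}_u)$. The head $j \le \tilde{d}_u$ is the delicate part: replacing the denominators by $\lambda$ there only recovers the trivial trace bound $\tilde{O}(T_u)$, so instead I would compare each ratio to a logarithm. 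Because every eigenvalue increment obeys $\lambda_{j,i}-\lambda_{j,i-1} \le \|a_{i,u}\|^2 \le 1$ while $\lambda_{j,i-1} \ge \lambda$, each ratio lies in $[0, 1/\lambda]$; since $x/\log(1+x)$ is increasing, $\tfrac{\lambda_{j,i}-\lambda_{j,i-1}}{\lambda_{j,i-1}} \le \kappa \log\tfrac{\lambda_{j,i}}{\lambda_{j,i-1}}$ with $\kappa := \tfrac{1/\lambda}{\log(1+1/\lambda)} = \tilde{O}(1)$. These logarithms telescope to $\kappa \log(\lambda_{j,s}/\lambda)$, which is $\tilde{O}(1)$ per index since $\lambda_{j,s} \le \lambda + s \le \tilde{O}(T_u)$; summing over the $\tilde{d}_u$ head indices gives $\tilde{O}(\tilde{d}_u)$.

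Adding the two contributions yields $\sum_{i \in S}\|a_{i,u}\|_{N_{i,u}^{-1}}^2 = \tilde{O}(\tilde{d}_u)$, hence the lemma. The main obstacle is precisely the head estimate: the effective-dimension saving must be combined with a determinant-style logarithmic accounting for the large eigenvalues, rather than the uniform $\lambda$-lower-bound used for the tail. A small technical point worth recording is that the increment bound $\lambda_{j,i}-\lambda_{j,i-1} \le 1$ — which licenses the ratio-to-log comparison — follows from the eigenvalue arrangement in Lemma~\ref{eq:valko_lem3}: in the decomposition of Lemma~\ref{eq:auer_lem19} the per-step increments are nonnegative and sum to $\|z\|^2 \le 1$.
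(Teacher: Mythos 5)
Your proof is correct, and it takes a genuinely different route from the paper's on the one part that matters. Both arguments begin the same way: drop the truncation, sum Lemma~\ref{eq:valko_lem3} over $i \in S$, split the eigenvalue index at $\tilde{d}_u$, and handle the tail $j > \tilde{d}_u$ by lower-bounding denominators by $\lambda$, telescoping, and invoking the definition of the effective dimension to get $\frac{1}{\lambda}\Lambda_{s,\tilde{d}_u} \le \tilde{d}_u \ln s$ --- this part of your proof coincides with the paper's. The divergence is in the head $j \le \tilde{d}_u$: the paper sets $\alpha_{j,i} = \lambda_{j,i}-\lambda_{j,i-1}$ and bounds $\sum_{i}\sum_{j \le \tilde{d}_u} \alpha_{j,i}/(\sum_{p<i}\alpha_{j,p}+\lambda)$ by the value of a relaxed maximization program under the constraints $\sum_j \alpha_{j,i} \le 1$, exhibiting the candidate $\alpha_{j,i}=1/\tilde{d}_u$ and checking KKT conditions, which yields $\tilde{d}_u \sum_{i=1}^s \frac{1}{i-1+\lambda \tilde{d}_u} = \tilde{O}\del{\tilde{d}_u}$. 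You instead bound each ratio by $\kappa \log\del{\lambda_{j,i}/\lambda_{j,i-1}}$ using monotonicity of $x/\log(1+x)$ on $[0,1/\lambda]$, and telescope the logarithms to $\kappa\log(\lambda_{j,s}/\lambda) = \tilde{O}(1)$ per eigenvalue index --- in effect a per-eigenvalue version of the log-determinant potential argument behind Lemma~\ref{lem:logdet}. Your prerequisites are all valid within the paper's framework: the increments are nonnegative under the arrangement of Lemma~\ref{eq:valko_lem3}, they sum (over $j$) to $\|z\|^2 \le 1$ by Lemma~\ref{eq:auer_lem19} so each is at most $1$, the denominators are at least $\lambda$, and $\kappa = \tilde{O}(1)$ because $\lambda = 1/C^2 = \tilde{\Omega}(1)$. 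What your route buys: it is more elementary and sidesteps a weak point of the paper's argument --- verifying KKT conditions at a candidate point of a maximization that is not concave (the objective is convex in $\epsilon$) certifies only stationarity, not global optimality, so the paper's head bound needs additional justification to be fully rigorous, whereas your telescoping needs none. What the paper's route buys is only a marginally sharper logarithmic factor, $\log(s/(\lambda\tilde{d}_u)+1)$ versus your $\kappa\log(1+s/\lambda)$ per index, which is immaterial under $\tilde{O}(\cdot)$.
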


\begin{proof}
    By Equation~\eqref{eq:valko_lem3}, 
    \begin{align*}
        \sum_{i \in S}\|a_{i,u}\|_{N_{i,u}^{-1}}^2 &\le \left(4+\frac{6}{\lambda}\right) \sum_{i=1}^s\sum_{j=1}^i{\frac{\lambda_{j,i}-\lambda_{j,i-1}}{\lambda_{j,i-1}}} \\
        &\le \left(4+\frac{6}{\lambda}\right) \sum_{i=1}^s \left[\sum_{j=1}^{\tilde{d}_u} \frac{\lambda_{j,i}-\lambda_{j,i-1}}{\lambda_{j,i-1}} + \sum_{j=\tilde{d_u}+1}^s\frac{\lambda_{j,i}-\lambda_{j,i-1}}{\lambda_{j,i-1}}\right]
    \end{align*}
    Since we assume $C=\tilde{O}(1)$, we have $4+\frac{6}{\lambda}=\tilde{O}(1)$.
    To bound the second term, since the denominators are at least $\lambda$,
    \begin{align*}
        \sum_{i=1}^s \sum_{j=\tilde{d}_u+1}^{s}\frac{\lambda_{j,i}-\lambda_{j,i-1}}{\lambda_{j,i-1}} &\le \frac{1}{\lambda} \sum_{i=1}^s \sum_{j=\tilde{d}+1}^s(\lambda_{j,i}-\lambda_{j,i-1}) \\
        &= \frac{1}{\lambda}\sum_{j=\tilde{d}_u+1}^s(\lambda_{j, s}-\lambda) \\
        &\le \tilde{d}_u \ln{s}
    \end{align*} where the last inequality follows from Definition~\ref{eq:def_tilde_d}.
    
    To bound the first term,
    define $\alpha_{j, i}=\lambda_{j,i}-\lambda_{j,i-1}$, so the first term becomes
    \begin{align*}
        \sum_{i=1}^s \sum_{j=1}^{\tilde{d}_u} \frac{\alpha_{j, i}}{\sum_{p=1}^{i-1}{\alpha_{j, p}}+\lambda}.
    \end{align*}
    To upper bound this term, we solve the following relaxed optimization program
    \begin{align*}
        \max\left\{\sum_{i=1}^s \sum_{j=1}^{\tilde{d}_u} \frac{\alpha_{j, i}}{\sum_{p=1}^{i-1}{\epsilon_{j, p}}+\lambda}\right\} \\
        s.t. \forall i \in [s], \sum_{j=1}^{\tilde{d}_u}{\alpha_{j,i}}=\sum_{j=1}^{\tilde{d}_u}{\epsilon_{j,i}} \le 1.
    \end{align*}
    The optimal solution is $\alpha_{j,i}=\epsilon_{j,i}=1/\tilde{d}_u$, for all $j,i$. We verify this via the KKT conditions below. Write the Lagrangian 
    \begin{align*}
        L(\alpha, \epsilon, \mu, g)&=\sum_{i=1}^s \sum_{j=1}^{\tilde{d}_u} \frac{\alpha_{j, i}}{\sum_{p=1}^{i-1}{\epsilon_{j, p}}+\lambda}-\sum_{i=1}^s(\mu_i (\sum_j{\alpha_{j,i}}-\sum_j{\epsilon_{j,i}}))-\sum_{i=1}^s(g_i(\sum_j{\alpha_{j,i}}-1))\\
        \frac{\partial L}{\partial \alpha_{j,i}} &= \frac{1}{\sum_{p=1}^{i-1}{\epsilon_{j, p}}+\lambda}-\mu_i-g_i \\
        \frac{\partial L}{\partial \epsilon_{j,i}} &= -\sum_{q=i+1}^s \frac{\alpha_{j,q}  }{(\sum_{p=1}^{q-1}{\epsilon_{j, p}}+\lambda)^2}+\mu_i
    \end{align*}
    Plugging in $\alpha_{j,i}=\epsilon_{j,i}=1/\tilde{d}_u$, for all $j,i$,
    \begin{align*}
        \mu_i &= \sum_{q=i+1}^s\frac{\tilde{d}_u}{(q-1+\lambda\tilde{d}_u)^2} \ge 0 \\
        g_i &= \frac{\tilde{d}_u}{i-1+\lambda \tilde{d}_u}-\mu_i \ge 0
    \end{align*}
    
    Therefore the maximum objective value is $\tilde{d_u} \sum_{i=1}^s{\frac{1}{i-1+\lambda \tilde{d_u}}}=\tilde{O}(\tilde{d_u}\log(\frac{s}{\lambda \tilde{d_u}}+1))$. 
    
    Summing up both terms completes the proof.
\end{proof}

\section{Comparison of oracle baseline and \qufur in large budget settings}
\label{sec:large_budget}
Consider the optimization program
\begin{align}
\label{opt_program}
\min_\mu & \sum_{u=1}^m  {d_u/\mu_u},  
\text{ s.t. } \sum_{u=1}^m \mu_u T_u \leq B, \mu_u \in [0,1], \forall u \in [m].
\end{align}
\begin{theorem}
The solution to~\ref{opt_program}, $\cbr{\mu_u}_{u=1}^m$, has the following structure: there exists a constant $C$, such that  
\[ 
\mu_u = \min\del{ 1, C \sqrt{\frac{d_u}{T_u}} }.\]
\label{thm:opt}
\end{theorem}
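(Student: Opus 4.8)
The plan is to recognize the program as a convex optimization problem and solve it via its Karush--Kuhn--Tucker (KKT) conditions. The objective $\sum_{u} d_u/\mu_u$ is convex on the open domain $\{\mu_u > 0\}$ (a sum of convex functions $\mu \mapsto d_u/\mu$), and all the constraints---the budget $\sum_u \mu_u T_u \le B$ and the box constraints $0 \le \mu_u \le 1$---are affine. Hence the KKT conditions are both necessary and sufficient for global optimality, with no Slater-type qualification needed beyond feasibility. I would first argue that any optimal solution has $\mu_u > 0$ for every $u$: since $d_u > 0$, the objective tends to $+\infty$ as any $\mu_u \to 0^+$, so the lower box constraint is never active and the minimizer lies in the interior $\{\mu_u > 0\}$ where the objective is smooth (this also yields existence of a minimizer, since the relevant sublevel set is compact).

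Next I would write the Lagrangian
\[
L(\mu, \nu, \gamma) = \sum_u \frac{d_u}{\mu_u} + \nu\Big(\sum_u \mu_u T_u - B\Big) + \sum_u \gamma_u(\mu_u - 1),
\]
with multipliers $\nu \ge 0$ for the budget and $\gamma_u \ge 0$ for the constraints $\mu_u \le 1$. Stationarity $\partial L/\partial \mu_u = 0$ gives $d_u/\mu_u^2 = \nu T_u + \gamma_u$ for each $u$. The core of the argument is then a two-case analysis driven by complementary slackness. If $\mu_u < 1$ then $\gamma_u = 0$, so $d_u/\mu_u^2 = \nu T_u$, i.e. $\mu_u = \tfrac{1}{\sqrt{\nu}}\sqrt{d_u/T_u}$; if $\mu_u = 1$ then $\gamma_u = d_u - \nu T_u \ge 0$, which is exactly the condition $\tfrac{1}{\sqrt{\nu}}\sqrt{d_u/T_u} \ge 1$. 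Setting $C = 1/\sqrt{\nu}$, both cases splice into the claimed form $\mu_u = \min(1, C\sqrt{d_u/T_u})$: in the first case $C\sqrt{d_u/T_u} < 1$ so the min selects it, and in the second $C\sqrt{d_u/T_u} \ge 1$ so the min selects $1$.

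Finally I would dispatch the degenerate multiplier value. If $\nu = 0$, stationarity forces $\gamma_u = d_u/\mu_u^2 > 0$, so complementary slackness gives $\mu_u = 1$ for all $u$, corresponding to $C = +\infty$ (the large-budget regime $B \ge \sum_u T_u$); if $\nu > 0$, complementary slackness makes the budget tight, $\sum_u \min(1, C\sqrt{d_u/T_u}) T_u = B$, and since the left-hand side is continuous and nondecreasing in $C$ this pins down the constant. The main obstacle I anticipate is purely bookkeeping rather than analytic: cleanly verifying that the two KKT cases combine into the single threshold expression, and confirming internal consistency (the derived $\mu_u$ indeed satisfies $\mu_u \le 1$ together with the sign conditions $\nu,\gamma_u \ge 0$), so that the candidate genuinely satisfies all KKT conditions and is therefore the global minimizer.
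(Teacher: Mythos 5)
Your proposal is correct and follows essentially the same route as the paper's proof: form the Lagrangian, apply stationarity and complementary slackness, and splice the two cases ($\mu_u = 1$ versus $\mu_u < 1$) into the single formula $\mu_u = \min(1, C\sqrt{d_u/T_u})$ with $C = 1/\sqrt{\lambda}$. Your version is in fact more complete than the paper's, since you also verify convexity, existence, interiority away from $\mu_u = 0$, and the degenerate case where the budget multiplier vanishes.
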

\begin{proof}
Since the constraints are linear, define the Lagrangian $L(\mu, \lambda, \gamma)=\sum_u{\frac{d_u}{\mu_u}}+\lambda (\sum_u{T_u \mu_u}-B)+\gamma^\top (\mu-1)$, where $\lambda \in \RR$, $\gamma \in \RR^m$. By the complementary slackness condition,
\begin{enumerate}
    \item If $\gamma_u>0$, $\mu_u=1$. In this case $\gamma_u = d_u-\lambda T_u$.
    \item If $\mu<1$, $\gamma_u=0$. In this case $\mu_u=\sqrt{\frac{d_u}{\lambda T_u}}$.
\end{enumerate}
The proof is complete by taking $C=1/\sqrt{\lambda}$.
\end{proof}

Theorem~\ref{thm:opt} implies that for $B > \sum_u{\sqrt{d_u T_u} } \min{\sqrt{T_u/d_u}}$, if we always query each domain with a fixed probability, the optimal solution is to query all $T_u$ examples from domain $u$ when $\sqrt{d_u/T_u} > \tau$, and query with probability proportional to $\sqrt{d_u/T_u}$ for the rest of the domains. With this setting of $\mu$, in domain $u$, the total query complexity is $T_u \mu_u = \min\del{T_u, C \sqrt{d_u T_u}}$; the regret is 
$\tilde{\eta}^2 \frac{d_u}{\mu_u} = \max\del{\tilde{\eta}^2 d_u, \frac{\tilde{\eta}^2} C \sqrt{d_u T_u}}$.

We observe that $\qufur(\alpha)$ (query w.p. $\min\{1, \alpha \Delta_t\}$) achieves the same upper bound. Specifically, for every setting of $C$, consider $\alpha = (\frac{C}{\tilde{\eta}})^2$.
Define 
$U_1 = \cbr{u: C^2 d_u > T_u}$, and 
$U_2 = \cbr{u: C^2 d_u \leq T_u}$. 
In other words,
$U_1$ (resp. $U_2$) is the collection of domains where the domain-aware uniform sampling baseline uses query probability $\mu_u$ is $=1$ (resp. $<1$). Observe that $U_1$ and $U_2$ constitutes a partition of $[m]$.

\begin{enumerate}
\item For $u \in U_1$, the domain-aware uniform querying baseline sets $\mu_u = 1$ and has query complexity $T_u$ and regret $\tilde{\eta}^2 d_u$. 
On the other hand, QuFur($\alpha$) has the same query complexity bound of $T_u$ trivially, and has a regret of $\tilde{\eta}^2 \del{ d_u + \frac{1}{C} \sqrt{d_u T_u}} = O(\tilde{\eta}^2 d_u)$, matching the baseline performance. 

\item For $u \in U_2$, the baseline sets $\mu_u = C \sqrt{\frac{d_u}{T_u}}$, and has query complexity $C \sqrt{d_u T_u}$ and regret 
$\tilde{\eta}^2 \frac{1}{C} \sqrt{d_u T_u}$.
On the other hand, QuFur($\alpha$) has the query complexity bound of $C^2 d_u + C \sqrt{d_u T_u} = C\sqrt{d_u T_u}$, and has a regret of $\tilde{\eta}^2 \del{ d_u + \frac{1}{C} \sqrt{d_u T_u}} \leq \tilde{\eta}^2 \frac{1}{C} \sqrt{d_u T_u}$, matching the baseline performance.
\end{enumerate}

\section{Additional experimental details}
\label{sec:exp_details}

For linear classification experiments, we use the same query strategy as Algorithm~\ref{alg:1}, i.e. querying with probability $\min\{1, \alpha \Delta_t\}$. As to prediction strategy, we train a linear model with NLL loss and Adam optimizer (learning rate $0.003$, weight decay $0.001$). After each new query, we train the model on all queried data for 3 additional epochs, with batch size 64.

\begin{figure}[ht]
\begin{center}
    \begin{subfigure}{.49\textwidth}
		\centering
		\includegraphics[width=\linewidth]{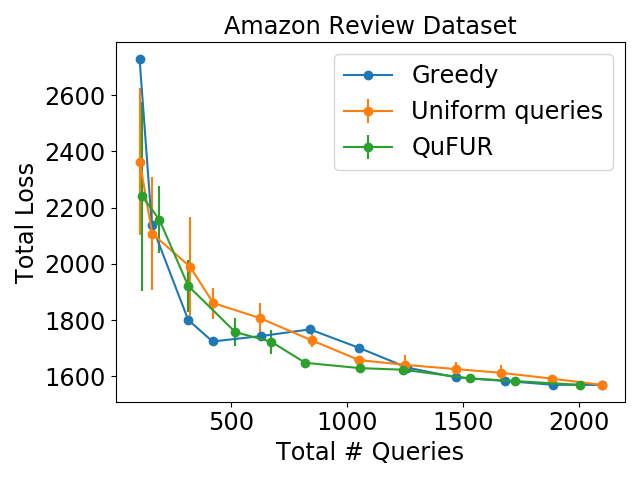}
		\caption{{\small Amazon reviews dataset with video games topic duration 1200 + grocery topic duration 600 + automobile topic duration 300. }}\label{BERT_1}
	\end{subfigure}
	\begin{subfigure}{.49\textwidth}
		\centering
		\includegraphics[width=\linewidth]{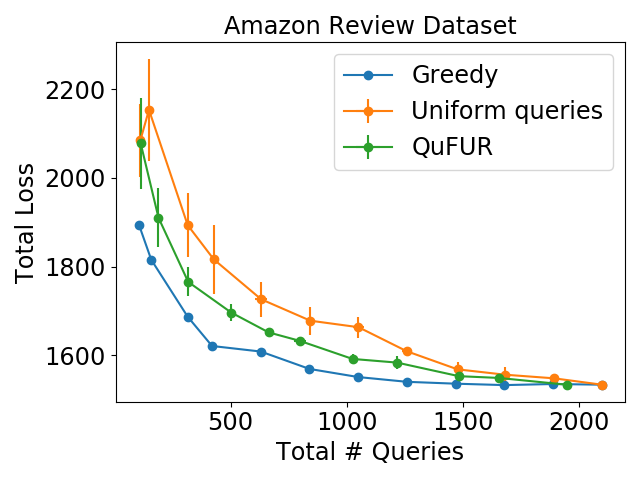}
		\caption{{\small Amazon reviews dataset with randomly shuffled inputs from all 3 topics. }}\label{BERT_2}
	\end{subfigure}
	\bigskip
	\begin{subfigure}{.49\textwidth}
		\centering
		\includegraphics[width=\linewidth]{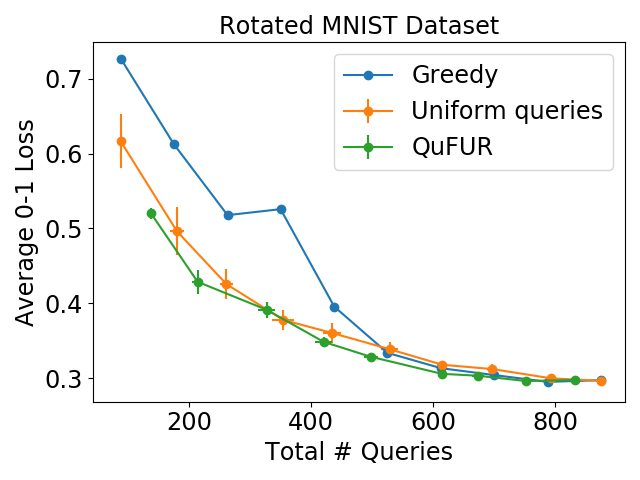}
		\caption{{\small Rotated MNIST dataset with $60^{\circ}$-rotation duration 125 + $30^{\circ}$-rotation duration 250 + no-rotation duration 500. }}\label{Rotated_1}
	\end{subfigure}
	\begin{subfigure}{.49\textwidth}
		\centering
		\includegraphics[width=\linewidth]{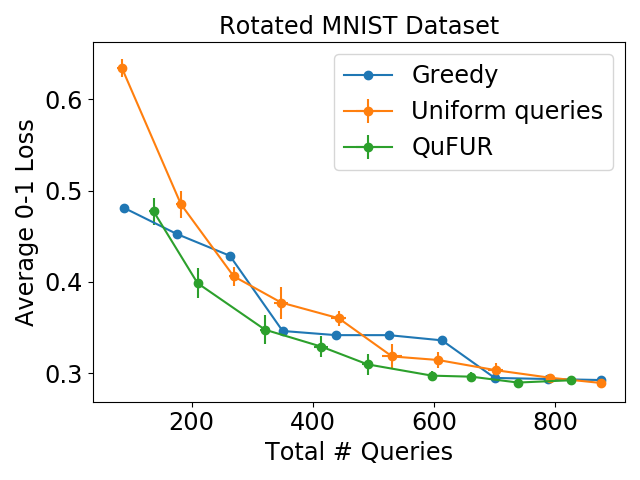}
		\caption{{\small Rotated MNIST dataset with $60^{\circ}$-rotation duration 250 + $30^{\circ}$-rotation duration 250 + $60^{\circ}$-rotation duration 125 + no-rotation duration 125 + $60^{\circ}$-rotation duration 125. }}\label{Rotated_2}
	\end{subfigure}
	\bigskip
	\begin{subfigure}{.49\textwidth}
		\centering
		\includegraphics[width=\linewidth]{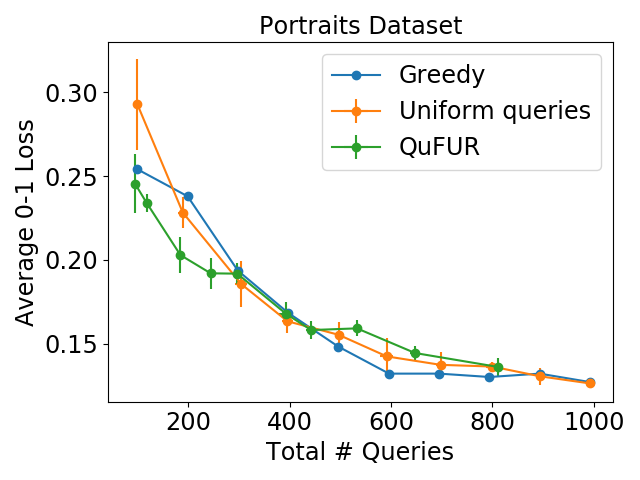}
		\caption{{\small Portraits dataset when we use the first 32, 64, 128, 256, 512 images from each time period. }}\label{Portraits_1}
	\end{subfigure}~
	\begin{subfigure}{.49\textwidth}
		\centering
		\includegraphics[width=\linewidth]{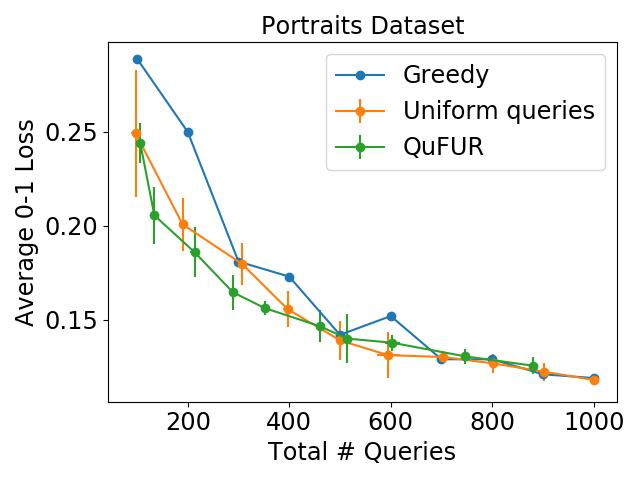}
		\caption{{\small Portraits dataset with the first 200 images from all domains. }}\label{Portraits_2}
	\end{subfigure}~
\caption{Query-loss tradeoff curves for alternative domain setups.} 
\label{fig:setups}
\end{center}
\end{figure}

Figure~\ref{fig:setups} shows the tradeoff curves for alternative domain setups on different datasets. \qufur maintains competitive performance when we reverse the order of domains (Figures~\ref{BERT_1},~\ref{Rotated_1}, and~\ref{Portraits_1}), interleave domains (Figure~\ref{Rotated_2}), and make the domains homogeneous in duration (Figure~\ref{Portraits_2}), with the exception of randomly shuffled inputs from all domains (Figure~\ref{BERT_2}). In this case, since the inputs are iid, greedy strategies can learn an accurate model early and achieve low loss. However, greedy strategies are unlikely to perform well whenever there is domain shift.

\end{document}